\theoremstyle{plain}
\newtheorem{theorem}{Theorem}[section]
\newtheorem{lemma}[theorem]{Lemma}
\theoremstyle{definition}
\newtheorem{definition}[theorem]{Definition}
\newtheorem{assumption}[theorem]{Assumption}
\theoremstyle{remark}
\def\ddefloop#1{\ifx\ddefloop#1\else\ddef{#1}\expandafter\ddefloop\fi}
\def\ddef#1{\expandafter\def\csname bb#1\endcsname{\ensuremath{\mathbb{#1}}}}
\def\ddef#1{\expandafter\def\csname c#1\endcsname{\ensuremath{\mathcal{#1}}}}
\DeclareMathOperator{\argmin}{arg\,min}
\DeclareMathOperator\poly{poly}
\def\1{\mathds{1}}
\def\R{\mathbb{R}}
\def\pr{\mathrm{Pr}}
\def\nR{\nabla\cR}
\def\hR{\widehat{\mathcal{R}}}
\def\nhR{\nabla\widehat{\mathcal{R}}}
\def\baru{\bar{u}}
\def\barw{\bar{w}}
\def\opt{\mathrm{OPT}}
\def\sign{\mathrm{sign}}
\def\barr{\bar{r}}
\def\llog{\ell_{\mathrm{log}}}
\def\lbu{Q}
\def\lbl{q}
\def\hw{\hat{w}}
\def\cRlog{\mathcal{R}_{\log}}
\def\nRlog{\nabla\mathcal{R}_{\log}}
\def\hRlog{\widehat{\mathcal{R}}_{\log}}
\def\nhRlog{\nabla\widehat{\mathcal{R}}_{\log}}
\def\epsopt{\epsilon_{\ell}}
\newcommand{\ip}[2]{\left\langle #1, #2 \right\rangle}
\title{Agnostic Learnability of Halfspaces via Logistic Loss\footnote{Part of this work was done when Ziwei Ji and Kwangjun Ahn were interns at Google.}}
\author[1]{Ziwei Ji\thanks{ziweiji2@illinois.edu}}
\author[2]{Kwangjun Ahn\thanks{kjahn@mit.edu}}
\author[3]{Pranjal Awasthi\thanks{pranjalawasthi@google.com}}
\author[3]{Satyen Kale\thanks{satyenkale@google.com}}
\author[3,4]{Stefani Karp\thanks{stefanik@google.com}}
\affil[1]{University of Illinois Urbana-Champaign}
\affil[2]{Massachusetts Institute of Technology}
\affil[3]{Google Research}
\affil[4]{Carnegie Mellon University}
\begin{document}

\maketitle

\begin{abstract}
  We investigate approximation guarantees provided by logistic regression for the fundamental problem of agnostic learning of homogeneous halfspaces. Previously, for a certain broad class of ``well-behaved'' distributions on the examples, \citet{diakonikolas_adv_noise}
  proved an $\widetilde{\Omega}(\opt)$ lower bound, while
  \citet{frei_soft_margin} proved an
  $\widetilde{O}\del[1]{\sqrt{\opt}}$ upper bound, where $\opt$ denotes the
  best zero-one/misclassification risk of a homogeneous halfspace.
  In this paper, we close this gap by constructing a well-behaved distribution
  such that the global minimizer of the logistic risk over this
  distribution only achieves $\Omega\del[1]{\sqrt{\opt}}$ misclassification
  risk, matching the upper bound in \citep{frei_soft_margin}.
  On the other hand, we also show that if we impose a radial-Lipschitzness condition in addition to well-behaved-ness on the distribution,
  logistic regression on a ball of bounded radius
  reaches
  $\widetilde{O}(\opt)$ misclassification risk.
  Our techniques also show for any well-behaved distribution, regardless of radial Lipschitzness,
  we can overcome the $\Omega(\sqrt{\opt})$ lower bound for logistic loss simply at the cost of one additional convex optimization step involving the hinge loss and attain $\widetilde{O}(\opt)$ misclassification risk.
  This two-step convex optimization algorithm is simpler than previous methods obtaining this guarantee, all of which require solving $O\del{\log(1/\opt)}$ minimization problems.
\end{abstract}

\section{Introduction}

In this paper, we consider the fundamental problem of agnostically learning homogeneous halfspaces. Specifically, we assume there is an unknown distribution $P$ over $\R^d\times\{-1,+1\}$ to which we have access in the form of independent and identically distributed samples drawn from $P$. Our goal is to compete with a homogeneous linear classifier $\baru$ (i.e. one that predicts the label $\sign(\langle \baru, x\rangle)$ for input $x$) that achieves the optimal zero-one risk of $\opt > 0$ over $P$.
Alternatively, we can think that the labels of the examples are first generated by $\baru$, and
then an $\opt$ fraction of the labels are adversarially corrupted.

There have been many algorithmic and hardness results on this topic, see
\Cref{sec:rw} for a discussion. A very natural heuristic for solving the problem is to use {\em logistic regression}. However, the analysis of
logistic regression for this problem is
still largely incomplete, even though it is one of the most fundamental
algorithms in machine learning.
One reason for this is that it can return {\em extremely poor} solutions in the worst case:
\citet{ben_hinge} showed that the minimizer of the
logistic risk may attain a zero-one risk as bad as $1-\opt$
on an adversarially-constructed distribution.

As a result, much attention has been devoted to certain ``well-behaved''
distributions, for which much better results can be obtained.
However, even when the marginal distribution on the feature space, $P_x$, is assumed to be isotropic log-concave, in recent work \citet{diakonikolas_adv_noise} proved an $\widetilde{\Omega}\del[1]{\opt}$ lower bound on the zero-one risk for any convex surrogate including logistic regression.
On the positive side, in another recent work, \citet{frei_soft_margin} proved that vanilla gradient descent on the logistic risk can attain a zero-one risk of
$\widetilde{O}\del[1]{\sqrt{\opt}}$, as long as $P_x$ satisfies some specific
well-behaved-ness conditions.
(For the details of such conditions, see \Cref{sec:rw,sec:log_ub}.)

The above results still leave a big gap between the upper and the lower bounds, raising the question of identifying the fundamental limits of logistic regression for this problem.
In this work we study this question and develop the following set of results.

\paragraph{A matching $\sqrt{\opt}$ lower bound.}

In \Cref{sec:log_lb}, we construct a distribution $\lbu$ over $\R^2 \times \{-1, 1\}$, and prove a lower bound for logistic regression
that matches the upper bound in \citep{frei_soft_margin}, thereby closing the gap in recent works \citep{diakonikolas_adv_noise,frei_soft_margin}.
Specifically, the marginal distribution $\lbu_x$ is isotropic and bounded,
and satisfies all the well-behaved-ness conditions from the aforementioned papers,
but the global minimizer of the logistic risk on $\lbu$
only attains $\Omega\del[1]{\sqrt{\opt}}$ zero-one risk on $Q$.

\paragraph{An $\widetilde{O}(\opt)$ upper bound for radially Lipschitz densities.}
The lower bound mentioned above shows that one needs to make additional assumptions to prove better bounds.
In \Cref{sec:log_ub}, we show that by making a radial Lipschitzness assumption in addition to well-behaved-ness, it is indeed possible to achieve the near-optimal
$\widetilde{O}(\opt)$ zero-one risk via logistic regression.
In particular, our upper bound result holds if the projection of $P_x$ onto any
two-dimensional subspace has Lipschitz continuous densities. Moreover, our upper bound analysis is versatile:
it can recover the
$\widetilde{O}\del[1]{\sqrt{\opt}}$ guarantee for general well-behaved
distributions shown by \citet{frei_soft_margin},
and it also works for the hinge loss, which motivates a simple and efficient two-phase algorithm, as described next.

\paragraph{An $\widetilde{O}(\opt)$ upper bound for general well-behaved
distributions with a two-phase algorithm.}

Motivated by our analysis, in \Cref{sec:hinge}, we describe a simple two-phase algorithm that achieves $\widetilde{O}\del[1]{{\opt}}$ error for general well-behaved distributions, without assuming radial Lipschitzness. Thus, we show that the cost of avoiding the radial Lipschitzness condition is simply an additional convex loss minimization. Our two-phase algorithm involves
logistic regression followed by stochastic gradient descent with the hinge loss
(i.e., the perceptron algorithm) with a restricted domain and warm start.
For general well-behaved distributions, the first phase can only achieve an
$\widetilde{O}\del[1]{\sqrt{\opt}}$ guarantee, however we show that the second
phase can boost the upper bound to $\widetilde{O}(\opt)$.

Previously, for any given $\epsilon > 0$,  \citet{diakonikolas_adv_noise} designed a nonconvex optimization
algorithm that can achieve $O(\opt+\epsilon)$ risk using
$\widetilde{O}(d/\epsilon^4)$ samples. Their algorithm requires guessing $\opt$ within a constant multiplicative factor via binary search and running a nonconvex SGD using each guess as an input.
Similarly, prior algorithms achieving $O(\opt+\epsilon)$ error involve solving multiple rounds of convex loss minimization \citep{awasthi_localization, daniely2015ptas}. By contrast, our two-phase algorithm is a simple logistic regression followed by
a perceptron algorithm, and the output is guaranteed to have $O\del{\opt\cdot\ln(1/\opt)+\epsilon}$ zero-one risk
using only $\widetilde{O}(d/\epsilon^2)$ samples.

\subsection{Related work}\label{sec:rw}
The problem of agnostic learning of halfspaces has a long and rich history \cite{kearns1994toward}. Here we survey the results most relevant to our work. It is well known that in the distribution independent setting, even {\em weak} agnostic learning is computationally hard \citep{feldman2006new, guruswami2009hardness, daniely2016complexity}. As a result most algorithmic results have been obtained under assumptions on the marginal distribution $P_x$ over the examples.

The work of \citet{kalai2008agnostically} designed algorithms that achieve $\opt + \epsilon$ error for any $\epsilon > 0$ in time $d^{\text{poly}(\frac 1 \epsilon)}$ for isotropic log-concave densities and for the uniform distribution over the hypercube. There is also recent evidence that removing the exponential dependence on $1/\epsilon$, even for Gaussian marginals is computationally hard \citep{klivans2014embedding, diakonikolas2020near,goel_statistical}.

As a result, another line of work aims to design algorithms with polynomial running time and sample complexity~(in $d$ and $\frac 1 \epsilon$) and achieve an error of $g(\opt) + \epsilon$, for $g$ being a simple function. Along these lines \citet{klivans2009learning} designed a polynomial-time algorithm that attains
$\widetilde{O}(\opt^{1/3})+\epsilon$ zero-one risk for isotropic log-concave
distributions.
\citet{awasthi_localization} improved the upper bound to $O(\opt)+\epsilon$,
using a localization-based algorithm.
\citet{balcan_s_concave} further extended the algorithm to more general $s$-concave
distributions. The work of \citet{daniely2015ptas} further provided a {\em PTAS} guarantee: an error of $(1+\eta)\opt + \epsilon$ for any desired constant $\eta > 0$ via an improper learner.

In a recent work \citet{diakonikolas_adv_noise} studied the problem for distributions satisfying certain ``well-behaved-ness'' conditions which include isotropy and certain regularity conditions on the projection of $P_x$ on any 2-dimensional subspace (see Assumption~\ref{cond:well} for a subset of these conditions). This class of distributions include any isotropic log-concave distribution such as the standard Gaussian.
In addition to their nonconvex optimization method discussed above,
for any convex, nonincreasing, and nonconstant loss function,
they also showed an $\Omega\del{\opt\ln(1/\opt)}$ lower bound for
log-concave marginals and an $\Omega\del[1]{\opt^{1-1/s}}$ lower bound for
$s$-heavy-tailed marginals.

In another recent work \citet{frei_soft_margin} assumed $P_x$ satisfies a ``soft-margin'' condition:
for anti-concentrated marginals such as isotropic log-concave marginals, this assumes
$\pr\del[1]{\,\envert{\langle\baru,x\rangle}\le\gamma}=O(\gamma)$ for any $\gamma>0$.
For sub-exponential distributions with soft-margins, they proved an $\widetilde{O}\del[1]{\sqrt{\opt}}$ upper
bound for gradient descent on the logistic loss, which can be improved to $O\del[1]{\sqrt{\opt}}$ for
bounded distributions.
Note that these upper bounds and the lower bounds in
\citep{diakonikolas_adv_noise} do not match: if $P_x$ is
sub-exponential, then \citet{diakonikolas_adv_noise} only gave an
$\widetilde{\Omega}(\opt)$ lower bound, while if $P_x$ is $s$-heavy-tailed, then
the upper bound in \citep{frei_soft_margin} becomes worse.

Finally, some prior works on agnostic learning of halfspaces have considered various extensions of the problem such as active agnostic learning \citep{awasthi_localization,yan2017revisiting}, agnostic learning of sparse halfspaces with sample complexity scaling logarithmically in the ambient dimensionality \citep{shen2021attribute}, and agnostic learning under weaker noise models such as the random classification noise \citep{blum1998polynomial, dunagan2008simple}, Massart's noise model \citep{awasthi2015efficient, awasthi2016learning, zhang2020efficient, diakonikolas2019distribution, diakonikolas2020learning, diakonikolas2021threshold, chen2020classification} and the Tsybakov noise model \citep{diakonikolas2020learningtsybakov, zhang2021improved}. We do not consider these extensions in our work.

\subsection{Notation}

Let $\|\cdot\|$ denote the $\ell_2$ (Euclidean) norm.
Given $r>0$, let $\cB(r):=\cbr{x\middle|\|x\|\le r}$ denote the Euclidean
ball with radius $r$.
Given two nonzero vectors $u$ and $v$, let $\varphi(u,v)\in[0,\pi]$ denote the
angle between them.

Given a data distribution $P$ over $\R^d\times\{-1,+1\}$, let $P_x$ denote the
marginal distribution of $P$ on the feature space $\R^d$.
We will frequently need the projection of the input features onto a two-dimensional
subspace $V$; in such cases, it will be convenient to use
polar coordinates $(r,\theta)$ for the associated calculations, such as parameterizing the density with respect to the Lebesgue measure as $p_V(r, \theta)$.

Given a nonincreasing loss function $\ell: \bbR \rightarrow \bbR$, we consider the population risk
\begin{align*}
  \cR_\ell(w):=\bbE_{(x,y)\sim P}\sbr[1]{\ell\del{y \langle w,x\rangle}},
\end{align*}
and the corresponding empirical risk
\begin{align*}
  \hR_\ell(w):=\frac{1}{n}\sum_{i=1}^{n}\ell\del{y_i \langle w,x_i\rangle},
\end{align*}
defined over $n$ i.i.d. samples drawn from $P$.
We will focus on the logistic loss $\llog(z):=\ln(1+e^{-z})$, and the
hinge loss $\ell_h(z):=\max\{-z,0\}$.
Let $\cRlog:=\cR_{\llog}$ for simplicity, and also define $\hRlog$, $\cR_h$ and
$\hR_h$ similarly.
Let $\cR_{0-1}(w):=\pr_{(x,y)\sim P}\del[1]{y\ne\sign\del{\langle w,x\rangle}}$
denote the population zero-one risk.

\section{An $\Omega\del[1]{\sqrt{\opt}}$ lower bound for logistic loss}\label{sec:log_lb}

In this section, we construct a distribution $\lbu$ over $\R^2\times\{-1,+1\}$
which satisfies standard regularity conditions in \citep{diakonikolas_adv_noise,frei_adv_noise}, but the global minimizer $w^*$ of the
population logistic risk $\cRlog$ on $\lbu$ only achieves a zero-one risk of $\Omega\del[1]{\sqrt{\opt}}$.
Our focus on the global logistic optimizer is motivated by the lower bounds from \citep{diakonikolas_adv_noise}; in particular, this means that the large
classification error is not caused by the sampling error.

\begin{wrapfigure}{R}{0.4\textwidth}
\vspace{1em}
\centering
\begin{tikzpicture}[scale=2]

\draw[->,thick] (-1.5,0) -- (1.5,0);
\draw[->,thick] (0,-1.2) -- (0,1.2);

\draw [thick,pattern=horizontal lines,pattern color=red] (0,-1) arc (-90:90:1);
\draw [thick,pattern=horizontal lines,pattern color=blue] (0,1) arc (90:270:1);

\draw [thick,pattern=vertical lines,pattern color=red] (0,0) rectangle (0.25,1);
\draw [thick,pattern=vertical lines,pattern color=blue] (-0.25,-1) rectangle (0,0);
\draw (0.25,1) node[anchor=south west]{$\lbu_2$};
\draw (-0.25,-1) node[anchor=north east]{$\lbu_2$};

\draw [thick,pattern=north west lines,pattern color=blue] (0.619,-0.795) rectangle (0.795,-0.619);
\draw [thick,pattern=north west lines,pattern color=red] (-0.795,0.619) rectangle (-0.619,0.795);
\draw (0.795,-0.795) node[anchor=north west]{$\lbu_1$};
\draw (-0.795,0.795) node[anchor=south east]{$\lbu_1$};

\draw [thick,pattern=north east lines,pattern color=red] (0.860,-0.139) rectangle (1.139,0.139);
\draw [thick,pattern=north east lines,pattern color=blue] (-1.139,-0.139) rectangle (-0.860,0.139);
\draw (1.139,0.139) node[anchor=west]{$\lbu_3$};
\draw (-1.139,-0.139) node[anchor=east]{$\lbu_3$};
\end{tikzpicture}
\caption{An illustration of $\lbu$ when $\opt=1/16$.
Red areas denote positive examples, while blue areas denote negative examples.
The parts $\lbu_1$, $\lbu_2$ and $\lbu_3$ are marked in the figure, while
$\lbu_4$ is supported on the unit circle and marked by horizontal lines.}
\vspace{-5em}
\end{wrapfigure}
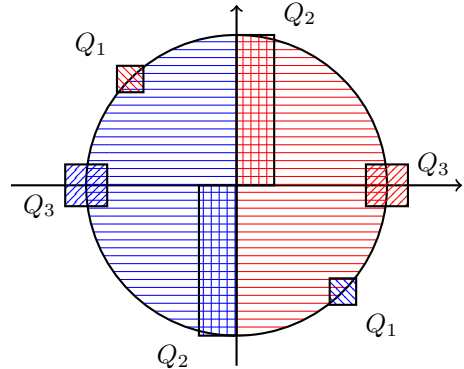

The distribution $\lbu$ has four parts $\lbu_1$, $\lbu_2$, $\lbu_3$, and
$\lbu_4$, as described below.
It can be verified that if $\opt\le1/16$, the construction is valid.
\begin{enumerate}
  \item The feature distribution of $\lbu_1$ consists of two squares: one has
  edge length $\sqrt{\frac{\opt}{2}}$, center
  $\del{\frac{\sqrt{2}}{2},-\frac{\sqrt{2}}{2}}$ and density $1$, with label
  $-1$; the other has edge length $\sqrt{\frac{\opt}{2}}$, center
  $\del{-\frac{\sqrt{2}}{2},\frac{\sqrt{2}}{2}}$, density $1$, with label
  $+1$.

  \item The feature distribution of $\lbu_2$ is supported on
  \begin{align*}
    \del[2]{\sbr{0,\sqrt{\opt}}\times[0,1]}\cup\del[2]{\sbr{-\sqrt{\opt},0}\times[-1,0]}
  \end{align*}
  with density $1$, and the label is given by $\sign(x_1)$.

  \item Let $\lbl_3:=\frac{2}{3}\sqrt{\opt}(1-\opt)$, then $\lbu_3$ consists of
  two squares: one has edge length $\sqrt{\frac{\lbl_3}{2}}$, center $(1,0)$,
  density $1$ and label $+1$, and the other has edge length
  $\sqrt{\frac{\lbl_3}{2}}$, center $(-1,0)$, density $1$ and label $-1$.

  \item The feature distribution of $\lbu_4$ is the uniform distribution over
  the unit ball $\cB(1):=\cbr{x\middle|\|x\|\le1}$ with density
  $\lbl_4:=\frac{1-\opt-2\sqrt{\opt}-\lbl_3}{\pi}$, and the label is given by
  $\sign(x_1)$.
\end{enumerate}

Note that the correct label is given by $\sign(x_1)$ on $\lbu_2$, $\lbu_3$ and
$\lbu_4$; therefore $\baru:=(1,0)$ is our ground-truth solution that is only
wrong on the noisy part $\lbu_1$.

Here is our lower bound result.
\begin{theorem}\label{fact:log_lb}
  Suppose $\opt\le1/100$, and let $\lbu_x$ denote the marginal distribution of
  $\lbu$ on the feature space.
  It holds that $\bbE_{x\sim\lbu_x}[x]=0$, and $\bbE_{x\sim\lbu_x}[x_1x_2]=0$,
  and $\bbE_{x\sim\lbu_x}[x_1^2-x_2^2]=0$.
  Moreover, the population logistic risk $\cRlog$ has a global minimizer
  $w^*$, and
  \begin{align*}
    \cR_{0-1}(w^*)=\pr\del{y\ne\sign\del{\langle w^*,x\rangle}}\ge \frac{\sqrt{\opt}}{60\pi}.
  \end{align*}
\end{theorem}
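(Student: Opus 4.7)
The proof decomposes into four steps: verifying the three moment identities, establishing existence of the global minimizer $w^*$, showing that $w^*$ has angular deviation at least $c\sqrt{\opt}$ from $\baru=(1,0)$ for an explicit constant $c$, and translating this angular lower bound into a zero-one error bound via the uniform disk $\lbu_4$.

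The moment identities are routine integrations over the four subpopulations. $\bbE[x]=0$ holds because each piece is centrally symmetric about the origin. For $\bbE[x_1 x_2]$, $\lbu_3$ and $\lbu_4$ contribute $0$ by axis symmetry; $\lbu_1$'s two tilted squares contribute $-\opt/2$, and $\lbu_2$'s two rectangles contribute $+\opt/2$, which cancel. For $\bbE[x_1^2-x_2^2]$, $\lbu_1$ and $\lbu_4$ give $0$ by diagonal/rotational symmetry, and the choice $\lbl_3=(2/3)\sqrt{\opt}(1-\opt)$ is precisely tuned so that $\lbu_3$'s contribution $+\lbl_3$ cancels $\lbu_2$'s contribution $-\lbl_3$. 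Existence of $w^*$ is standard: $\cRlog$ is convex and continuous, and along every unit direction $v$ a constant fraction of the mass of $\lbu_1 \cup \lbu_4$ satisfies $y\langle v,x\rangle\le -1/2$, so $\cRlog(tv)\to\infty$ uniformly in $v$ as $t\to\infty$, forcing the infimum to be attained.

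The heart of the argument is the angular lower bound. Let $\sigma(z)=1/(1+e^{-z})$, so $\nabla\cRlog(w)=-\bbE[yx\,\sigma(-y\langle w,x\rangle)]$. Set $f(\phi)=\min_{t>0}\cRlog(t(\cos\phi,\sin\phi))$ with optimal magnitude $t^*(\phi)$; by the envelope theorem together with radial first-order optimality $\cos\phi\,(\nabla\cRlog)_1+\sin\phi\,(\nabla\cRlog)_2=0$, one obtains
\begin{align*}
f'(\phi)\;=\;\frac{t^*(\phi)}{\cos\phi}\,\bigl(\nabla\cRlog(w)\bigr)_2\;=\;-\frac{t^*(\phi)}{\cos\phi}\,\bbE\!\left[yx_2\,\sigma(-y\langle w,x\rangle)\right]
\end{align*}
at $w=t^*(\phi)(\cos\phi,\sin\phi)$, so the sign of $f'(\phi)$ equals the sign of $-\bbE[yx_2\,\sigma(-y\langle w,x\rangle)]$. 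I would then split this expectation over the four subpopulations. The decisive contributions are: $\lbu_2$ gives a strictly positive term of order $\sqrt{\opt}$, because $yx_2\ge 0$ on both rectangles (with total mass $2\sqrt{\opt}$) and $\sigma(-y\langle w,x\rangle)$ is bounded below on $\lbu_2$'s bounded support; $\lbu_3$ and $\lbu_4$ vanish at $\phi=0$ by $x_2$-reflection symmetry, so a first-order Taylor expansion of $\sigma$ in $t\sin\phi$ yields contributions only of magnitude $O(\phi)$; $\lbu_1$ contributes $O(\opt)$. For $c$ small enough the $\lbu_2$ term dominates throughout $\phi\in[0,c\sqrt{\opt}]$, giving $f'(\phi)<0$ there and hence $\phi^*>c\sqrt{\opt}$. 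The main obstacle is making the $\lbu_3,\lbu_4$ perturbations uniform in $t$, since $t^*(\phi)$ is a priori as large as $\Theta(1/\sqrt{\opt})$; this should follow from monotonicity and scaling of $\sigma$ combined with the bounded density and support of $\lbu_x$.

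Once $\phi^*\ge c\sqrt{\opt}$, the decision boundary of $w^*$ differs from that of $\baru$ (the $x_2$-axis) by at least $c\sqrt{\opt}$, so on the uniform disk $\lbu_4$ the region where $\sign(\langle w^*,x\rangle)$ disagrees with $\sign(x_1)$ is a pair of antipodal circular sectors of total angular measure $2\phi^*$. This contributes at least $(\phi^*/\pi)\cdot\mathrm{mass}(\lbu_4)$ to $\cR_{0-1}(w^*)$. Since $\mathrm{mass}(\lbu_4)=1-\opt-2\sqrt{\opt}-\lbl_3\ge 0.7$ when $\opt\le 1/100$, we conclude $\cR_{0-1}(w^*)\ge 0.7\,c\sqrt{\opt}/\pi\ge \sqrt{\opt}/(60\pi)$ for any $c\ge 1/42$, completing the proof.
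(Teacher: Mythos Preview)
Your overall architecture matches the paper's: the moment identities, the existence of $w^*$, and the final sector-counting on $\lbu_4$ are all handled the same way (the paper uses $\mathrm{mass}(\lbu_4)\ge 1/2$ together with $c=1/30$ to reach the same constant). The core step---showing $|\theta^*|\ge c\sqrt{\opt}$ by arguing that the $e_2$-component of $\nabla\cRlog$ cannot vanish for $\theta\in[0,c\sqrt{\opt}]$---is also exactly the paper's idea; your envelope function $f(\phi)$ is a repackaging of the same gradient computation.

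There is, however, a genuine gap: you never establish an \emph{upper} bound on $t^*(\phi)$, and both halves of your gradient analysis silently depend on one. For $\lbu_2$, your claim that ``$\sigma(-y\langle w,x\rangle)$ is bounded below on $\lbu_2$'s bounded support'' is false without it: at $\phi=0$ one computes
\[
\bbE_{\lbu_2}\!\sbr{yx_2\,\sigma(-y\langle w,x\rangle)}=\int_0^{\sqrt{\opt}}\sigma(-tx_1)\,\dif x_1\approx\frac{\ln 2}{t}\quad\text{for large }t,
\]
which is of order $\sqrt{\opt}$ only when $t=O(1/\sqrt{\opt})$ and tends to $0$ as $t\to\infty$. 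For $\lbu_3$ and $\lbu_4$, your Taylor expansion is in the parameter $t\sin\phi$, which is $\Theta(1)$ rather than small when $t\sim 1/\sqrt{\opt}$ and $\phi\sim\sqrt{\opt}$, so the ``$O(\phi)$'' conclusion is not immediate from what you wrote. The paper closes this gap with an explicit reference-point comparison: it first exhibits $\barw=(3/\sqrt{\opt},0)$ with $\cRlog(\barw)\le 5\sqrt{\opt}$ (\Cref{fact:log_lb_ref_sol}), and then observes that any $w$ in the wedge with $\|w\|>10/\sqrt{\opt}$ already has risk exceeding $5\sqrt{\opt}$ on the noisy part $\lbu_1$ alone, contradicting optimality (\Cref{fact:w*_norm_ub}). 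With $r^*\le 10/\sqrt{\opt}$ in hand, the paper handles $\lbu_4$ not by Taylor expansion but by a symmetry split: the portion of the disk with polar angle in $(-\tfrac{\pi}{2}+2\theta,\tfrac{\pi}{2})\cup(\tfrac{\pi}{2}+2\theta,\tfrac{3\pi}{2})$ is symmetric about the direction of $w$ and contributes with the favorable sign, while the two remaining thin wedges of angular width $2\theta$ each are bounded crudely via $|\llog'|<1$ (\Cref{fact:nonzero_grad}). Your ``monotonicity and scaling of $\sigma$'' remark does not substitute for either of these missing steps.
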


Note that we can further normalize $\lbu_x$ to unit variance and make it
isotropic.
Then it is easy to check that $\lbu_x$ satisfies the ``well-behaved-ness'' conditions in
\citep{diakonikolas_adv_noise}, and the ``soft-margin'' and
``sub-exponential'' conditions in \citep{frei_soft_margin}.
In particular, our lower bound matches the upper bound in
\citep{frei_soft_margin}.

\subsection{Proof of \Cref{fact:log_lb}}

Here is a proof sketch of \Cref{fact:log_lb}; the full proof is given in \Cref{app_sec:log_lb}.

First, basic calculation shows that $\lbu$ is isotropic up to a constant multiplicative
factor, and that $\lbu$ has bounded density and support.
Specifically, $\lbu_1$, $\lbu_2$ and $\lbu_4$ are constructed to make the risk
lower bound proof work, while $\lbu_3$ is included to make $\lbu$ isotropic.
It turns out that $\lbu_3$ does not change the risk lower bound proof too much,
since it is highly aligned with the ground-truth solution $\baru:=(1,0)$.

Next we consider the risk lower bound.
We only need to show that $\varphi(\baru,w^*)$, the angle between $\baru$ and
$w^*$, is $\Omega\del[1]{\sqrt{\opt}}$, since it then follows that $w^*$ is
wrong on an $\Omega\del[1]{\sqrt{\opt}}$ fraction of $\lbu_4$, which is enough
since $\lbu_4$ accounts for more than half of the distribution $\lbu$.

Note that the minimizer of the logistic risk on $\lbu_4$ by itself is infinitely far in the direction of $\baru$.
However, this will incur a large risk on $\lbu_1$.
By balancing these two parts, we can show that by moving along the direction of
$\baru$ by a distance of $\Theta\del{\frac{1}{\sqrt{\opt}}}$, we can achieve a
logistic risk of $O\del[1]{\sqrt{\opt}}$.
\begin{lemma}\label{fact:log_lb_ref_sol}
  Suppose $\opt\le1/100$, let $\barw:=(\barr,0)$ where
  $\barr=\frac{3}{\sqrt{\opt}}$, then $\cRlog(\barw)\le5\sqrt{\opt}$.
\end{lemma}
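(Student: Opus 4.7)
The strategy is to bound $\cRlog(\barw)$ by decomposing it into contributions from the four parts $\lbu_1,\lbu_2,\lbu_3,\lbu_4$, bounding each separately, and then summing. Since $\barw=(\barr,0)$ with $\barr=3/\sqrt{\opt}$, the logistic loss on a sample $(x,y)$ depends only on $y x_1$ through $\llog(y\barr x_1)=\ln(1+e^{-y\barr x_1})$. I will repeatedly use the two elementary bounds $\ln(1+e^z)\le z+\ln 2$ for $z\ge 0$ and $\int_0^\infty \ln(1+e^{-u})\,du = \pi^2/12$, as well as the exponential decay $\ln(1+e^{-z})\le e^{-z}$ for $z\ge 0$.

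\textbf{Step 1 (the noisy part $\lbu_1$).} Here the label disagrees with $\sign(x_1)$, so $y\barr x_1 \le 0$ on each of the two squares. On the square centered at $(\sqrt 2/2,-\sqrt 2/2)$, I have $x_1 \le \sqrt 2/2 + \sqrt{\opt/8}$, so I apply $\ln(1+e^{\barr x_1})\le \barr x_1 + \ln 2$ and integrate against the density-$1$ distribution of total mass $\opt/2$. The other square contributes symmetrically. Plugging in $\barr=3/\sqrt\opt$ yields an upper bound of the form $\tfrac{3\sqrt 2}{2}\sqrt\opt + O(\opt)$ for $\lbu_1$.

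\textbf{Step 2 (the aligned parts $\lbu_2,\lbu_4$).} On $\lbu_2$, the label is $\sign(x_1)$, so $\llog(y\barr x_1)=\ln(1+e^{-\barr|x_1|})$. Integrating against the unit density on $[0,\sqrt\opt]\times[0,1]\cup[-\sqrt\opt,0]\times[-1,0]$ and substituting $u=\barr|x_1|$ reduces the contribution to $\frac{2}{\barr}\int_0^{3}\ln(1+e^{-u})\,du \le \frac{\pi^2}{6\barr}=\frac{\pi^2}{18}\sqrt\opt$. The argument for $\lbu_4$ is identical in spirit: using $2\sqrt{1-x_1^2}\le 2$ for the $x_2$-width of a vertical slice of the unit disk and density $\lbl_4\le 1/\pi$, the contribution is at most $\frac{4\lbl_4}{\barr}\int_0^{3}\ln(1+e^{-u})\,du \le \frac{\pi\sqrt\opt}{9}$.

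\textbf{Step 3 (the benign part $\lbu_3$).} Here $|x_1|\ge 1-\sqrt{\lbl_3/2}\ge 1/2$ (for $\opt\le 1/100$), so $\ln(1+e^{-\barr|x_1|})\le e^{-\barr/2}\le e^{-15}$, which is negligible compared to $\sqrt\opt$.

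\textbf{Summing up.} Adding the four bounds gives $\cRlog(\barw)\le \frac{3\sqrt 2}{2}\sqrt\opt + \frac{\pi^2}{18}\sqrt\opt + \frac{\pi}{9}\sqrt\opt + o(\sqrt\opt) < 5\sqrt\opt$. The only mildly delicate step is Step 1, where one must be careful that the linear-in-$x_1$ bound does not absorb an extra factor; fortunately the size $\sqrt{\opt/8}$ of the square is small enough that the correction term is $O(\opt)$, safely below the target. Every other step is a routine integral bound.
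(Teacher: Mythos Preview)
Your proof is correct and follows the same four-part decomposition as the paper, with only cosmetic differences in the elementary inequalities used (e.g., $\int_0^\infty \ln(1+e^{-u})\,du=\pi^2/12$ in place of $\llog(z)\le e^{-z}$, and a vertical-slice bound for $\lbu_4$ instead of the paper's polar-coordinate estimate). One small slip: in the $\lbu_4$ calculation the substitution $u=\barr|x_1|$ gives upper limit $\barr$, not $3$; but since you then bound by $\int_0^\infty=\pi^2/12$, your stated conclusion $\pi\sqrt{\opt}/9$ is still correct.
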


Next we consider the global minimizer $w^*$ of $\cRlog$, which exists since
$\cRlog$ has bounded sub-level sets.
Let $(r^*,\theta^*)$ denote the polar coordinates of $w^*$.
We will assume
$\theta^*\in\sbr{-\frac{\sqrt{\opt}}{30},\frac{\sqrt{\opt}}{30}}$, and derive a
contradiction.
In our construction, $\lbu_3$ and $\lbu_4$ are symmetric with respect to the
horizontal axis, and they will induce the ground-truth solution.
However, $\lbu_1$ and $\lbu_2$ are skew, and they will pull $w^*$ above, meaning
we actually have $\theta^*\in\sbr{0,\frac{\sqrt{\opt}}{30}}$.
The first observation is an upper bound on $r^*$: if $r^*$ is too large, then
the risk of $w^*$ over $\lbu_1$ will already be larger than $\cRlog(\barw)$
for $\barw$ constructed in \Cref{fact:log_lb_ref_sol}, a contradiction.
\begin{lemma}\label{fact:w*_norm_ub}
  Suppose $\opt\le1/100$ and $\theta^*\in\sbr{0,\frac{\sqrt{\opt}}{30}}$, then
  $r^*\le \frac{10}{\sqrt{\opt}}$.
\end{lemma}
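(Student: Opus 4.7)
The plan is to leverage the optimality of $w^*$: since $w^*$ is the global minimizer of $\cRlog$, we must have $\cRlog(w^*) \le \cRlog(\barw) \le 5\sqrt{\opt}$ by \Cref{fact:log_lb_ref_sol}. We then lower-bound $\cRlog(w^*)$ using only the contribution from $\lbu_1$, which is the ``adversarial'' part where the labels are flipped relative to $\baru$. Intuitively, because $w^*$ is assumed to be nearly parallel to $\baru$, the margins $y\langle w^*,x\rangle$ on $\lbu_1$ are all very negative and scale linearly with $r^*$; balancing this against the $5\sqrt{\opt}$ upper bound forces $r^* = O(1/\sqrt{\opt})$.

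Concretely, I would first drop the nonnegative contributions from $\lbu_2,\lbu_3,\lbu_4$ and apply the universally valid inequality $\llog(z)\ge -z$ on $\lbu_1$ to get
\begin{align*}
  \cRlog(w^*) \;\ge\; \int_{\lbu_1} \llog\del{y\langle w^*,x\rangle}\,dx \;\ge\; -\Big\langle w^*,\, \int_{\lbu_1} y x\,dx\Big\rangle.
\end{align*}
Each of the two squares in $\lbu_1$ has area $\opt/2$ and density $1$, and by symmetry about their centers $c_+ = (-\tfrac{\sqrt{2}}{2}, \tfrac{\sqrt{2}}{2})$ and $c_- = (\tfrac{\sqrt{2}}{2},-\tfrac{\sqrt{2}}{2})$, the vector integral evaluates to $\tfrac{\opt}{2}(c_+ - c_-) = \tfrac{\opt}{2}(-\sqrt{2},\sqrt{2})$. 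Writing $w^* = r^*(\cos\theta^*,\sin\theta^*)$, this gives
\begin{align*}
  \cRlog(w^*) \;\ge\; \tfrac{\sqrt{2}}{2}\,\opt\cdot r^*\,\del{\cos\theta^* - \sin\theta^*}.
\end{align*}

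Under the hypothesis $\theta^* \in [0,\sqrt{\opt}/30]$ with $\opt\le 1/100$, we have $\theta^*\le 1/300$, so $\cos\theta^* - \sin\theta^* \ge 1 - \theta^* - \theta^{*2}/2 > 0.99$. Combining with $\cRlog(w^*)\le 5\sqrt{\opt}$ yields $\tfrac{\sqrt{2}}{2}\cdot 0.99\cdot \opt\cdot r^* \le 5\sqrt{\opt}$, i.e., $r^* \le \tfrac{50}{7\sqrt{\opt}} \le \tfrac{10}{\sqrt{\opt}}$, as claimed. There is no serious obstacle in this argument; the only subtlety is to lower-bound $\llog$ by $-z$ only on $\lbu_1$ (where the corresponding margins are indeed negative and this inequality is nearly tight) and to use $\llog \ge 0$ on the rest of the distribution, rather than applying the loose bound globally and ending up with a potentially negative (and useless) lower bound.
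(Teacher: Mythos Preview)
Your proof is correct and follows essentially the same strategy as the paper: drop the nonnegative contributions of $\lbu_2,\lbu_3,\lbu_4$, lower-bound $\cR_1(w^*)$ via $\llog(z)\ge -z$, and compare against the reference upper bound $\cRlog(\barw)\le 5\sqrt{\opt}$ from \Cref{fact:log_lb_ref_sol}. The only difference is that the paper bounds $\cR_1(w^*)$ pointwise by locating the worst corner of the $\lbu_1$ squares (via an angle computation yielding $y\langle w^*,x\rangle\le -r^*/2$), whereas you integrate directly and use the centroids; your route avoids that geometry and even gives a slightly sharper constant.
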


However, our next lemma shows that under the above conditions, the gradient of
$\cRlog$ at $w^*$ does not vanish, which contradicts the definition of $w^*$.
\begin{lemma}\label{fact:nonzero_grad}
  Suppose $\opt\le1/100$, then for any $w=(r,\theta)$ with
  $0\le r\le \frac{10}{\sqrt{\opt}}$ and $0\le\theta\le \frac{\sqrt{\opt}}{30}$,
  it holds that $\nRlog(w)\ne0$.
\end{lemma}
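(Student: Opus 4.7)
The plan is to show that the tangential component $\partial_\theta\cRlog(w)=r\langle\nRlog(w),\hat w^\perp\rangle$ is strictly negative throughout $(r,\theta)\in(0,10/\sqrt{\opt}]\times[0,\sqrt{\opt}/30]$; strict negativity forces $\nRlog\ne 0$ for all $r>0$ in the region. The boundary case $r=0$ is handled directly: $\nRlog(0)=-\tfrac12\bbE_{\lbu}[yx]$, and computing the first coordinate part-by-part gives $\bbE_{\lbu}[yx_1]=\tfrac{4\lbl_4}{3}+\lbl_3+\opt-\tfrac{\sqrt 2\,\opt}{2}>0$ (the $\lbu_4$ piece dominates), so $\nRlog(0)\ne 0$. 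Writing $\cRlog=\sum_{i=1}^{4}\cR_{\log,i}$ where $\cR_{\log,i}$ is the contribution from $\lbu_i$, it remains to bound the four angular derivatives.

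\textbf{Key computation: the $\lbu_4$ term.} Parametrize $x=(s\cos\psi,s\sin\psi)$ in polar coordinates and use $x\mapsto -x$ to reduce to $\psi\in(-\pi/2,\pi/2)$. After differentiating under the integral and substituting $\phi=\psi-\theta$, the angular derivative becomes $-2\lbl_4 r\int_0^1 s^2\int_{-\pi/2-\theta}^{\pi/2-\theta}\sin\phi\,\sigma(-rs\cos\phi)\,d\phi\,ds$. The symmetric integral $\int_{-\pi/2}^{\pi/2}\sin\phi\,\sigma(-rs\cos\phi)d\phi$ vanishes by odd symmetry, leaving the two boundary strips of width $\theta$. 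Pairing these strips via $\phi\mapsto\phi+\pi$ collapses the two sigmoid factors through $\sigma(rs\cos\phi)+\sigma(-rs\cos\phi)=1$, and the remaining integral evaluates in closed form to
\[
  \partial_\theta\cR_{\log,4}(w)=\frac{2\lbl_4\,r\sin\theta}{3}\;\le\;\frac{r\sqrt{\opt}}{45\pi}.
\]

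\textbf{The other three pieces.} For $\lbu_3$, the analogous pairing of the two squares at $(\pm 1,0)$ via $x\mapsto-x$ and the same identity give $|\partial_\theta\cR_{\log,3}|=O(\lbl_3\,r\sin\theta)=O(r\opt)$, negligible compared to $\lbu_4$. For $\lbu_2$, combining the two rectangles via $x\mapsto-x$ and integrating out the bounded $x_2$-coordinate yields at $\theta=0$ the closed form
\[
  \partial_\theta\cR_{\log,2}(r,0)=-\bigl(\ln 2-\ln(1+e^{-r\sqrt{\opt}})\bigr)=-\Theta(\min(r\sqrt{\opt},1)),
\]
and a first-order expansion in $\theta$ (using $\sigma'\le 1/4$ and $\int_0^\infty\sigma'(-u)du=1/2$) shows the magnitude changes by at most a multiplicative $1+O(\sqrt{\opt})$ factor throughout the $\theta$-range, so $\partial_\theta\cR_{\log,2}(w)\le -c_2\min(r\sqrt{\opt},1)$ for a universal $c_2>0$. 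For $\lbu_1$, since $\theta\le\sqrt{\opt}/30\ll\pi/4$, both noisy squares still satisfy $y\langle w,x\rangle<0$; the sigmoid factor is at least $\sigma(r\sqrt 2/2)$ on the support, and the geometry of the centers gives $\partial_\theta\cR_{\log,1}(w)\le -c_1\,r\opt\cdot\sigma(r\sqrt 2/2)\le 0$.

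\textbf{Combining and main obstacle.} The total positive contribution from $\lbu_3+\lbu_4$ is at most $\tfrac{r\sqrt{\opt}}{45\pi}+O(r\opt)$, whereas the negative contribution from $\lbu_1+\lbu_2$ is at least $c_2\min(r\sqrt{\opt},1)$, which for $\opt\le 1/100$ is much larger. Hence $\partial_\theta\cRlog(w)<0$ strictly throughout, and $\nRlog(w)\ne 0$. The technical heart of the proof is the constant-tracking in the regime $r\sqrt{\opt}\lesssim 1$, where both the positive $\lbu_4$ piece and the negative $\lbu_2$ piece scale like $r\sqrt{\opt}$: the inequality $\tfrac{2}{3\pi}\cdot\tfrac{1}{30}\ll\tfrac{1}{2}$—exactly what the threshold $\theta\le\sqrt{\opt}/30$ encodes—is what makes the $\lbu_2$ contribution win, and verifying that this dominance is not spoiled by the $\theta$-perturbation of $\partial_\theta\cR_{\log,2}$ is where the bulk of the bookkeeping lies.
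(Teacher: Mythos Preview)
Your approach is correct in outline and genuinely different from the paper's. Both arguments show that a certain directional derivative of $\cRlog$ is strictly negative on the region, but the paper projects $\nRlog(w)$ onto the \emph{fixed} direction $e_2=(0,1)$, whereas you project onto the \emph{rotating} direction $\hat w^\perp=(-\sin\theta,\cos\theta)$, i.e.\ you work with the angular derivative $\partial_\theta\cRlog$. Your choice yields the clean closed form $\partial_\theta\cR_{\log,4}=\tfrac{2\lbl_4 r\sin\theta}{3}$ for the dominant $\lbu_4$ piece---the paper instead splits the unit disc into a symmetric sector and two thin wedges and bounds the wedge contribution crudely. The price you pay is that the $\lbu_2$ piece, which is trivial to control against a fixed $e_2$ (the paper gets $\langle\nabla\cR_{\log,2},e_2\rangle\le-\sqrt{\opt}/30$ in a few lines), now requires a perturbation argument in $\theta$.

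There is one overstated claim. You assert that the $\theta$-perturbation of $\partial_\theta\cR_{\log,2}$ is a multiplicative $1+O(\sqrt{\opt})$ factor. This is not true: differentiating once more in $\theta$ and using your own hint $\int_0^\infty\sigma'(u)\,du=\tfrac12$ gives $|\partial_\theta^2\cR_{\log,2}|\le \tfrac{r}{3}+O(\sqrt{\opt})$, so over the range $\theta\in[0,\sqrt{\opt}/30]$ the additive change is at most $\tfrac{r\sqrt{\opt}}{90}+O(\opt)$, which at $r\sqrt{\opt}=10$ is about $1/9$---an $O(1)$ constant, not $O(\sqrt{\opt})$. Fortunately this is still small enough: the main term at $\theta=0$ is $\ln 2-\ln(1+e^{-r\sqrt{\opt}})\ge\min(r\sqrt{\opt}/3,\,0.56)$, and $\tfrac{r\sqrt{\opt}}{90}$ is dominated by this in both the small- and large-$r\sqrt{\opt}$ regimes. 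So your conclusion $\partial_\theta\cR_{\log,2}\le -c_2\min(r\sqrt{\opt},1)$ survives, but with $c_2$ a fixed universal constant rather than essentially $\tfrac12$. With that correction, the combination step goes through: the positive $\lbu_4+\lbu_3$ contribution is at most $\tfrac{r\sqrt{\opt}}{45\pi}+O(r\opt)$, well below the negative $\lbu_2$ contribution. The paper's fixed-$e_2$ projection sidesteps this perturbation issue entirely, which is the trade-off between the two routes.
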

To prove \Cref{fact:nonzero_grad}, let us consider an arbitrary $w=(r,\theta)$
under the conditions of \Cref{fact:nonzero_grad}.
For simplicity, let us first look at the case $\theta=0$.
In this case, $y \langle w,x\rangle\le10$ on $\lbu_2$, and it follows that
$\lbu_2$ induces a component of length $\frac{C_1}{\sqrt{\opt}}$ in the
gradient $\nRlog(w)$ along the direction of $-e_2=(0,-1)$, where $C_1$ is a
universal constant.
Moreover, $\lbu_1$ also induces a component in the gradient along
$-e_2$, while $\lbu_3$ and $\lbu_4$ induce a zero component along $e_2$.
As a result, $\ip{\nRlog(w)}{e_2}<0$.
Now if $0\le\theta\le C_2\sqrt{\opt}$ for some small enough constant $C_2$
($1/30$ in our case), we can show that $\lbu_3$ and $\lbu_4$ cannot cancel the
effect of $\lbu_2$, and it still holds that $\ip{\nRlog(w)}{e_2}<0$.

\section{An $\widetilde{O}(\opt)$ upper bound for logistic loss with radial Lipschitzness}\label{sec:log_ub}

The $\Omega(\sqrt{\opt})$ lower bound construction in Section~\ref{sec:log_lb} shows that further assumptions on the distribution are necessary in order to improve the upper bound on the zero-one risk of the logistic regression solution. In particular, we note that the distribution $\lbu$ constructed in Section~\ref{sec:log_lb} has a {\em discontinuous} density. In this section, we show that if we simply add a very mild Lipschitz continuity condition on the density, then we can achieve $\widetilde{O}(\opt)$ zero-one risk using logistic regression.

First, we formally provide the standard assumptions from prior work.
Because of the lower bound for $s$-heavy-tailed distributions from
\citep{diakonikolas_adv_noise}, to get an $\widetilde{O}(\opt)$ zero-one risk,
we need to assume $P_x$ has a light tail.
Following \citep{frei_soft_margin}, we will either consider a bounded
distribution, or assume $P_x$ is sub-exponential as defined below (cf.
\citep[Proposition 2.7.1 and Section 3.4.4]{roman_hdp}).
\begin{definition}\label{def:sub_exp}
  We say $P_x$ is $(\alpha_1,\alpha_2)$ sub-exponential for constants
  $\alpha_1,\alpha_2>0$, if for any unit vector $v$ and any $t>0$,
  \begin{align*}
    \pr_{x\sim P_x}\del[2]{\envert{\langle v,x\rangle}\ge t}\le\alpha_1\exp\del{-t/\alpha_2}.
  \end{align*}
\end{definition}

We also need the next assumption, which is part of the ``well-behaved-ness''
conditions from \citep{diakonikolas_adv_noise}.
\begin{assumption}\label{cond:well}
  There exist constants $U,R>0$ and a function $\sigma:\R_+\to\R_+$,
  such that if we project $P_x$ onto an arbitrary two-dimensional subspace $V$,
  the corresponding density $p_V$ satisfies
  $p_V(r,\theta)\ge1/U$ for all $r\le R$, and
  $p_V(r,\theta)\le\sigma(r)$ for all $r\ge0$,
  and $\int_0^\infty\sigma(r)\dif r\le U$, and
  $\int_0^\infty r\sigma(r)\dif r\le U$.
\end{assumption}

Note that for a broad class of distributions including isotropic
log-concave distributions, the sub-exponential condition and
\Cref{cond:well} hold with $\alpha_1,\alpha_2,U,R$ all being
universal constants.

Finally, as discussed earlier, the previous conditions are also satisfied by
 $\lbu$ from \Cref{sec:log_lb}, and thus to get the improved $\widetilde{O}(\opt)$ risk bound, we need
the following radial Lipschitz continuity assumption.
\begin{assumption}\label{cond:rad}
  There exists a measurable function $\kappa:\R_+\to\R_+$ such that for any
  two-dimensional subspace $V$,
  \begin{align*}
    \envert{p_V(r,\theta)-p_V(r,\theta')}\le\kappa(r)|\theta-\theta'|.
  \end{align*}
\end{assumption}
We will see \Cref{cond:rad} is crucial for the upper bound analysis in
\Cref{fact:rotate_diff_main}.
For some concrete examples, note that if $p_V$ is radially symmetric, then
$\kappa$ is $0$, while if  $p_V$ is $\lambda$-Lipschitz continuous in the usual sense (under $\ell_2$), then we can
let $\kappa(r):=\lambda r$.

Now we can state our main results. In the following, we denote the unit linear classifier with the optimal zero-one risk by $\baru$, with $\cR_{0-1}(\baru)=\opt\in(0,1/e)$.
Our first result shows that, with \Cref{cond:rad}, minimizing the logistic risk
yields a solution with $\widetilde{O}(\opt)$ zero-one risk.

\begin{theorem}\label{thm:exact-opt}
Under Assumptions \ref{cond:well} and \ref{cond:rad}, let $w^*$ denote the global minimizer of $\cRlog$.
\begin{enumerate}
    \item If $\|x\|\le B$ almost surely, then
    \[\cR_{0-1}(w^*)=O\del{(1+C_\kappa)\opt},\]
    where $C_\kappa:=\int_0^B\kappa(r)\dif r$.

    \item If $P_x$ is $(\alpha_1,\alpha_2)$-sub-exponential, then
    \[\cR_{0-1}(w^*)=O\del[1]{(1+C_\kappa)\opt\cdot\ln(1/\opt)},\]
    where $C_\kappa:=\int_0^{3\alpha_2\ln(1/\opt)}\kappa(r)\dif r$.
  \end{enumerate}
\end{theorem}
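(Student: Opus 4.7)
The plan is to reduce \Cref{thm:exact-opt} to bounding the angle $\phi^* := \varphi(w^*, \baru)$. By \Cref{cond:well}, the $2$-dimensional projection of $P_x$ onto $V := \mathrm{span}(w^*, \baru)$ places mass $O(U\phi)$ in any angular wedge of width $\phi$ about $\baru$, so $\cR_{0-1}(w^*) \le \opt + O(U\phi^*)$. In the sub-exponential case, the same reduction is applied after first truncating to a ball of radius $3\alpha_2\ln(1/\opt)$ via \Cref{def:sub_exp}, which is the source of the extra $\ln(1/\opt)$ factor.

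The first main step is to fix a reference vector $\hat w := R\baru$ with a judiciously chosen $R$ and bound $\cRlog(\hat w)$. Splitting the expectation into correctly- and incorrectly-labeled pieces, using $\llog(z) \le e^{-z}$ for $z \ge 0$ together with the soft-margin consequence of \Cref{cond:well} (namely $\pr(|\langle \baru, x\rangle| \le \gamma) = O(\gamma)$), and using $\llog(-z) \le 1 + |z|$ on the $\opt$-fraction of corrupted examples, gives a bound $\cRlog(\hat w) = O(\sqrt{\opt})$ in the spirit of \citet{frei_soft_margin} (with an extra $\ln(1/\opt)$ factor in the sub-exponential case after truncation). By global optimality $\cRlog(w^*) \le \cRlog(\hat w)$, and this already yields $\phi^* = O(\sqrt{\opt})$ as a warm start.

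The central step upgrades this warm-start bound to $\widetilde{O}(\opt)$ using \Cref{cond:rad}. Working in polar coordinates $(r,\theta)$ on $V$ with axis $\baru$, introduce $\tilde w := \|w^*\|\baru$, i.e., $w^*$ rotated back onto $\baru$ with the same norm. By optimality once more, $\cRlog(w^*) \le \cRlog(\tilde w)$. After the change of variables $\theta \mapsto \theta + \phi^*$, the difference $\cRlog(\tilde w) - \cRlog(w^*)$ splits into: (i) a density-mismatch integral of the form $\int |p_V(r,\theta+\phi^*) - p_V(r,\theta)| \cdot |\llog(\cdot)|\, r \dif r \dif\theta$, which \Cref{cond:rad} bounds by $O(C_\kappa \phi^* \|w^*\|)$ (this is the content of the alluded \Cref{fact:rotate_diff_main}); (ii) a noise term of size $O(\|w^*\| \opt)$ arising from the corrupted fraction; and (iii) a signal contribution coming from the fact that rotating $w^*$ onto $\baru$ exchanges large sliver-loss for small margin-loss. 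Combined with the warm-start control on $\|w^*\|$ from the previous step, the resulting inequality is linear in $\phi^*$ and its solution is $\phi^* = \widetilde{O}((1+C_\kappa)\opt)$.

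The main obstacle will be obtaining a sharp enough lower bound on the signal contribution. The logistic derivative $|\llog'(\|w^*\| r \cos\theta)|$ is of order $1$ only in a thin angular slab of width $\Theta(1/(\|w^*\| r))$ about the $\baru$-boundary; outside this slab it decays like $e^{-\|w^*\| r |\cos\theta|}$, and essentially all of the signal comes from the slab (where the soft margin controls the total mass). Without \Cref{cond:rad} this signal would be swamped by the density-mismatch term, and the final bound could not improve beyond $\sqrt{\opt}$ (matching \Cref{fact:log_lb}). A secondary subtlety is the sub-exponential case, where truncation at $O(\ln(1/\opt))$ combined with \Cref{def:sub_exp} fixes the upper limit in the definition of $C_\kappa$ and introduces the additional $\ln(1/\opt)$ factor.
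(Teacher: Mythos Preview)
Your overall strategy---reduce to an angle bound, compare $\cRlog(w^*)$ to $\cRlog(\|w^*\|\baru)$, and split the difference into a noise piece, a signal piece, and a density-mismatch piece controlled by \Cref{cond:rad}---is exactly the route the paper takes. However, two of your three term estimates are stated with the wrong scaling, and one of them is fatal to the argument as written.

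\textbf{The noise term must carry a factor of $\phi^*$.} You write the noise contribution as $O(\|w^*\|\opt)$. In the paper the noise term is $\bbE\sbr{\1_{y\ne\sign(\langle\baru,x\rangle)}\,y\langle \|w^*\|\baru - w^*, x\rangle}$, which is bounded by $B\,\|\,\|w^*\|\baru - w^*\|\cdot\opt$. Since $\|w^*\|\baru$ and $w^*$ have the same norm and make angle $\phi^*$, this difference is at most $\|w^*\|\phi^*$, so the noise is $O(\|w^*\|\phi^*\cdot\opt)$. That extra $\phi^*$ is the entire point: balancing the signal lower bound $\Omega(\|w^*\|\phi^{*2})$ against noise $O(\|w^*\|\phi^*\opt)$ is what yields $\phi^*=O(\opt)$. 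With your stated bound $O(\|w^*\|\opt)$ you would only get $\phi^{*2}\lesssim\opt$, i.e.\ $\phi^*=O(\sqrt{\opt})$, and the improvement over the warm start disappears. Relatedly, your claim that ``the resulting inequality is linear in $\phi^*$'' fails with your stated noise bound; it becomes linear only after this missing $\phi^*$ is restored and one power of $\phi^*$ is divided out.

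\textbf{The density-mismatch term scales inversely with $\|w^*\|$.} You state the bound as $O(C_\kappa\phi^*\|w^*\|)$, but the integral you wrote down actually evaluates to $O(C_\kappa\phi^*/\|w^*\|)$: after applying \Cref{cond:rad}, the remaining angular integral $\int_0^{2\pi}\llog(r\|w^*\||\cos\theta|)\,r\,\dif\theta$ is $O(1/\|w^*\|)$, not $O(\|w^*\|)$ (the logistic loss is essentially supported on a slab of width $1/(r\|w^*\|)$). This is precisely why the norm lower bound $\|w^*\|=\Omega(1/\sqrt{\opt})$---which you allude to but never establish---is needed: it turns $C_\kappa\phi^*/\|w^*\|^2$ into $O(C_\kappa\opt)$ after dividing by $\phi^*$. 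You should make that norm bound explicit; it follows from comparing $\cRlog(w^*)\le O(\sqrt{\opt})$ against the lower bound $\cRlog(w)\ge\Omega(1/\|w\|)$ coming from the density lower bound in \Cref{cond:well}.

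Finally, your description of the signal term via ``the logistic derivative in a thin slab'' is more complicated than necessary. The paper's decomposition isolates the signal as $\bbE\sbr{\ell(\sign(\langle\baru,x\rangle)\langle w^*,x\rangle)-\ell(|\langle w^*,x\rangle|)}$, which is supported on the disagreement wedge and there equals exactly $|\langle w^*,x\rangle|$ via the identity $\ell(-z)-\ell(z)=z$; integrating against the density lower bound on $\cB(R)$ gives $\Omega(\|w^*\|\phi^{*2})$ directly, with no derivative analysis needed.
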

Note that for bounded distributions, $\cR_{0-1}(w^*)=O(\opt)$ as long as
$C_\kappa$ is a constant, which is true if $p_V$ is Lipschitz continuous.
Similarly, for sub-exponential distributions, $C_\kappa$ is still
$O\del[1]{\ln(1/\opt)^2}$ for Lipschitz continuous densities, in
which case $\cR_{0-1}(w^*)=\widetilde{O}(\opt)$.

Next we give an algorithmic result.
Given a target error $\epsilon\in(0,1)$, we consider projected gradient descent
on the empirical risk with a norm bound of $1/\sqrt{\epsilon}$: let $w_0:=0$, and
\begin{align}\label{eq:pgd}
  w_{t+1}:=\Pi_{\cB(1/\sqrt{\epsilon})}\sbr{w_t-\eta\nhRlog(w_t)}.
\end{align}
Our next result shows that projected gradient descent can give an
$\widetilde{O}(\opt)$ risk.
Note that for the two cases discussed below (bounded or sub-exponential), we use
the corresponding $C_\kappa$ defined in \Cref{thm:exact-opt}.
\begin{theorem}\label{fact:log_ub_opt}
  Suppose Assumptions \ref{cond:well} and \ref{cond:rad} hold.
  \begin{enumerate}
    \item If $\|x\|\le B$ almost surely,
    then with $\eta=4/B^2$, and $O\del{\frac{\ln(1/\delta)}{\epsilon^4}}$
    samples and $O\del{\frac{1}{\epsilon^{5/2}}}$ iterations, with probability
    $1-\delta$, projected gradient descent outputs $w_t$ satisfying
    \begin{align*}
      \cR_{0-1}(w_t)=O\del{(1+C_\kappa)(\opt+\epsilon)}.
    \end{align*}

    \item On the other hand, if $P_x$ is $(\alpha_1,\alpha_2)$-sub-exponential,
    then with $\eta=\widetilde{\Theta}(1/d)$, using
    $\widetilde{O}\del{\frac{d\ln(1/\delta)^3}{\epsilon^4}}$ samples and
    $\widetilde{O}\del{\frac{d\ln(1/\delta)^2}{\epsilon^{5/2}}}$ iterations,
    with probability $1-\delta$, projected gradient descent outputs $w_t$ with
    \begin{align*}
       \cR_{0-1}(w_t)=\del{(1+C_\kappa)\del{\opt\cdot\ln(1/\opt)+\epsilon}}.
    \end{align*}
  \end{enumerate}
\end{theorem}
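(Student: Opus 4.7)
For the bounded case, the plan is to decompose the error of $w_T$ into three contributions --- the optimization gap of PGD on $\hRlog$, the generalization gap between $\hRlog$ and $\cRlog$ uniformly over the ball $\cB(1/\sqrt\epsilon)$, and the conversion from approximate logistic optimality to zero-one risk via a robustified version of \Cref{thm:exact-opt}. The step size $\eta=4/B^2$ equals $1/L$ for the smoothness constant $L=B^2/4$ of $\hRlog$ (which follows from $|\llog''|\le 1/4$ and $\|x\|\le B$), so the standard PGD analysis for $L$-smooth convex functions gives $\hRlog(w_T)-\min_{\|w\|\le 1/\sqrt\epsilon}\hRlog(w)=O\del{B^2/(T\epsilon)}$, which is $O(\epsilon^{3/2})$ for $T=\Theta(1/\epsilon^{5/2})$. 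For uniform convergence I would apply Ledoux--Talagrand contraction to the $1$-Lipschitz $\llog$ composed with linear functions on $\cB(1/\sqrt\epsilon)$, yielding Rademacher complexity $O(B/\sqrt{n\epsilon})$; combined with McDiarmid (losses are bounded by $O(B/\sqrt\epsilon)$) this gives a high-probability bound of $O(\epsilon^{3/2})$ on the uniform gap once $n=\widetilde{O}\del{B^2\log(1/\delta)/\epsilon^4}$. Putting the two estimates together, $\cRlog(w_T)$ is within $O(\epsilon^{3/2})$ of the ball-constrained population minimum with probability $1-\delta$.

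The main obstacle, and the step that dictates the exponents above, is turning this $O(\epsilon^{3/2})$ slack on $\cRlog$ into an $O(\epsilon)$ slack on $\cR_{0-1}$, so that the $\widetilde O(\opt)$ conclusion of \Cref{thm:exact-opt} only degrades by an additive $\epsilon$. Rather than invoking \Cref{thm:exact-opt} as a black box, I would reopen its proof and verify that it is robust to approximate stationarity on $\cB(1/\sqrt\epsilon)$: that proof presumably bounds the angle $\varphi(w^*,\baru)$ using the first-order optimality of $w^*$ together with \Cref{cond:rad} on the two-dimensional subspace spanned by $w^*$ and $\baru$, and for a near-minimizer $w_T$ the smooth-convex inequality $\|\nabla f\|^2\le 2L\del{f-f^*}$ yields $\|\nRlog(w_T)\|=O(\epsilon^{3/4})$. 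Propagating this approximate stationarity through the same angular argument gives $\varphi(w_T,\baru)=\widetilde O(\opt)+O(\sqrt\epsilon)$, and \Cref{cond:rad} converts the extra $O(\sqrt\epsilon)$ in angle into at most $O\del{(1+C_\kappa)\epsilon}$ additional zero-one risk, as required. Carefully balancing the exponents $3/2$ (optimization and generalization), $3/4$ (gradient norm), $1/2$ (angle), and $1$ (final zero-one risk) is the most delicate part of the argument, and it is what forces the specific choices $T=\Theta(1/\epsilon^{5/2})$ and $n=\widetilde O(1/\epsilon^4)$.

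For the sub-exponential case, I would first truncate: by \Cref{def:sub_exp} together with a standard norm-concentration argument for isotropic sub-exponential vectors, $\|x_i\|\le \widetilde B$ for $\widetilde B=\widetilde\Theta\del{\sqrt d\ln(1/\delta)}$ holds for all $n$ samples with probability $1-\delta/2$, and on this event $\hRlog$ is $\widetilde\Theta(d)$-smooth, which explains the step size $\eta=\widetilde\Theta(1/d)$ and the extra factor of $d$ in the sample and iteration complexity bounds. The $\widetilde O\del{\opt\cdot\ln(1/\opt)}$ bound from \Cref{thm:exact-opt} replaces the $\widetilde O(\opt)$ bound of the bounded case, and the integration range $[0,3\alpha_2\ln(1/\opt)]$ for $C_\kappa$ arises because beyond this radius the sub-exponential tail already contributes only $O(\opt)$ to $\cR_{0-1}$ and can be absorbed into the main $\opt$ term. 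Everything else in the argument is identical to the bounded case.
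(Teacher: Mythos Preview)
Your optimization and generalization steps for the bounded case are right and match the paper: PGD on a $B^2/4$-smooth $\hRlog$ over $\cB(1/\sqrt\epsilon)$ plus Rademacher/McDiarmid gives $\cRlog(w_T)\le\min_{\rho\le 1/\sqrt\epsilon}\cRlog(\rho\baru)+O(\epsilon^{3/2})$ with the stated sample and iteration budgets. The problem is the conversion step. The paper's proof of \Cref{thm:exact-opt} does \emph{not} use first-order optimality of $w^*$; it is entirely a function-value argument (this is \Cref{fact:log_ub}). One writes $\cRlog(\hw)-\cRlog(\|\hw\|\baru)$ as a sum of three pieces, and the key inequality is a \emph{lower} bound $\Omega(\|\hw\|\varphi^2)$ on one of them (\Cref{fact:ground_truth_diff}), which against the $O(\epsilon^{3/2})$ slack forces $\varphi=O(\epsilon)$ once one knows $\|\hw\|=\Omega(1/\sqrt\epsilon)$. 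Your route through $\|\nRlog(w_T)\|=O(\epsilon^{3/4})$ only yields $\varphi=O(\sqrt\epsilon)$, and your claim that \Cref{cond:rad} then upgrades a $\sqrt\epsilon$ angle to an $\epsilon$ zero-one risk is incorrect: the angle-to-zero-one conversion (\Cref{fact:angle_01}) is linear in $\varphi$ and uses only \Cref{cond:well}. \Cref{cond:rad} enters earlier, to control a different term in the decomposition, not to square the angle. So the exponent chain $3/2\to 3/4\to 1/2\to 1$ breaks at the last arrow, and as stated your argument would only give $O(\sqrt\epsilon)$ zero-one risk. You also omit a piece the paper relies on: a lower bound $\|w_T\|=\Omega\bigl(\min\{1/\sqrt\epsilon,1/\sqrt{\opt}\}\bigr)$ (\Cref{fact:wt_norm_lb_main}), obtained by comparing $\cRlog(w_T)$ to a lower bound of the form $R/(U\|w_T\|)$; without this you cannot control the $C_\kappa/\|\hw\|^2$ contribution.

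For the sub-exponential case, truncating to $\widetilde B=\widetilde\Theta(\sqrt d\,\ln(1/\delta))$ is exactly right for the optimization/generalization analysis (this is how the paper gets the $d$ factors in the complexities), but it is \emph{not} right for the logistic-to-zero-one conversion. The constants in the bounded-case version of \Cref{fact:log_ub} depend polynomially on $B$ (in particular through \Cref{fact:noisy_label_main}), so plugging in $B=\widetilde\Theta(\sqrt d)$ would contaminate the final zero-one bound with a $\poly(d)$ factor. The paper handles this by proving the sub-exponential analogues of \Cref{fact:noisy_label_main} and \Cref{fact:rotate_diff_main} directly from the one-dimensional tail in \Cref{def:sub_exp}, picking up only the $\ln(1/\opt)$ factor and no dimension dependence; ``everything else is identical to the bounded case'' is not sufficient here.
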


Theorems~\ref{thm:exact-opt} and \ref{fact:log_ub_opt} rely on the following key lemma which provides a zero-one risk bound on near optimal solutions to the logistic regression problem:
\begin{lemma}\label{fact:log_ub}
  Under Assumptions \ref{cond:well} and \ref{cond:rad}, suppose $\hw$ satisfies
  $\cRlog(\hw)\le\cRlog(\|\hw\|\baru)+\epsopt$ for some $\epsopt\in[0,1)$.
  \begin{enumerate}
    \item If $\|x\|\le B$ almost surely, then
    \begin{align*}
      \cR_{0-1}(\hw)= O\del[3]{\max\cbr{\opt,\sqrt{\tfrac{\epsopt}{\|\hw\|}},\tfrac{C_\kappa}{\|\hw\|^2}}}.
    \end{align*}

    \item If $P_x$ is $(\alpha_1,\alpha_2)$-sub-exponential and $\|\hw\|=\Omega(1)$,
    then
    \begin{align*}
      \cR_{0-1}(\hw)=O\del[3]{\max\cbr{\opt\cdot\ln(1/\opt),\sqrt{\tfrac{\epsopt}{\|\hw\|}},\tfrac{C_\kappa}{\|\hw\|^2}}}.
    \end{align*}
  \end{enumerate}
\end{lemma}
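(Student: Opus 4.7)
The plan is to reduce the conclusion to an upper bound on the angle $\phi := \varphi(\hw, \baru)$, then extract that bound from the near-optimality hypothesis. Since $\hw$ and $\baru$ span at most a two-dimensional subspace $V$, the disagreement region between $\sign\ip{\hw}{\cdot}$ and $\sign\ip{\baru}{\cdot}$ is the preimage under projection onto $V$ of two angular wedges of total angular measure $2\phi$, and Assumption~\ref{cond:well} (via $\int_0^\infty r\sigma(r)\,dr \le U$) bounds its probability by $2U\phi$. Hence $\cR_{0-1}(\hw) \le \opt + 2U\phi$, and it suffices to show $\phi = O(\max\{\opt, \sqrt{\epsopt/\|\hw\|}, C_\kappa/\|\hw\|^2\})$ in case (1), with $\opt$ replaced by $\opt\ln(1/\opt)$ in case (2).

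To extract such a bound, I would decompose $\cRlog(w) = \widetilde{\cR}_{\log}(w) + N(w)$, where $\widetilde{\cR}_{\log}(w) := \bbE[\llog(\sign\ip{\baru}{x}\cdot\ip{w}{x})]$ is the noise-free logistic risk. Using the identity $\llog(-z) - \llog(z) = z$, the noise contribution collapses to the linear functional
\[
N(w) = \bbE\!\left[\eta(x)\sign\ip{\baru}{x}\ip{w}{x}\right], \qquad \eta(x) := \pr\!\left(y \ne \sign\ip{\baru}{x} \mid x\right).
\]
Linearity of $N$ and Cauchy--Schwarz give $|N(\hw) - N(\|\hw\|\baru)| \le \|\hw\|\phi\cdot\bbE[\eta(x)\|x\|]$, which is $O(\|\hw\|\phi\opt)$ in the bounded case and $O(\|\hw\|\phi\cdot\opt\ln(1/\opt))$ in the sub-exponential case after truncating $\|x\|$ at $T = \Theta(\ln(1/\opt))$ via Definition~\ref{def:sub_exp}.

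The core step is to lower-bound $\widetilde{\cR}_{\log}(\hw) - \widetilde{\cR}_{\log}(\|\hw\|\baru)$. Project onto $V$ with polar coordinates $(r,\theta)$ placing $\baru$ at angle $0$ and $\hw/\|\hw\|$ at angle $\phi$; substituting $\theta \mapsto \theta' + \phi$ in the integral for $\widetilde{\cR}_{\log}(\hw)$ splits the difference into a ``wedge'' term
\[
A = \iint \bigl[\llog(\sign\cos(\theta'+\phi)\cdot\|\hw\| r\cos\theta') - \llog(\sign\cos\theta'\cdot\|\hw\| r\cos\theta')\bigr]\,p_V(r,\theta')\,r\,dr\,d\theta'
\]
and a density-perturbation term $B$ with the factor $p_V(r,\theta'+\phi) - p_V(r,\theta')$ in place of $p_V(r,\theta')$. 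The integrand of $A$ vanishes off the two wedges of angular width $\phi$ where $\sign\cos\theta'$ and $\sign\cos(\theta'+\phi)$ disagree; there the identity $\llog(-z) - \llog(z) = z$ collapses the integrand to exactly $\|\hw\| r |\cos\theta'|$, and the density lower bound $p_V \ge 1/U$ on $r \le R$ from Assumption~\ref{cond:well} yields
\[
A \;\ge\; \tfrac{2R^3}{3U}(1-\cos\phi)\|\hw\| \;\gtrsim\; \|\hw\|\phi^2.
\]

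For $B$, Assumption~\ref{cond:rad} gives $|p_V(r,\theta'+\phi) - p_V(r,\theta')| \le \kappa(r)\phi$, and the main technical obstacle is to recover the $C_\kappa/\|\hw\|^2$ scaling rather than the weaker $C_\kappa/\|\hw\|$ produced by the naive bound $\llog \le \ln 2$. This requires exploiting the exponential decay of $\llog$ on the non-wedge region, where its argument is nonnegative: combining $\llog(z) \le e^{-z}$ with $\max_r r e^{-ar} = 1/(ea)$ and $\int_0^{2\pi} e^{-c|\cos\theta|}\,d\theta = O(1/c)$ gives a non-wedge contribution of $O(\phi C_\kappa/\|\hw\|)$, while the wedge region (where $\llog \le \ln 2 + \|\hw\| r\phi$) is strictly lower-order. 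In the sub-exponential case an additional tail beyond radius $T$ contributes only $O(\opt)$ since $\|\hw\| = \Omega(1)$. Assembling everything,
\[
c_1\|\hw\|\phi^2 \;\le\; \epsopt + O\!\left(\phi C_\kappa/\|\hw\|\right) + O\!\left(\|\hw\|\phi\cdot \mathrm{noise}\right),
\]
with $\mathrm{noise} = \opt$ or $\opt\ln(1/\opt)$ respectively, and a three-way case analysis on which RHS term dominates yields the desired bound on $\phi$ and hence on $\cR_{0-1}(\hw)$.
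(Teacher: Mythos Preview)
Your proposal is correct and matches the paper's approach: your noise term, wedge term $A$, and density-perturbation term $B$ correspond to the paper's terms~\eqref{eq:log_diff_y_baru}, \eqref{eq:log_diff_approx_equiv_1}, and \eqref{eq:log_diff_approx_equiv_2}, bounded via \Cref{fact:noisy_label_main,fact:ground_truth_diff,fact:rotate_diff_main} and then combined through the same three-way case split together with \Cref{fact:angle_01}. One cosmetic difference is that the paper inserts $\bbE\sbr{\llog\del{|\langle\hw,x\rangle|}}$ as the intermediate quantity, so its analogue of $B$ carries the bounded integrand $\llog(|\cdot|)$ throughout and no separate wedge-region estimate for $B$ is required.
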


Next we give proof outlines of our results; the full proofs are given in
\Cref{app_sec:log_ub}.
For simplicity, here we focus only on the bounded case, while the sub-exponential
case will be handled in \Cref{app_sec:log_ub}.
Although the proofs of the two cases share some similarity, we want to emphasize
that the sub-exponential case does not follow by simply truncating the
distribution to a certain radius and reduce it to the bounded case.
The reason is that the truncation radius will be as large as $\sqrt{d}$, while for
the bounded case in \Cref{fact:log_ub}, $B$ is considered a constant independent of $d$ and hidden
in the $O$ notation; therefore this truncation argument will introduce a
$\poly(d)$ dependency in the final bound.
By contrast, our zero-one risk upper bounds only depend on
$\alpha_1$, $\alpha_2$, $U$ and $R$, but do not depend on $d$.

\subsection{Proof of \Cref{thm:exact-opt,fact:log_ub_opt}}\label{sec:log_ub_main}

Here we prove \Cref{thm:exact-opt,fact:log_ub_opt}; the details are given in
\Cref{app_sec:log_ub_main}.

We first prove \Cref{thm:exact-opt}.
Note that by \Cref{fact:log_ub}, it suffices to show that $\|w^*\|=\Omega\del{\frac{1}{\sqrt{\opt}}}$ (since $\epsopt=0$ in this case),
which is true due to the next result.
\begin{lemma}\label{fact:w*_norm_lb_main}
  Under \Cref{cond:well}, if $\|x\|\le B$ almost surely
  and $\opt<\frac{R^4}{200U^3B}$, then
  $\|w^*\|=\Omega\del[2]{\frac{1}{\sqrt{\opt}}}$.
\end{lemma}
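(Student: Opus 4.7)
The plan is to leverage the global optimality of $w^*$: for any reference vector $\bar w$, $\cRlog(w^*)\le\cRlog(\bar w)$. I will pick $\bar w$ to be a positive multiple of $\baru$ so that $\cRlog(\bar w) = O(\sqrt{\opt})$, and separately lower bound $\cRlog(w^*)$ by a decreasing function of $\|w^*\|$ that comes from Assumption~\ref{cond:well}. Combining these two forces $\|w^*\|$ to be $\Omega(1/\sqrt{\opt})$, as claimed.

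First I would construct the reference solution. Take $\bar w = s\baru$ for a scale $s$ to be optimized. Using the identity $\llog(-z) = z + \llog(z)$, I would split $\cRlog(\bar w)$ into a clean part (examples with $y = \sign\ip{\baru}{x}$) and a noisy part. The noisy part is at most $sB\opt+\opt\ln 2$ because $\|x\|\le B$. For the clean part, use $\llog(v)\le\min(\ln 2,e^{-v})$ together with a uniform upper bound of order $U$ on the $1$-dimensional marginal density of $\ip{\baru}{x}$; this density bound follows from Assumption~\ref{cond:well} by integrating $p_V(r,\theta)\le\sigma(r)$ along the orthogonal direction in a $2$-dimensional subspace containing $\baru$ and invoking $\int\sigma(r)\dif r\le U$. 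The clean part is then $O(U/s)$, and optimizing $s=\Theta(\sqrt{U/(B\opt)})$ yields $\cRlog(\bar w)=O(\sqrt{UB\opt})$.

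Next I would lower bound $\cRlog(w^*)$ in terms of $\rho:=\|w^*\|$. Since $\llog$ is decreasing, $\llog(y\ip{w^*}{x})\ge\llog(|\ip{w^*}{x}|)$, so with $u:=w^*/\rho$,
\[
\cRlog(w^*)\ge \bbE\sbr{\llog\del{\rho\,|\ip{u}{x}|}}.
\]
Fix a $2$-dimensional subspace $V$ containing $u$. By Assumption~\ref{cond:well}, $p_V(r,\theta)\ge 1/U$ for $r\le R$, so integrating over the orthogonal coordinate in $V$ gives that the $1$-dimensional marginal density of $\zeta:=\ip{u}{x}$ satisfies $p_1(\zeta)\ge\Omega(R/U)$ for $|\zeta|\le R/2$. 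After the change of variables $v=\rho\zeta$,
\[
\cRlog(w^*)\ge\frac{\Omega(R)}{U\rho}\int_0^{\rho R/2}\llog(v)\,\dif v.
\]
In the regime $\rho R\ge 2$, the integral is a positive constant, yielding $\cRlog(w^*)\ge\Omega(R/(U\rho))$; in the regime $\rho R<2$, monotonicity of $\llog$ instead gives $\cRlog(w^*)\ge\Omega(R^2/U)$.

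Finally I would combine the two bounds. By optimality of $w^*$ and the first step, the lower bound on $\cRlog(w^*)$ must be at most $O(\sqrt{UB\opt})$. The hypothesis $\opt<R^4/(200U^3B)$ is calibrated so that $R^2/U$ strictly exceeds this upper bound, which rules out the regime $\rho R<2$. Hence $\rho R\ge 2$, and the bound $\Omega(R/(U\rho))\le O(\sqrt{UB\opt})$ rearranges to $\|w^*\|\ge\Omega(1/\sqrt{\opt})$, treating $R,U,B$ as constants. The main obstacle in this plan is the $1$-dimensional density lower bound $p_1\gtrsim R/U$: unlike the upper bound on $p_1$, extracting a positive lower bound from the $2$-dimensional condition requires the orthogonal direction in $V$ to sweep out a region of positive $2$-dimensional density, and this is precisely where $R$ enters and determines the regime split on $\rho R$.
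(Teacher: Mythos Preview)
Your approach is essentially the same as the paper's: upper bound $\cRlog(w^*)$ by $\cRlog(s\baru)=O(\sqrt{UB\cdot\opt})$ for an optimized $s$, lower bound $\cRlog(w^*)$ by a function of $\|w^*\|$ using the density lower bound in \Cref{cond:well}, and combine. The paper packages these two halves as \Cref{fact:log_ub_ref_sol} and \Cref{fact:cR_lb} respectively, and both are proved by working directly in two-dimensional polar coordinates (via \Cref{fact:risk_ub_lb}) rather than passing to a one-dimensional marginal.

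One caution on your upper-bound step: the claimed uniform bound $p_1(\zeta)=O(U)$ on the one-dimensional marginal density of $\ip{\baru}{x}$ does not follow from \Cref{cond:well} as you sketch it. Integrating $p_V\le\sigma(r)$ along the line $x_1=\zeta$ gives $p_1(\zeta)\le 2\int_{|\zeta|}^\infty \sigma(r)\,\tfrac{r}{\sqrt{r^2-\zeta^2}}\,dr$, and the factor $r/\sqrt{r^2-\zeta^2}$ blows up near $r=|\zeta|$; since $\sigma$ is not assumed monotone, this can be arbitrarily large for $\zeta\ne 0$. The conclusion you actually need, $\bbE\sbr{\llog(s\,|\ip{\baru}{x}|)}=O(U/s)$, is nevertheless true (this is \Cref{fact:Rlog_inv_norm}), but the clean way to get it is to stay in two dimensions: bound $\int_0^{2\pi}\llog(sr|\cos\theta|)\,r\,d\theta=O(1/s)$ uniformly in $r$ and then integrate against $\sigma(r)$ using $\int_0^\infty\sigma(r)\,dr\le U$. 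Your lower-bound computation via the one-dimensional marginal is fine, since there you only need the density lower bound on the disk $r\le R$, where the orthogonal integration is over a bounded segment.
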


Next we prove \Cref{fact:log_ub_opt}.
Recall that given the target (zero-one) error $\epsilon\in(0,1)$, we run
projected gradient descent on a Euclidean ball with radius $1/\sqrt{\epsilon}$
(cf. \cref{eq:pgd}).
Using a standard optimization and generalization analysis, we can prove the
following guarantee on $\cRlog(w_t)$.
\begin{lemma}\label{fact:pgd_wt_baru_main}
  Let the target optimization error $\epsopt\in(0,1)$ and the failure
  probability $\delta\in(0,1/e)$ be given.
  If $\|x\|\le B$ almost surely, then with $\eta=4/B^2$, using
  $O\del{\frac{(B+1)^2\ln(1/\delta)}{\epsilon\epsopt^2}}$ samples and
  $O\del{\frac{B^2}{\epsilon\epsopt}}$ iterations, with probability $1-\delta$,
  projected gradient descent outputs $w_t$ satisfying
  \begin{align}\label{eq:pgd_excess_main}
    \cRlog(w_t)\le\min_{0\le\rho\le1/\sqrt{\epsilon}}\cRlog(\rho\baru)+\epsopt.
  \end{align}
\end{lemma}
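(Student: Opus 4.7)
The plan is to decompose the excess logistic risk into an optimization term and a generalization term, each at level $\epsopt/2$, and then invoke standard analyses for smooth convex minimization together with uniform convergence of a bounded Lipschitz loss class on the ball $\cB(1/\sqrt{\epsilon})$.

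First I would record smoothness of $\hRlog$. Since $\llog''(z)\le 1/4$ and $\|x_i\|\le B$ almost surely, each summand $w\mapsto\llog(y_i\langle w,x_i\rangle)$ has Hessian bounded in operator norm by $B^2/4$, so $\hRlog$ is $L$-smooth with $L=B^2/4$. The stated step size $\eta=4/B^2$ is precisely $1/L$, and $w_0=0$ lies in the convex constraint set $\cB(1/\sqrt\epsilon)$. The textbook projected-gradient-descent guarantee for smooth convex minimization on a convex set gives, for the last (or averaged) iterate,
$$\hRlog(w_T)-\min_{w\in\cB(1/\sqrt\epsilon)}\hRlog(w)\le\frac{L\,\|w_0-w_{\mathrm{emp}}^\star\|^2}{2T}\le\frac{B^2}{8\,\epsilon\,T},$$
using $\|w_0-w_{\mathrm{emp}}^\star\|^2\le 1/\epsilon$. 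Choosing $T=\Theta(B^2/(\epsilon\epsopt))$ drives this below $\epsopt/2$. Because $\rho\,\baru\in\cB(1/\sqrt\epsilon)$ for every $\rho\in[0,1/\sqrt\epsilon]$, this in particular yields
$$\hRlog(w_T)\le\min_{0\le\rho\le 1/\sqrt\epsilon}\hRlog(\rho\,\baru)+\tfrac{\epsopt}{2}.$$

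Next I would pass from the empirical to the population risk via uniform convergence on $\cB(1/\sqrt\epsilon)$. Any $w$ in the ball satisfies $\llog(y\langle w,x\rangle)\in[0,\ln 2+B/\sqrt\epsilon]$, and the map $w\mapsto\llog(y\langle w,x\rangle)$ is $B$-Lipschitz in $w$. The Rademacher complexity of $\{x\mapsto\langle w,x\rangle:\|w\|\le 1/\sqrt\epsilon\}$ is $O\del{B/\sqrt{\epsilon n}}$, so by the Ledoux--Talagrand contraction principle and McDiarmid's inequality with the range $\ln 2+B/\sqrt\epsilon$,
$$\sup_{w\in\cB(1/\sqrt\epsilon)}\envert{\hRlog(w)-\cRlog(w)}=O\del[3]{\frac{B+1}{\sqrt\epsilon}\sqrt{\frac{\ln(1/\delta)}{n}}}$$
with probability at least $1-\delta$. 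Setting $n=\Theta((B+1)^2\ln(1/\delta)/(\epsilon\epsopt^2))$ makes this supremum at most $\epsopt/4$, which matches the stated sample complexity.

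Combining the two bounds, applied once to $w_T$ and once to the comparator $\rho^\star\baru\in\cB(1/\sqrt\epsilon)$,
$$\cRlog(w_T)\le\hRlog(w_T)+\tfrac{\epsopt}{4}\le\min_{0\le\rho\le 1/\sqrt\epsilon}\hRlog(\rho\,\baru)+\tfrac{3\epsopt}{4}\le\min_{0\le\rho\le 1/\sqrt\epsilon}\cRlog(\rho\,\baru)+\epsopt,$$
which is \eqref{eq:pgd_excess_main}. The only mildly delicate point is tracking constants so that the bound picks up $(B+1)^2$ rather than $B^2$; this additive $1$ is unavoidable because the logistic loss is not centered at zero, and it is the reason the advertised sample complexity carries the factor $(B+1)^2$. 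Everything else is a routine smooth-convex optimization plus Rademacher/contraction computation, with no interaction with Assumptions~\ref{cond:well} or \ref{cond:rad}.
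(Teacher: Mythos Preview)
Your proposal is correct and matches the paper's own argument essentially line for line: the paper also splits into an optimization term (handled via $B^2/4$-smoothness of $\hRlog$ and the standard PGD rate, \Cref{fact:pgd_opt}) and a uniform-deviation term (handled via Rademacher complexity with contraction, citing \citep[Theorem 26.5, Lemmas 26.9--26.10]{understand_ml}, \Cref{fact:pgd_gen}), then reads off the sample and iteration counts.
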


We also need the following  lower bounds on $\|w_t\|$.
\begin{lemma}\label{fact:wt_norm_lb_main}
  Under \Cref{cond:well}, suppose
  \begin{align*}
    \epsilon<\min\cbr{\frac{R^4}{36U^2},\frac{R^4}{72^2U^4}}\quad\textup{and}\quad\epsopt\le\sqrt{\epsilon},
  \end{align*}
  and that \cref{eq:pgd_excess_main} holds.
  If $\|x\|\le B$ almost surely and $\opt<\frac{R^4}{500U^3B}$,
  then $\|w_t\|=\Omega\del[2]{\min\cbr{\frac{1}{\sqrt{\epsilon}},\frac{1}{\sqrt{\opt}}}}$.
\end{lemma}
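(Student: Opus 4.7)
Pick the comparator $\rho^*\baru$ with $\rho^*:=\min\{1/\sqrt{\epsilon},1/\sqrt{\opt}\}\le 1/\sqrt{\epsilon}$, which is feasible for the projected iterate, so \cref{eq:pgd_excess_main} gives $\cRlog(w_t)\le\cRlog(\rho^*\baru)+\epsopt$. I would first bound $\cRlog(\rho^*\baru)$ from above by splitting the population into its clean and noisy parts. On the noisy part (measure $\opt$), the inequality $\llog(-z)\le z+\ln 2$ and $\|x\|\le B$ yield a contribution of at most $\opt\rho^* B+\opt\ln 2$; with our choice of $\rho^*$ this is $O(1/\rho^*)$ in both regimes $\opt\ge\epsilon$ (noisy term $\lesssim\sqrt{\opt}$) and $\opt\le\epsilon$ (noisy term $\lesssim\opt/\sqrt{\epsilon}\le\sqrt{\epsilon}$). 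On the clean part, $\llog(z)\le e^{-z}$ for $z\ge 0$ reduces the contribution to $\mathbb{E}[e^{-\rho^*|\langle\baru,x\rangle|}]$; integrating out the 1D marginal of $\langle\baru,x\rangle$, whose density is controlled by projecting $p_V\le\sigma(r)$ from \Cref{cond:well} onto any 2D plane containing $\baru$, shows this is also $O(1/\rho^*)$. Hence $\cRlog(\rho^*\baru)=O(1/\rho^*)$.

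Next, I would lower bound $\cRlog(w_t)$ in terms of $\|w_t\|$ via an anti-concentration argument. Writing $e:=w_t/\|w_t\|$ and letting $V$ be the 2D subspace spanned by $e$ and any orthogonal direction, the bound $p_V(r,\theta)\ge 1/U$ for $r\le R$ combined with an integration in polar coordinates (the coordinate $\langle e,x\rangle=r\cos\theta$) yields $\Pr(|\langle e,x\rangle|\le t)\ge 2Rt/U$ for every $t\le R/2$. On the event $\{|\langle w_t,x\rangle|\le 1\}$ we have $\llog(y\langle w_t,x\rangle)\ge\llog(1)=\ln(1+e^{-1})$, so provided $\|w_t\|\ge 2/R$,
\begin{equation*}
\cRlog(w_t)\;\ge\;\ln(1+e^{-1})\cdot\frac{2R}{U\,\|w_t\|}.
\end{equation*}

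Combining this with the upper bound and using $\epsopt\le\sqrt{\epsilon}\le\max\{\sqrt{\epsilon},\sqrt{\opt}\}=1/\rho^*$ produces $1/\|w_t\|=O(1/\rho^*)$, that is $\|w_t\|=\Omega(\rho^*)=\Omega(\min\{1/\sqrt{\epsilon},1/\sqrt{\opt}\})$, as claimed. The corner case $\|w_t\|<2/R$ is ruled out because the same argument then forces $\cRlog(w_t)\ge\ln(1+e^{-1})\cdot R^2/U$, a positive constant, which contradicts $\cRlog(w_t)=O(\max\{\sqrt{\epsilon},\sqrt{\opt}\})$ under the assumed smallness $\epsilon<R^4/(36U^2)$ and $\opt<R^4/(500U^3B)$. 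The most delicate step is the clean-part upper bound: turning the 2D bound $p_V(r,\theta)\le\sigma(r)$ into a $O(1/\rho^*)$ bound on $\mathbb{E}[e^{-\rho^*|\langle\baru,x\rangle|}]$ requires a careful splitting of the radial integral (for instance over $r\in[|s|,2|s|]$ versus $r\ge 2|s|$) using both integrability conditions $\int\sigma\le U$ and $\int r\sigma\le U$; this is essentially the same estimate that underlies \Cref{fact:w*_norm_lb_main}, and once it is in hand the three ingredients assemble mechanically.
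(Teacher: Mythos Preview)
Your proposal is correct and follows essentially the same strategy as the paper: an upper bound $\cRlog(\rho\baru)=O(1/\rho)$ via clean/noisy splitting (the paper's \Cref{fact:log_ub_ref_sol}) combined with a lower bound $\cRlog(w)\gtrsim 1/\|w\|$ via anti-concentration in a 2D plane (the paper's \Cref{fact:cR_lb}). Your direct choice $\rho^*=\min\{1/\sqrt{\epsilon},1/\sqrt{\opt}\}$ in place of the paper's case split on whether $\bar\rho:=\sqrt{12U/(B\cdot\opt)}$ exceeds $1/\sqrt{\epsilon}$ is a cosmetic simplification, and your band-probability lower bound is equivalent to the paper's polar-coordinate integral.
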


Now to prove \Cref{fact:log_ub_opt}, we simply need to combine
\Cref{fact:log_ub,fact:pgd_wt_baru_main,fact:wt_norm_lb_main} with
$\epsopt=\epsilon^{3/2}$.

\subsection{Proof of \Cref{fact:log_ub}}\label{sec:log_ub_01}

Here we give a proof sketch of \Cref{fact:log_ub}; the details are given in
\Cref{app_sec:log_ub_01}.
One remark is that some of the lemmas in the proof are also true for the hinge
loss, and this fact will be crucial in the later discussion regarding our two-phase algorithm (cf. \Cref{sec:hinge}).

Let $\barw:=\|\hw\|\baru$, and consider $\ell\in\{\llog,\ell_h\}$.
The first step is to express $\cR_\ell(\hw)-\cR_\ell(\barw)$ as the sum of three
terms, and then bound them separately.
The first term is given by
\begin{align}
  \cR_\ell(\hw)-\cR_\ell(\barw)-\bbE\sbr[2]{\ell\del{\sign\del{\langle\barw,x\rangle}\langle\hw,x\rangle}-\ell\del{\sign\del{\langle\barw,x\rangle}\langle\barw,x\rangle}}, \label{eq:log_diff_y_baru}
\end{align}
the second term is given by
\begin{align}\label{eq:log_diff_approx_equiv_1}
  \bbE\sbr[2]{\ell\del{\sign\del{\langle\barw,x\rangle}\langle\hw,x\rangle}-\ell\del{\sign\del{\langle\hw,x\rangle}\langle\hw,x\rangle}},
\end{align}
and the third term is given by
\begin{align}\label{eq:log_diff_approx_equiv_2}
  \bbE\sbr[2]{\ell\del{\sign\del{\langle\hw,x\rangle}\langle\hw,x\rangle}-\ell\del{\sign\del{\langle\barw,x\rangle}\langle \barw,x\rangle}},
\end{align}
where the expectations are taken over $P_x$.

We first bound term~\eqref{eq:log_diff_y_baru}, which is the approximation error
of replacing the true label $y$ with the label given by $\baru$.
Since $\ell(-z)-\ell(z)=z$ for the logistic loss and hinge loss, it follows that
\begin{align*}
  \textup{term~\eqref{eq:log_diff_y_baru}}=\bbE\sbr{\1_{y\ne\sign\del{\langle\barw,x\rangle}}\cdot y \langle\barw-\hw,x\rangle}.
\end{align*}
The approximation error can be bounded as below, using the tail bound on $P_x$
and the fact $\cR_{0-1}(\barw)=\opt$.
\begin{lemma}\label{fact:noisy_label_main}
 For $\ell\in\{\llog,\ell_h\}$, if $\|x\|\le B$ almost surely,
  \begin{align*}
    \envert{\textup{term~\eqref{eq:log_diff_y_baru}}}\le B\|\barw-\hw\|\cdot\opt.
  \end{align*}
\end{lemma}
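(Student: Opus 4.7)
The plan is to bound the rewritten expression given in the paragraph preceding the lemma, namely
\[
\text{term~\eqref{eq:log_diff_y_baru}} = \bbE\sbr{\1_{y \ne \sign(\langle \barw, x\rangle)} \cdot y \langle \barw - \hw, x\rangle},
\]
using only generic properties of $y$ and $x$ together with the optimality of $\baru$. The key observation enabling the proof is that $\ell$ no longer appears once the rewriting is in place: it was eliminated via the identity $\ell(-z) - \ell(z) = z$, which holds for both the logistic loss and the hinge loss, so the same bound works uniformly for $\ell \in \{\llog, \ell_h\}$.

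First I would pull the absolute value inside the expectation via $|\bbE[Z]| \le \bbE[|Z|]$ and use $|y| = 1$ together with Cauchy-Schwarz on the inner product, obtaining
\[
\envert{\text{term~\eqref{eq:log_diff_y_baru}}} \le \bbE\sbr{\1_{y \ne \sign(\langle \barw, x\rangle)} \cdot \|\barw - \hw\| \cdot \|x\|}.
\]
Next I would apply the almost-sure bound $\|x\| \le B$ and factor out the deterministic quantities, leaving only the probability of the misclassification event:
\[
\envert{\text{term~\eqref{eq:log_diff_y_baru}}} \le B \|\barw - \hw\| \cdot \pr\del{y \ne \sign(\langle \barw, x\rangle)}.
\]
Finally, since $\barw = \|\hw\| \baru$ is a nonnegative scalar multiple of $\baru$, we have $\sign(\langle \barw, x\rangle) = \sign(\langle \baru, x\rangle)$ wherever the latter is nonzero, so the probability above equals $\cR_{0-1}(\baru) = \opt$. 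Chaining these inequalities yields the claimed bound $B \|\barw - \hw\| \cdot \opt$.

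There is no substantive obstacle: once the rewriting preceding the lemma is granted, the statement is a one-line consequence of Cauchy-Schwarz, $|y| = 1$, boundedness $\|x\| \le B$, and the definition of $\opt$. The only minor edge case is $\hw = 0$, which forces $\barw - \hw = 0$ and makes both sides of the inequality vanish, so the claim holds trivially in that case.
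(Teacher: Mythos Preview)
Your proposal is correct and essentially identical to the paper's proof. The paper packages the step $\bbE\sbr{\1_{y\ne\sign(\langle\barw,x\rangle)}\envert{\langle\barw-\hw,x\rangle}}\le B\|\barw-\hw\|\cdot\opt$ as an invocation of a separate technical lemma (\Cref{fact:opt_ip_bound}), but that lemma's bounded case is exactly your Cauchy--Schwarz-plus-boundedness argument written out inline.
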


Next we bound term~\eqref{eq:log_diff_approx_equiv_1}.
\begin{lemma}\label{fact:ground_truth_diff} Under \Cref{cond:well},  for $\ell\in\{\llog,\ell_h\}$,
  \begin{align*}
    \textup{term~\eqref{eq:log_diff_approx_equiv_1}}\ge \frac{4R^3}{3U\pi^2}\|\hw\|\varphi(\hw,\barw)^2.
  \end{align*}
\end{lemma}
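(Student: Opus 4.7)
The approach is to rewrite term~\eqref{eq:log_diff_approx_equiv_1} as an expectation of $|\langle\hw,x\rangle|$ over the set where $\hw$ and $\barw$ disagree in sign, then reduce to a planar integral and apply the density lower bound from \Cref{cond:well}. The driving observation is that both losses satisfy the identity $\ell(-t)-\ell(t)=t$ for all $t\in\R$ (a direct check: for $\llog$, $\ln(1+e^{t})-\ln(1+e^{-t})=t$; for $\ell_h$, $\max(t,0)-\max(-t,0)=t$). Applied with $t=|\langle\hw,x\rangle|=\sign(\langle\hw,x\rangle)\langle\hw,x\rangle$, this shows the integrand vanishes when $\sign(\langle\hw,x\rangle)=\sign(\langle\barw,x\rangle)$ and equals $|\langle\hw,x\rangle|$ otherwise, giving
\begin{align*}
\textup{term~\eqref{eq:log_diff_approx_equiv_1}}=\bbE\sbr[1]{|\langle\hw,x\rangle|\cdot\1_{\sign(\langle\barw,x\rangle)\ne\sign(\langle\hw,x\rangle)}}.
\end{align*}

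Next I would pass to the two-dimensional subspace $V:=\mathrm{span}(\hw,\barw)$. Both $\langle\hw,x\rangle$ and $\langle\barw,x\rangle$ depend only on the orthogonal projection of $x$ onto $V$, so the expectation becomes an integral against the projected density $p_V$. Using polar coordinates on $V$ with $\hw$ along $\theta=0$ and $\barw$ along $\theta=\phi$, where $\phi:=\varphi(\hw,\barw)\in[0,\pi]$, one has $\langle\hw,x\rangle=\|\hw\|r\cos\theta$, and the disagreement region is the union of two wedges $\theta\in(\pi/2,\pi/2+\phi)$ and $\theta\in(-\pi/2,-\pi/2+\phi)$, each of angular width $\phi$. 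Substituting $\theta=\pm\pi/2+\alpha$ for $\alpha\in(0,\phi)$ gives $|\cos\theta|=\sin\alpha$ in both wedges.

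The final step is to apply $p_V(r,\theta)\ge 1/U$ for $r\le R$ from \Cref{cond:well} and restrict the radial integral to $[0,R]$. Using the area element $r\,dr\,d\theta$, each wedge contributes at least
\begin{align*}
\int_0^R\!\!\int_0^\phi \|\hw\|\,r\sin\alpha\cdot\tfrac{1}{U}\cdot r\,d\alpha\,dr=\frac{\|\hw\|R^3}{3U}(1-\cos\phi),
\end{align*}
and combining the two wedges, together with the elementary inequality $1-\cos\phi\ge 2\phi^2/\pi^2$ on $[0,\pi]$ (both sides vanish at $\phi=0$ and agree at $\phi=\pi$, and a single second-derivative check separates them in between), yields the claimed $\frac{4R^3}{3U\pi^2}\|\hw\|\,\varphi(\hw,\barw)^2$.

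No step is genuinely difficult; the main thing to get right is the planar geometry, i.e.\ correctly identifying the two disagreement wedges and the pointwise formula for $|\langle\hw,x\rangle|$ on them. The only bookkeeping issue is the degenerate case $\varphi\in\{0,\pi\}$: at $\varphi=0$ the inequality is trivial, while at $\varphi=\pi$ one can pick any two-dimensional subspace containing $\hw$ and run the same computation, which only strengthens the bound. After that, the argument is a one-line trigonometric inequality together with a scalar integral, and since only the identity $\ell(-t)-\ell(t)=t$ is loss-specific, the same proof covers both $\llog$ and $\ell_h$.
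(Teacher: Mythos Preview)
Your proposal is correct and follows essentially the same route as the paper: both reduce term~\eqref{eq:log_diff_approx_equiv_1} to the planar integral of $|\langle\hw,x\rangle|$ over the two disagreement wedges via the identity $\ell(-t)-\ell(t)=t$, apply the density lower bound $p_V\ge1/U$ on $r\le R$, and finish with $1-\cos\varphi\ge 2\varphi^2/\pi^2$. The only cosmetic difference is that the paper places $\barw$ at angle $0$ and $\hw$ at angle $\varphi$ rather than the reverse, which does not affect the computation.
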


Lastly, we consider term~\eqref{eq:log_diff_approx_equiv_2}.
Note that it is $0$ for the hinge loss $\ell_h$, because $\ell_h(z)=0$ when
$z\ge0$.
For the logistic loss, term~\eqref{eq:log_diff_approx_equiv_2} is also $0$ if
$P_x$ is radially symmetric; in general, we will bound it using \Cref{cond:rad}.
\begin{lemma}\label{fact:rotate_diff_main}
  For $\ell=\ell_h$, term~\eqref{eq:log_diff_approx_equiv_2} is $0$.
  For $\ell=\llog$, under \Cref{cond:rad}, if $\|x\|\le B$ almost surely,
  then{}
  \begin{align*}
    \envert{\textup{term~\eqref{eq:log_diff_approx_equiv_2}}}\le12C_\kappa\cdot\varphi(\hw,\barw)/\|\hw\|,
  \end{align*}
  where $C_\kappa:=\int_0^B\kappa(r)\dif r$.
\end{lemma}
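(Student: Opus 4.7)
The hinge case is immediate: for every $z \in \bbR$, $\sign(z)z = |z| \ge 0$, and $\ell_h(t) = 0$ whenever $t \ge 0$, so both summands inside the expectation vanish pointwise. The remainder of my plan addresses the logistic case, where the key idea is to reduce to a 2D integral, perform an angular change of variable so that the rotation by $\varphi := \varphi(\hw, \barw)$ is transferred from the logistic loss onto the density $p_V$, and then invoke Assumption \ref{cond:rad}.

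I would let $V := \mathrm{span}(\hw, \baru)$ (the case $\hw \parallel \baru$ gives $\varphi = 0$ and the term is trivially $0$). Since the integrand depends on $x$ only through $P_V x$, the expectation reduces to an integral over $V$ with density $p_V$. Using polar coordinates $(r,\theta)$ on $V$ oriented so that $\baru$ sits at $\theta = 0$ and $\hw/\|\hw\|$ at $\theta = \varphi$, one gets $|\langle \barw, x\rangle| = \|\hw\| r |\cos\theta|$ and $|\langle \hw, x\rangle| = \|\hw\| r |\cos(\theta - \varphi)|$, so term \eqref{eq:log_diff_approx_equiv_2} equals
\[
\int_0^B \!\!\int_0^{2\pi} \bigl[\llog(\|\hw\| r |\cos(\theta-\varphi)|) - \llog(\|\hw\| r |\cos\theta|)\bigr]\, p_V(r,\theta)\, r\, \dif\theta\, \dif r,
\]
where truncation at $B$ uses that $p_V(r,\cdot) \equiv 0$ for $r > B$. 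Substituting $\theta \mapsto \theta + \varphi$ in the first summand (using $2\pi$-periodicity of $p_V$ and of $|\cos|$) converts the difference of logistic terms into a difference of densities, and Assumption \ref{cond:rad} then yields
\[
|\text{term \eqref{eq:log_diff_approx_equiv_2}}| \;\le\; \varphi \int_0^B r\, \kappa(r)\, I(\|\hw\| r)\, \dif r,
\qquad I(a) := \int_0^{2\pi} \llog(a|\cos\theta|)\, \dif\theta.
\]

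The remaining task is to show $I(a) \le \pi^3/(6a)$. I would exploit the symmetry of $|\cos\theta|$ to restrict to one quadrant, substitute $\beta = \pi/2 - \theta$ to replace $\cos\theta$ with $\sin\beta$, then use $\sin\beta \ge (2/\pi)\beta$ on $[0,\pi/2]$ together with monotonicity of $\llog$ to obtain $I(a) \le (2\pi/a) \int_0^a \llog(u)\, \dif u$ after rescaling $u = (2a/\pi)\beta$. Finally I would bound $\int_0^a \llog \le \int_0^\infty \ln(1+e^{-u})\, \dif u = \pi^2/12$ (by expanding the log and summing $\sum_k (-1)^{k+1}/k^2$), giving $I(a) \le \pi^3/(6a)$. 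Feeding this back, the factor $r$ in the Lebesgue element cancels the $1/r$ in $I(\|\hw\|r)$, leaving $\pi^3 \varphi C_\kappa/(6\|\hw\|)$, which is at most $12\varphi C_\kappa/\|\hw\|$ since $\pi^3/6 < 12$.

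The main obstacle I anticipate is precisely the inverse-linear bound $I(a) \le C/a$. A crude pointwise bound $\llog \le \ln 2$ only gives $I(a) \le 2\pi \ln 2$, which after integration yields a final bound proportional to $B$ rather than to $1/\|\hw\|$ — useless for large $\|\hw\|$, which is the regime of interest. The substitution into the finite integral $\int_0^\infty \llog = \pi^2/12$ is what handles both the small-$a$ regime (where $\llog$ is bounded) and the large-$a$ regime (where exponential decay kicks in) uniformly, producing the $1/a$ factor that is essential for the cancellation with $r$ and for the final $C_\kappa/\|\hw\|$ scaling.
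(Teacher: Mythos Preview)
Your proposal is correct and follows essentially the same route as the paper: reduce to the 2D subspace spanned by $\hw$ and $\barw$, shift the angular variable to move the rotation onto the density, apply \Cref{cond:rad}, and then show $\int_0^{2\pi}\llog(a|\cos\theta|)\,\dif\theta = O(1/a)$. The only difference is in how that last estimate is obtained: the paper invokes its \Cref{fact:risk_ub_lb} (using $\llog(z)\le e^{-z}$ and a doubling trick on $[\pi/4,\pi/2]$) to get the constant $8\sqrt{2}$, whereas you use $\sin\beta\ge(2/\pi)\beta$ together with $\int_0^\infty\llog=\pi^2/12$ to get the slightly sharper constant $\pi^3/6$; both are below $12$ and the argument is otherwise identical.
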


Now we are ready to prove \Cref{fact:log_ub}.
For simplicity, here we let $\varphi$ denote $\varphi(\hw,\barw)$.
For bounded distributions,
\Cref{fact:noisy_label_main,fact:ground_truth_diff,fact:rotate_diff_main} imply
\begin{align*}
  C_1\|\hw\|\varphi^2\le\epsopt+B\|\barw-\hw\|\cdot\opt+C_2C_\kappa\cdot\varphi/\|\hw\|\le\epsopt+B\|\hw\|\varphi\cdot\opt+C_2C_\kappa\cdot\varphi/\|\hw\|,
\end{align*}
where $C_1=4R^3/(3U\pi^2)$ and $C_2=12$.
It follows that at least one of the following three cases is true:
\begin{enumerate}
  \item $C_1\|\hw\|\varphi^2\le3\epsopt$, which implies
 $\varphi=O\del[1]{\sqrt{\epsopt/\|\hw\|}}$;

 \item $C_1\|\hw\|\varphi^2\le3B\|\hw\|\varphi\cdot\opt$, and it follows that
 $\varphi=O(\opt)$;

 \item $C_1\|\hw\|\varphi^2\le3C_2C_\kappa\cdot\varphi/\|\hw\|$, and it follows
 that $\varphi=O\del{C_\kappa/\|\hw\|^2}$.
\end{enumerate}
This finishes the proof of \Cref{fact:log_ub} for $\hw$, in light of
\citep[Claim 3.4]{diakonikolas_adv_noise} which is stated below.
\begin{lemma}\label{fact:angle_01}
  Under \Cref{cond:well},
  \begin{align*}
    \cR_{0-1}(\hw)-\cR_{0-1}(\barw)\le\pr\del{\sign\del{\langle\hw,x\rangle}\ne\sign\del{\langle\barw,x\rangle}}\le2U\varphi(\hw,\barw).
  \end{align*}
\end{lemma}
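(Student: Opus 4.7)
The plan is to prove the two inequalities separately. For the first inequality, I would use a simple decomposition of the zero-one error of $\hw$ according to whether $\sign\langle\hw,x\rangle$ agrees with $\sign\langle\barw,x\rangle$. On the agreement set the errors of $\hw$ and $\barw$ coincide, so any additional error of $\hw$ can only occur on the disagreement set; that is,
\begin{align*}
\cR_{0-1}(\hw)
&=\pr\del{y\ne\sign\langle\hw,x\rangle,\,\sign\langle\hw,x\rangle=\sign\langle\barw,x\rangle}
+\pr\del{y\ne\sign\langle\hw,x\rangle,\,\sign\langle\hw,x\rangle\ne\sign\langle\barw,x\rangle}\\
&\le\cR_{0-1}(\barw)+\pr\del{\sign\langle\hw,x\rangle\ne\sign\langle\barw,x\rangle}.
\end{align*}

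For the second inequality, I would exploit the fact that \Cref{cond:well} applies to the projection of $P_x$ onto any two-dimensional subspace. Let $V$ be the (at most) two-dimensional subspace spanned by $\hw$ and $\barw$. Since both vectors lie in $V$, the inner products $\langle\hw,x\rangle$ and $\langle\barw,x\rangle$ depend only on the projection of $x$ onto $V$, and hence the disagreement event is determined entirely by this projection. In polar coordinates $(r,\theta)$ on $V$, the set where $\sign\langle\hw,x\rangle\ne\sign\langle\barw,x\rangle$ is the union of two opposite wedges meeting at the origin, each with angular opening equal to $\varphi(\hw,\barw)$, bounded by the two lines orthogonal to $\hw$ and $\barw$ in $V$. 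Thus, using the Jacobian $r\,dr\,d\theta$ and the density upper bound $p_V(r,\theta)\le\sigma(r)$ from \Cref{cond:well},
\begin{align*}
\pr\del{\sign\langle\hw,x\rangle\ne\sign\langle\barw,x\rangle}
=\int_0^\infty\int_{\text{wedges}}p_V(r,\theta)\,r\,d\theta\,dr
\le 2\varphi(\hw,\barw)\int_0^\infty r\,\sigma(r)\,dr
\le 2U\,\varphi(\hw,\barw),
\end{align*}
where the last inequality uses $\int_0^\infty r\sigma(r)\,dr\le U$ from \Cref{cond:well}.

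I do not anticipate any serious obstacle here; both steps are essentially bookkeeping. The one point worth being careful about is the computation of the angular measure of the disagreement region: the two halfspaces $\{\langle\hw,x\rangle\ge 0\}$ and $\{\langle\barw,x\rangle\ge 0\}$ project (within $V$) to halfplanes whose boundary lines meet at the origin at an angle exactly $\varphi(\hw,\barw)$, producing two opposite wedges of that opening and hence total angular mass $2\varphi(\hw,\barw)$. Everything else is a direct application of \Cref{cond:well}, and the argument does not require \Cref{cond:rad}, the sub-exponential condition, or any bound on $\|x\|$.
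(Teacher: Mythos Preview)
Your proposal is correct and follows essentially the same approach as the paper. The paper's proof only writes out the second inequality---integrating the density bound $p_V(r,\theta)\le\sigma(r)$ over the two wedges of total angular width $2\varphi(\hw,\barw)$ and applying $\int_0^\infty r\sigma(r)\,dr\le U$---and leaves the first inequality implicit; your decomposition argument for it is standard and correct.
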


\subsection{Recovering the general $\sqrt{\opt}$ bound}\label{sec:log_ub_add}

\citet{frei_soft_margin} showed an $\widetilde{O}\del[1]{\sqrt{\opt}}$ upper
bound under the ``soft-margin'' and ``sub-exponential'' conditions.
Here we give an alternative proof of this result using our proof technique. The result in this section will later serve as a guarantee of the first phase of our two-phase algorithm (cf. \Cref{sec:hinge}) that achieves
$\widetilde{O}(\opt)$ risk.

Recall that the only place we need \Cref{cond:rad} is in the proof of
\Cref{fact:rotate_diff_main}.
However, even without \Cref{cond:rad}, we can still prove the following general
bound which only needs \Cref{cond:well}.
\begin{lemma}\label{fact:rotate_diff_well}
  Under \Cref{cond:well}, for $\ell=\llog$,
  \begin{align*}
    \envert{\textup{term~\eqref{eq:log_diff_approx_equiv_2}}}\le \frac{12U}{\|\hw\|}.
  \end{align*}
\end{lemma}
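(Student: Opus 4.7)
The plan is to bound $I(\hw) := \bbE[\llog(|\langle\hw,x\rangle|)]$ and $I(\barw) := \bbE[\llog(|\langle\barw,x\rangle|)]$ separately, then exploit nonnegativity. Since $\sign(z)\cdot z = |z|$, term~\eqref{eq:log_diff_approx_equiv_2} equals $I(\hw) - I(\barw)$, and as both quantities are nonnegative we have $|I(\hw) - I(\barw)| \le \max\{I(\hw), I(\barw)\}$. Recalling that $\|\barw\| = \|\hw\|$, it therefore suffices to show $I(w) \le 2\pi U/\|w\|$ for every nonzero $w$, which lies comfortably below the claimed $12U/\|\hw\|$.

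To bound $I(w)$, I would project onto the two-dimensional subspace $V$ containing $w$ and work in polar coordinates $(r,\theta)$ on $V$ with $w$ aligned to $\theta = 0$, so that $|\langle w, x\rangle| = \|w\| r |\cos\theta|$. Assumption~\ref{cond:well} then provides the uniform bound $p_V(r,\theta) \le \sigma(r)$ together with the radial integrability $\int_0^\infty \sigma(r)\,dr \le U$. The heart of the argument is the angular estimate
\[
\int_0^{2\pi} \llog(\|w\|r|\cos\theta|)\,d\theta \le \frac{2\pi}{\|w\|r},
\]
which I would derive from the pointwise inequality $\llog(z) \le e^{-z}$ (for $z \ge 0$) combined with the chord lower bound $\cos\theta \ge 1 - 2\theta/\pi$ on $[0,\pi/2]$ (valid by concavity) together with the symmetry $|\cos(\pi - \theta)| = |\cos\theta|$; this reduces the angular integral to an explicit exponential that evaluates to $2\pi(1-e^{-\|w\|r})/(\|w\|r)$.

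The final assembly step integrates against the radial density: $I(w) \le \int_0^\infty \sigma(r)\, r \cdot 2\pi/(\|w\|r)\,dr = (2\pi/\|w\|)\int_0^\infty \sigma(r)\,dr \le 2\pi U/\|w\|$. The key structural point is the cancellation between the $r$ coming from the Lebesgue area element $r\,dr\,d\theta$ and the $1/r$ coming from the angular integral, which is exactly why the bound depends on the integrability $\int \sigma(r)\,dr \le U$ (rather than $\int r\sigma(r)\,dr \le U$) in \Cref{cond:well}. The main obstacle, which the exponential pointwise bound resolves, is that without \Cref{cond:rad} we cannot obtain any cancellation between $I(\hw)$ and $I(\barw)$ via an angular shift as in \Cref{fact:rotate_diff_main}; all of the $1/\|\hw\|$ decay must instead come from the concentration of the logistic loss inside the thin angular sector of width $\Theta(1/(\|w\|r))$ around the two directions perpendicular to $w$.
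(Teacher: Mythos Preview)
Your proposal is correct and follows essentially the same approach as the paper: both bound $|I(\hw)-I(\barw)|$ by $\max\{I(\hw),I(\barw)\}$ and then control $\bbE[\llog(|\langle w,x\rangle|)]$ via a two-dimensional projection, polar coordinates, the density bound $p_V\le\sigma$, and an angular integral estimate (packaged in the paper as \Cref{fact:Rlog_inv_norm} together with \Cref{fact:risk_ub_lb}). Your chord bound $\cos\theta\ge 1-2\theta/\pi$ for the angular integral is a clean alternative to the paper's $\sin\theta\ge\sqrt{2}/2$ substitution on $[\pi/4,\pi/2]$ and even yields a slightly better constant ($2\pi$ versus $8\sqrt{2}$).
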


Now with \Cref{fact:rotate_diff_well}, we can prove a weaker but more general
version of \Cref{fact:log_ub} (cf. \Cref{fact:log_ub_well}).
Further invoking \Cref{fact:pgd_wt_baru_main,fact:wt_norm_lb_main}
(cf. \Cref{fact:pgd_wt_baru,fact:wt_norm_lb} for the corresponding
sub-exponential results), and let $\epsopt=\sqrt{\epsilon}$, we can show the next
result.
We present the bound in terms of the angle instead of zero-one risk for later
application in \Cref{sec:hinge}.
\begin{lemma}\label{fact:log_ub_opt_well}
  Given the target error $\epsilon\in(0,1)$ and the failure probability
  $\delta\in(0,1/e)$, consider projected gradient descent \cref{eq:pgd}.
  If $\|x\|\le B$ almost surely,
  then with $\eta=4/B^2$, using $O\del{\frac{(B+1)^2\ln(1/\delta)}{\epsilon^2}}$ samples
  and $O\del{\frac{B^2}{\epsilon^{3/2}}}$ iterations, with probability
  $1-\delta$, projected gradient descent outputs $w_t$ with
  \begin{align*}
    \varphi(w_t,\baru)=O\del[1]{\sqrt{\opt+\epsilon}}.
  \end{align*}

  On the other hand, if $P_x$ is $(\alpha_1,\alpha_2)$-sub-exponential,
  then with
  $\eta=\widetilde{\Theta}(1/d)$, using
  $\widetilde{O}\del{\frac{d\ln(1/\delta)^3}{\epsilon^2}}$ samples and
  $\widetilde{O}\del{\frac{d\ln(1/\delta)^2}{\epsilon^{3/2}}}$ iterations, with
  probability $1-\delta$, projected gradient descent outputs $w_t$ with
  \begin{align*}
    \varphi(w_t,\baru)=O\del[1]{\sqrt{\opt\cdot\ln(1/\opt)+\epsilon}}.
  \end{align*}
\end{lemma}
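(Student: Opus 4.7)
The plan is to mirror the proof of \Cref{fact:log_ub} exactly, replacing the use of \Cref{fact:rotate_diff_main} (which needed \Cref{cond:rad}) by the weaker but unconditional \Cref{fact:rotate_diff_well}. Concretely, let $\hw = w_t$, $\barw := \|w_t\|\baru$, and $\varphi := \varphi(w_t, \baru) = \varphi(w_t, \barw)$. Combining \Cref{fact:noisy_label_main}, \Cref{fact:ground_truth_diff}, and \Cref{fact:rotate_diff_well} with $\cRlog(w_t) - \cRlog(\barw) \le \epsopt$ yields, for some constant $C_1 = 4R^3/(3U\pi^2)$,
\begin{align*}
  C_1\|w_t\|\varphi^2 \le \epsopt + B\|w_t\|\varphi\cdot\opt + \frac{12U}{\|w_t\|}.
\end{align*}
Thus one of three cases must hold: (i) $\varphi = O\del{\sqrt{\epsopt/\|w_t\|}}$, (ii) $\varphi = O(\opt)$, or (iii) $\varphi = O(1/\|w_t\|)$. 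This is precisely the analog \Cref{fact:log_ub_well} that the excerpt hints at.

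Next I would invoke the optimization guarantee. For the bounded case, \Cref{fact:pgd_wt_baru_main} applied with $\epsopt = \sqrt{\epsilon}$ gives the claimed iteration/sample counts and ensures \cref{eq:pgd_excess_main}. The norm lower bound \Cref{fact:wt_norm_lb_main} then gives $\|w_t\| = \Omega\del{\min\cbr{1/\sqrt{\epsilon}, 1/\sqrt{\opt}}}$; note the prerequisite $\epsopt \le \sqrt{\epsilon}$ is satisfied with equality. Plugging these norm bounds into the three cases, case (i) yields $\varphi^2 = O\del{\sqrt{\epsilon}/\|w_t\|} = O(\sqrt{\epsilon}\cdot\max\{\sqrt{\epsilon}, \sqrt{\opt}\}) = O(\epsilon + \opt)$ (using AM-GM on the cross term), case (ii) yields $\varphi = O(\opt) = O(\sqrt{\opt+\epsilon})$, and case (iii) yields $\varphi^2 = O(1/\|w_t\|^2) = O(\opt + \epsilon)$. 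Taking the max of the three gives $\varphi(w_t, \baru) = O(\sqrt{\opt + \epsilon})$.

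For the sub-exponential case, I would repeat the same argument but invoke the corresponding sub-exponential versions of the optimization lemma (\Cref{fact:pgd_wt_baru}) and norm lower bound (\Cref{fact:wt_norm_lb}) referenced in the excerpt. The main differences are: (a) the tail bound on $P_x$ introduces an extra $\ln(1/\opt)$ factor in the analog of \Cref{fact:noisy_label_main}, so the ``noisy label'' term becomes $O(\opt \ln(1/\opt))$ rather than $O(\opt)$; and (b) the sample complexity and step size pick up the standard $d$ and $\ln(1/\delta)$ factors from the sub-exponential concentration. Otherwise the three-case analysis is identical and yields $\varphi(w_t, \baru) = O\del[1]{\sqrt{\opt\cdot\ln(1/\opt)+\epsilon}}$.

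I do not expect any real obstacle: the decomposition in \Cref{sec:log_ub_01} is already set up so that term~\eqref{eq:log_diff_approx_equiv_2} is the only place \Cref{cond:rad} enters, and \Cref{fact:rotate_diff_well} substitutes cleanly. The only mildly delicate step is bookkeeping the case $\opt > \epsilon$ versus $\opt \le \epsilon$ when converting the $\min\cbr{1/\sqrt{\epsilon}, 1/\sqrt{\opt}}$ norm lower bound into a single $O(\sqrt{\opt + \epsilon})$ bound on $\varphi$, but this is a routine AM-GM step once the three cases are written out.
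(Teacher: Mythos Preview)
Your proposal is correct and follows essentially the same approach as the paper: derive the weaker three-case bound (the paper's \Cref{fact:log_ub_well}) by substituting \Cref{fact:rotate_diff_well} for \Cref{fact:rotate_diff_main}, then apply \Cref{fact:pgd_wt_baru_main} and \Cref{fact:wt_norm_lb_main} with $\epsopt=\sqrt{\epsilon}$ and combine. The only cosmetic difference is that the paper splits into the two cases $\epsilon\le\opt$ and $\epsilon\ge\opt$ explicitly rather than invoking an AM-GM-style bound, and it also notes at the outset that the conclusion is vacuous when $\epsilon$ or $\opt$ is too large for the hypotheses of \Cref{fact:wt_norm_lb_main}.
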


The proofs of the results above are given in \Cref{app_sec:log_ub_add}.

\section{An $\widetilde{O}(\opt)$ upper bound with hinge loss}\label{sec:hinge}

We now show how to avoid \Cref{cond:rad} and achieve an $\widetilde{O}(\opt)$ zero-one risk bound using an extra step of hinge loss minimization. The key observation here is that the only place where \Cref{cond:rad} is used is in \Cref{fact:rotate_diff_main} for bounding term~\eqref{eq:log_diff_approx_equiv_2} for logistic loss. However, as noted in \Cref{fact:rotate_diff_main}, for hinge loss, term~\eqref{eq:log_diff_approx_equiv_2} is conveniently $0$. So a version of \Cref{fact:log_ub} holds for hinge loss, without using \Cref{cond:rad}, and dropping the third term of $\tfrac{C_\kappa}{\|\hw\|^2}$ in the max. Thus, to get an $\widetilde{O}(\opt)$ upper bound, we need to minimize the hinge loss to find a solution $\hw$ such that $\|\hw\| = \Omega(1)$ and $\cR_h(\hw)\le\cR_h(\|\hw\|\baru)+\epsopt$ for some $\epsopt = \widetilde{O}((\opt+\epsilon)^2)$. Unfortunately the requirement $\|\hw\| = \Omega(1)$ is non-convex. However, we can bypass the non-convexity by leveraging the solution of the logistic regression problem, which is guaranteed to make an angle of at most $\widetilde{O}(\sqrt{\opt+\epsilon})$ with $\baru$, even without \Cref{cond:rad}, by \Cref{fact:log_ub_opt_well}. This solution, represented by a unit vector $v$, gives us a ``warm start'' for hinge loss minimization. Specifically, suppose we optimize the hinge loss over the halfspace
\begin{equation} \label{eq:hinge-loss-domain}
  \cD:=\cbr{w\in\R^d\middle|\langle w,v\rangle\ge1},
\end{equation}
then any solution we find must have norm at least $1$. Furthermore, using the fact that $\varphi(v,\baru) \leq \widetilde{O}(\sqrt{\opt+\epsilon})$ and the positive homogeneity of the hinge loss, we can also conclude that the optimizer of the hinge loss satisfies $\cR_h(\hw)\le\cR_h(\|\hw\|\baru)+\epsopt$, giving us the desired solution.

While the above analysis does yield a simple two-phase polynomial time algorithm for getting an $\widetilde{O}(\opt)$ zero-one risk bound, closer analysis reveals a sample complexity requirement of $\widetilde{O}(1/\epsilon^4)$. We can improve the sample complexity requirement to $\widetilde{O}(1/\epsilon^2)$ by doing a custom analysis of SGD on the hinge loss (aka perceptron, \citep{novikoff_perceptron}) inspired by the above considerations. Thus we get the following two-phase algorithm\footnote{Note that the parameters $\eta$, $T$, etc. in this section are all chosen for the second phase.}:
\begin{enumerate}
  \item Run projected gradient descent under the settings of
  \Cref{fact:log_ub_opt_well}, and find a unit vector $v$ such that
  $\varphi(v,\baru)$ is $O\del[1]{\sqrt{\opt+\epsilon}}$ for bounded
  distributions, or $O\del[1]{\sqrt{\opt\cdot\ln(1/\opt)+\epsilon}}$ for
  sub-exponential distributions.

  \item
  Run projected SGD over the domain $\cD$ defined in \cref{eq:hinge-loss-domain} starting from $w_0 := v$: at step $t$, we sample
  $(x_t,y_t)\sim P$, and let
  \begin{align}\label{eq:sgd_hinge}
    w_{t+1}:=\Pi_{\cD}\sbr{w_t-\eta\ell'_h\del{y_t \langle w_t,x_t\rangle}y_tx_t}.
  \end{align}
  where we make the convention that $\ell'_h(0)=-1$.
\end{enumerate}

We show the following result on the expectation; it can also be turned into a
high-probability bound by probability amplification by repetition.
\begin{theorem}\label{fact:hinge_sgd}
  Given the target error $\epsilon\in(0,1/e)$, suppose \Cref{cond:well} holds.
  \begin{enumerate}
    \item First, for bounded distributions, with $\eta=\Theta(\epsilon)$, for
    all $T=\Omega(1/\epsilon^2)$,
    \begin{align*}
      \bbE\sbr{\min_{0\le t<T}\cR_{0-1}(w_t)}=O(\opt+\epsilon).
    \end{align*}

    \item On the other hand, for sub-exponential distributions, with
    $\eta=\Theta\del{\frac{\epsilon}{d\ln(d/\epsilon)^2}}$, for all
    $T=\Omega\del{\frac{d\ln(d/\epsilon)^2}{\epsilon^2}}$,
    \begin{align*}
      \bbE\sbr{\min_{0\le t<T}\cR_{0-1}(w_t)}=O(\opt\cdot\ln(1/\opt)+\epsilon).
    \end{align*}
  \end{enumerate}
\end{theorem}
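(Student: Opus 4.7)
Let $v$ be the unit warm-start produced by Phase 1, so that $\varphi(v,\baru) = \widetilde{O}(\sqrt{\opt+\epsilon})$ in the bounded case and $\widetilde{O}(\sqrt{\opt\ln(1/\opt)+\epsilon})$ in the sub-exponential case by \Cref{fact:log_ub_opt_well}. I would take the SGD competitor to be $u^* := \baru/\cos\varphi(v,\baru)$, which lies on the boundary of $\cD$: then $\rho^* := \|u^*\| = 1 + \widetilde{O}(\opt+\epsilon)$ and, by elementary trigonometry, $\|v - u^*\|^2 = \tan^2\varphi(v,\baru) = \widetilde{O}(\opt+\epsilon)$. As a side calculation I would bound $\cR_h(\baru) \le \bbE\sbr{|\langle\baru,x\rangle|\,\1\cbr{y\ne\sign\langle\baru,x\rangle}}$ by $B\opt$ in the bounded case, and, via truncation at radius $\alpha_2\ln(1/\opt)$ combined with the tail inequality of \Cref{def:sub_exp}, by $O(\opt\ln(1/\opt))$ in the sub-exponential case. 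Positive homogeneity then gives $\cR_h(u^*) = \rho^*\cR_h(\baru)$ of the same order as $\cR_h(\baru)$.

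Next I would run the standard projected-SGD telescoping of $\|w_{t+1}-u^*\|^2$ with the fixed competitor $u^*\in\cD$. Using convexity of $\cR_h$, the conditional unbiasedness $\bbE[g_t\mid w_t] = \nabla\cR_h(w_t)$, and non-expansiveness of $\Pi_\cD$, I obtain
\begin{align*}
  \bbE\sbr{\frac{1}{T}\sum_{t=0}^{T-1}\cR_h(w_t)} - \cR_h(u^*) \le \frac{\|v-u^*\|^2}{2\eta T} + \frac{\eta}{2T}\sum_{t=0}^{T-1}\bbE\|g_t\|^2.
\end{align*}
The perceptron-like structure of the update is essential: since $\ell_h'(\cdot)\in\{-1,0\}$, the stochastic subgradient $g_t = \ell_h'(y_t\langle w_t,x_t\rangle)y_tx_t$ vanishes except on the mistake event $\{y_t\langle w_t,x_t\rangle\le 0\}$. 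Hence $\bbE\|g_t\|^2 \le B^2\bbE[\cR_{0-1}(w_t)]$ in the bounded case, and an $\widetilde{O}(d)\bbE[\cR_{0-1}(w_t)]$ analog in the sub-exponential case (high-probability truncation at $\|x\|=\widetilde{O}(\sqrt{d\ln T})$ plus the per-coordinate second moment $O(\alpha_2^2)$ implied by \Cref{def:sub_exp}). Crucially, the average mistake rate equals $\bbE[M_T]/T = (1/T)\sum_t\bbE[\cR_{0-1}(w_t)]$, which already upper bounds $\bbE[\min_t\cR_{0-1}(w_t)]$, so controlling the average mistake rate is enough.

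To close the loop, I would apply the hinge version of \Cref{fact:log_ub} to each iterate: the $C_\kappa/\|\hw\|^2$ term drops because term~\eqref{eq:log_diff_approx_equiv_2} vanishes for the hinge loss by \Cref{fact:rotate_diff_main}. With $\epsopt_t := (\cR_h(w_t)-\|w_t\|\cR_h(\baru))_+$ and $\|w_t\|\ge 1$ (since $w_t\in\cD$), this gives $\cR_{0-1}(w_t) \le O(\opt + \sqrt{\epsopt_t})$. The principal technical obstacle, flagged in the passage preceding the theorem, is achieving $O(\opt+\epsilon)$ rather than the $O(\opt+\sqrt{\epsilon})$ rate (with its $\widetilde{O}(1/\epsilon^4)$ sample complexity) given by a black-box composition of SGD regret with \Cref{fact:log_ub}. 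The custom analysis exploits the self-bounding inequality $\sum_t\bbE\|g_t\|^2\le B^2\bbE[M_T]$, which makes the noise term in the SGD regret scale with the very mistake count we aim to bound. Coupling this with the perceptron progress inequality $\bbE\langle w_T,u^*\rangle\ge 1-\eta T\cR_h(u^*)$, obtained from $y_t\langle u^*,x_t\rangle\ge -\ell_h(y_t\langle u^*,x_t\rangle)$, together with Cauchy-Schwarz $\bbE\langle w_T,u^*\rangle\le\rho^*\sqrt{1+\eta^2 B^2\bbE[M_T]}$ via Jensen, yields a self-referential inequality in $\bbE[M_T]$. Solving it at the stated $\eta=\Theta(\epsilon)$, $T=\Omega(1/\epsilon^2)$ gives $\bbE[M_T]/T = O(\opt+\epsilon)$; the sub-exponential version absorbs the additional $\polylog(d/\epsilon)$ factors of the truncation into the smaller step size and correspondingly larger iteration count.
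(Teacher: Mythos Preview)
Your setup is right (competitor $u^*=\barr\baru$, $\|v-u^*\|^2=\widetilde O(\opt+\epsilon)$, $\cR_h(\baru)=\widetilde O(\opt)$, and the self-bounding observation $\bbE\|g_t\|^2\le B^2\bbE[\cR_{0-1}(w_t)]$), and you correctly identify that black-box SGD regret plus \Cref{fact:log_ub} only yields $O(\opt+\sqrt{\epsilon})$. But the Novikoff-style closure you propose does not work. Your ``perceptron progress inequality'' $\bbE\langle w_T,u^*\rangle\ge 1-\eta T\cR_h(u^*)$ contains no $M_T$; the only appearance of $M_T$ is in the \emph{upper} bound $\rho^*\sqrt{1+\eta^2B^2\bbE[M_T]}$. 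Combining a lower bound free of $M_T$ with an upper bound increasing in $M_T$ yields a \emph{lower} bound on $\bbE[M_T]$, not an upper bound. The classical Novikoff argument needs positive margin so that each mistake contributes $+\eta\gamma$ to $\langle w_T,u^*\rangle$; in the agnostic setting that progress term can be negative, and your replacement $y_t\langle u^*,x_t\rangle\ge-\ell_h(y_t\langle u^*,x_t\rangle)$ discards exactly the $M_T$-dependence you need. Separately, the bound $\|w_T\|^2\le 1+\eta^2B^2M_T$ relies on the cross term $2\eta\,\1_{\mathrm{mistake}}\,y_t\langle w_t,x_t\rangle\le 0$ \emph{and} on the update being projection-free; projecting onto the unbounded halfspace $\cD$ can strictly increase $\|w\|$, so this norm bound fails.

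The paper's custom analysis closes the loop differently, without Novikoff. It keeps your telescoping of $\|w_t-\barr\baru\|^2$ and your self-bounding $\eta^2B^2\cM(w_t)$ term, but lower-bounds the drift directly in terms of the angle: from \Cref{fact:hinge_ub_main} (which packages \Cref{fact:noisy_label_main,fact:ground_truth_diff,fact:rotate_diff_main} for hinge), under the harmless assumption $\varphi_t\ge C\opt$ one gets $\cR_h(w_t)-\cR_h(\barr\baru)\ge C_1\varphi_t^2-O((\opt+\epsilon)^2)$. Simultaneously $\cM(w_t)=\cR_{0-1}(w_t)\le\opt+2U\varphi_t=O(\varphi_t)$ by \Cref{fact:angle_01}, so with $\eta=\Theta(\epsilon)$ and the further harmless assumption $\varphi_t\ge\epsilon$ the noise term obeys $\eta B^2\cM(w_t)\le C_1\varphi_t^2$. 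The telescoped inequality then reads
\[
\bbE\|w_{t+1}-\barr\baru\|^2-\|w_t-\barr\baru\|^2\le -\eta C_1\varphi_t^2+\eta\cdot O\!\del[1]{(\opt+\epsilon)^2},
\]
and averaging with $\|w_0-\barr\baru\|^2=O(\opt+\epsilon)$, $T=\Omega(1/\epsilon^2)$ gives $\bbE\bigl[\tfrac{1}{T}\sum_t\varphi_t^2\bigr]=O((\opt+\epsilon)^2)$, hence $\bbE[\min_t\varphi_t]=O(\opt+\epsilon)$. The missing idea in your plan is this quadratic-in-$\varphi_t$ lower bound on the per-step hinge drift (from \Cref{fact:ground_truth_diff}), which is what converts the linear-in-$\varphi_t$ noise into a net decrease; the Novikoff machinery is neither needed nor sound here.
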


\subsection{Proof of \Cref{fact:hinge_sgd}}

Here we give a proof sketch of \Cref{fact:hinge_sgd}, and we also focus on
bounded distributions for simplicity.
The full proof is given in \Cref{app_sec:hinge}.

Let $\barr:=1/\langle v,\baru\rangle$, and thus $\barr\baru\in\cD$.
At step $t$, we have
\begin{equation}\label{eq:sgd_tmp1}
\begin{split}
  \|w_{t+1}-\barr\baru\|^2\le\|w_t-\barr\baru\|^2-2\eta\ip{\ell'_h\del{y_t \langle w_t,x_t\rangle}y_tx_t}{w_t-\barr\baru}+\eta^2\ell'_h\del{y_t \langle w_t,x_t\rangle}^2\|x_t\|^2.
\end{split}
\end{equation}
Define
\begin{align*}
  \cM(w):=\bbE_{(x,y)\sim P}\sbr{-\ell'_h\del{y \langle w,x\rangle}}=\cR_{0-1}(w).
\end{align*}
Taking expectation of \cref{eq:sgd_tmp1} w.r.t. $(x_t,y_t)$, and note that
$\|x\|\le B$ almost surely and $(\ell'_h)^2=-\ell'_h$, we have
\begin{align}
  \bbE\sbr{\|w_{t+1}-\barr\baru\|^2}-\|w_t-\barr\baru\|^2 & \le-2\eta\ip{\nR_h(w_t)}{w_t-\barr\baru}+\eta^2B^2\cM(w_t) \nonumber \\
   & \le-2\eta\del{\cR_h(w_t)-\cR_h(\barr\baru)}+\eta^2B^2\cM(w_t). \label{eq:sgd_tmp_main}
\end{align}

To continue, we note the following lemma, which follows from
\Cref{fact:noisy_label_main,fact:ground_truth_diff,fact:rotate_diff_main}, and the
homogeneity of the hinge loss $\ell_h$.
\begin{lemma}\label{fact:hinge_ub_main}
  Suppose \Cref{cond:well} holds.
  Consider an arbitrary $w\in\cD$, and let $\varphi$ denote $\varphi(w,\baru)$.
  If $\|x\|\le B$ almost surely, then
  \begin{align*}
    \cR_h(\barr\baru)\le\cR_h(\|w\|\baru)+O\del[1]{(\opt+\epsilon)^2}
  \end{align*}
  and
  \begin{align*}
    \cR_h(w)-\cR_h(\|w\|\baru)\ge \frac{4R^3}{3U\pi^2}\|w\|\varphi^2-B\|w\|\varphi\cdot\opt.
  \end{align*}
\end{lemma}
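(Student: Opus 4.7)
The plan is to combine the positive homogeneity of the hinge loss---$\ell_h(cz) = c\ell_h(z)$ for $c \ge 0$, hence $\cR_h(cu) = c\cR_h(u)$---with the three-term decomposition from the proof of \Cref{fact:log_ub}, now specialized to $\ell = \ell_h$. For the first inequality, homogeneity gives $\cR_h(\barr\baru) - \cR_h(\|w\|\baru) = (\barr - \|w\|)\cR_h(\baru)$, which is nonpositive whenever $\barr \le \|w\|$. Otherwise, since $w \in \cD$ and $\|v\| = 1$, Cauchy--Schwarz yields $\|w\| \ge \langle w, v\rangle \ge 1$, so $\barr - \|w\| \le \barr - 1 = (1 - \cos\varphi(v,\baru))/\cos\varphi(v,\baru)$; by \Cref{fact:log_ub_opt_well}, $\varphi(v,\baru) = O(\sqrt{\opt+\epsilon})$, giving $\barr - 1 = O(\opt+\epsilon)$. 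A direct pointwise bound shows $\cR_h(\baru) \le B\cdot\opt$, because $\ell_h(y\langle\baru,x\rangle)$ is nonzero only on the set where $\baru$ misclassifies (probability $\opt$) and is bounded by $\|x\| \le B$ there. Multiplying the two estimates yields $(\barr - \|w\|)\cR_h(\baru) = O((\opt+\epsilon)\cdot\opt) = O((\opt+\epsilon)^2)$, as required.

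For the second inequality, I would apply the same three-term decomposition used in the proof of \Cref{fact:log_ub}, with $\ell = \ell_h$, $\hw := w$, and $\barw := \|w\|\baru$, so that $\barw = \|\hw\|\baru$ and $\varphi(\hw,\barw) = \varphi$. \Cref{fact:rotate_diff_main} guarantees that term~\eqref{eq:log_diff_approx_equiv_2} vanishes identically for the hinge loss, which is the key property letting us bypass \Cref{cond:rad}. \Cref{fact:ground_truth_diff} lower bounds term~\eqref{eq:log_diff_approx_equiv_1} by $\tfrac{4R^3}{3U\pi^2}\|w\|\varphi^2$, while \Cref{fact:noisy_label_main} upper bounds $|\text{term~\eqref{eq:log_diff_y_baru}}|$ by $B\|\barw - \hw\|\cdot\opt$. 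Since $\hw$ and $\barw$ have common norm $\|w\|$ and subtend angle $\varphi$, we have $\|\barw - \hw\| = 2\|w\|\sin(\varphi/2) \le \|w\|\varphi$, and summing the three bounds yields $\cR_h(w) - \cR_h(\|w\|\baru) \ge \tfrac{4R^3}{3U\pi^2}\|w\|\varphi^2 - B\|w\|\varphi\cdot\opt$.

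There is essentially no analytic obstacle here, since both inequalities follow by assembling previously proved lemmas together with homogeneity; the argument is genuine bookkeeping. The single point most worth double-checking is that the domain constraint $w \in \cD$ forces $\|w\| \ge 1$, which both rules out the degenerate case $\barr > \|w\|$ inflating the first bound and ensures that the warm start $v$ from Phase~1 is correctly exploited. Conceptually, this lemma is the crux of the two-phase algorithm: the vanishing of term~\eqref{eq:log_diff_approx_equiv_2} for hinge loss eliminates the $C_\kappa/\|\hw\|^2$ obstruction present in \Cref{fact:log_ub}, so the only remaining error terms are controllable without \Cref{cond:rad}.
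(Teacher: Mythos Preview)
Your proposal is correct and follows essentially the same route as the paper: positive homogeneity of $\ell_h$ plus $\|w\|\ge 1$ and the Phase~1 guarantee $\varphi(v,\baru)=O(\sqrt{\opt+\epsilon})$ handle the first inequality, while the three-term decomposition with \Cref{fact:noisy_label_main}, \Cref{fact:ground_truth_diff}, \Cref{fact:rotate_diff_main}, and the chord bound $\|\barw-\hw\|\le\|w\|\varphi$ give the second. The paper's proof differs only cosmetically (it packages $\cR_h(\baru)\le B\cdot\opt$ as a separate lemma, \Cref{fact:Rh_bound}, rather than arguing it inline).
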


The remaining steps of the proof proceed as follows.
We will prove the following: for $\varphi_t:=\varphi(w_t,\baru)$,
\begin{align}\label{eq:sgd_target_main}
  \bbE\sbr{\min_{0\le t\le T}\varphi_t}=O(\opt+\epsilon).
\end{align}
First, note that if $\varphi_t=O(\opt)$ for some $t$, then
\cref{eq:sgd_target_main} holds vacuously.
Hence, we assume that $\varphi_t\geq C\cdot\opt$ for all $t$ for a sufficiently
large constant $C>0$.
Then \Cref{fact:hinge_ub_main} ensures that for some constant $C_1$,
\begin{align*}
  \cR_h(w_t)-\cR_h(\|w_t\|\baru)\ge C_1\|w_t\|\varphi_t^2\ge C_1\varphi_t^2,
\end{align*}
where we also use $\|w\|\ge1$ for all $w\in\cD$.

Next, note that $\cM(w_t)=O(\varphi_t)$, due to our assumption
$\varphi_t\geq C\cdot\opt$ and \Cref{fact:angle_01}.
If $\varphi_t\le\epsilon$, then \cref{eq:sgd_target_main} also holds, otherwise
we can assume $\epsilon\le\varphi_t$, and let $\eta=C_2\epsilon$ for some small
enough constant $C_2$, such that
\begin{align*}
  \eta B^2\cM(w_t)\le C_1\epsilon\varphi_t\le C_1\varphi_t^2.
\end{align*}

Now \cref{eq:sgd_tmp_main} implies
\begin{align*}
  \bbE\sbr{\|w_{t+1}-\barr\baru\|^2}-\|w_t-\barr\baru\|^2 & \le-2\eta C_1\varphi_t^2-\eta\cdot O\del[1]{(\opt+\epsilon)^2}+\eta C_1\varphi_t^2 \\
   & =-\eta C_1\varphi_t^2-\eta\cdot O\del[1]{(\opt+\epsilon)^2}.
\end{align*}
Taking the total expectation and telescoping the above inequality for all $t$, we have
\begin{align*}
  \bbE\sbr{\frac{1}{T}\sum_{t<T}^{}\varphi_t^2}\le \frac{\|w_0-\barr\baru\|^2}{\eta C_1T}+O\del[1]{(\opt+\epsilon)^2}.
\end{align*}
Recall that
\begin{align*}
  \|w_0-\barr\baru\|=\|v-\barr\baru\|=O\del[1]{\sqrt{\opt+\epsilon}}
\end{align*}
due to the first phase of the algorithm.
Since $\eta=C_2\epsilon$, we
can further let $T=\Omega(1/\epsilon^2)$ and finish the proof.

\section{Open problems}

Here are some open problems.
First, as shown by \Cref{fact:hinge_sgd}, we can achieve
$O\del{\opt\cdot\ln(1/\opt)+\epsilon}$ zero-one risk using the two-phase algorithm.
However, previous algorithms can reach $O(\opt+\epsilon)$
\citep{awasthi_localization,diakonikolas_adv_noise}.
Is it possible to develop an algorithm relying on a small constant number of convex optimization phases that achieves $O(\opt+\epsilon)$ risk?

It is also interesting to consider neural networks.
Previously, \citet{frei_adv_noise} showed that stochastic gradient descent
on a two-layer leaky ReLU network of any width achieves
$\widetilde{O}\del[1]{\sqrt{\opt}}$ zero-one risk, where $\opt$ still denotes
the best zero-one risk of a linear classifier.
On the other hand, \citet{early_stop} showed that a wide two-layer ReLU network can
even achieve the optimal Bayes risk, but their required width depends on a complexity measure that may be exponentially large in the worst case.
Can a network with a reasonable width always reach a zero-one risk of $O(\opt)$?

\subsection*{Acknowledgements}

Ziwei Ji thanks Matus Telgarsky for helpful discussions, and the NSF for support
under grant IIS-1750051.

\bibliography{bib}
\bibliographystyle{plainnat}

\newpage
\appendix

\section{Technical lemmas}

Here are some technical results we will need in our analysis.

\begin{lemma}\label{fact:risk_ub_lb}
  Let $r,\rho>0$ be given, then
  \begin{align*}
    \frac{2}{\rho}(1-e^{-r\rho})\le\int_0^{2\pi}\llog\del{r\rho\envert{\cos(\theta)}}r\dif\theta\le \frac{8\sqrt{2}}{\rho}.
  \end{align*}
\end{lemma}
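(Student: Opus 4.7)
}

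The plan is to exploit the symmetry of $|\cos\theta|$ to reduce the integral to a one-sided form, convert $\cos$ to $\sin$ by a reflection substitution so that the variable of integration enters a quantity that vanishes at $0$, and then sandwich $\sin\phi$ between the linear lower bound $\sin\phi\le\phi$ and Jordan's inequality $\sin\phi\ge 2\phi/\pi$ on $[0,\pi/2]$. Since $|\cos\theta|$ has period $\pi$ and is symmetric about $\theta=0$, first write
\begin{align*}
  I \;:=\; \int_0^{2\pi}\llog\del{r\rho\envert{\cos\theta}}r\dif\theta \;=\; 4r\int_0^{\pi/2}\ln\del{1+e^{-r\rho\cos\theta}}\dif\theta,
\end{align*}
and substitute $\phi=\pi/2-\theta$ to obtain $I=4r\int_0^{\pi/2}\ln\del{1+e^{-r\rho\sin\phi}}\dif\phi$.

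For the upper bound, Jordan's inequality gives $\sin\phi\ge 2\phi/\pi$ on $[0,\pi/2]$, so $e^{-r\rho\sin\phi}\le e^{-2r\rho\phi/\pi}$. Combining this with the elementary bound $\ln(1+x)\le x$ for $x\ge0$, I would compute
\begin{align*}
  I \;\le\; 4r\int_0^{\pi/2} e^{-2r\rho\phi/\pi}\dif\phi \;=\; \frac{2\pi}{\rho}\del{1-e^{-r\rho}} \;\le\; \frac{2\pi}{\rho} \;\le\; \frac{8\sqrt{2}}{\rho},
\end{align*}
where the last step uses the numerical estimate $2\pi\le 8\sqrt{2}$.

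For the lower bound, the simple inequality $\sin\phi\le\phi$ yields $e^{-r\rho\sin\phi}\ge e^{-r\rho\phi}$, and the key sharp pointwise inequality I would establish is
\begin{align*}
  \ln(1+e^{-z}) \;\ge\; (\ln 2)\,e^{-z}\qquad\textup{for all }z\ge0.
\end{align*}
This can be verified by considering $f(z):=\ln(1+e^{-z})-(\ln 2)e^{-z}$: one checks $f(0)=0$, $f(\infty)=0$, and that $f'(z)=e^{-z}\del[1]{\ln 2-\tfrac{1}{1+e^{-z}}}$ changes sign exactly once from positive to negative, forcing $f\ge0$. Armed with this, I would compute
\begin{align*}
  I \;\ge\; 4r\ln 2\int_0^{\pi/2}e^{-r\rho\phi}\dif\phi \;=\; \frac{4\ln 2}{\rho}\del[1]{1-e^{-r\rho\pi/2}} \;\ge\; \frac{4\ln 2}{\rho}\del[1]{1-e^{-r\rho}} \;\ge\; \frac{2}{\rho}\del[1]{1-e^{-r\rho}},
\end{align*}
using $\pi/2\ge 1$ and $4\ln 2\ge 2$. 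The only step requiring genuine (if elementary) work is establishing the pointwise inequality $\ln(1+e^{-z})\ge(\ln 2)e^{-z}$; everything else is a direct one-line computation after the reflection substitution.
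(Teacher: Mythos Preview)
Your proof is correct and takes a genuinely different route from the paper. Both begin with the same symmetry reduction $I=4r\int_0^{\pi/2}\llog(r\rho\cos\theta)\dif\theta$, but diverge thereafter. The paper keeps the $\cos\theta$ argument and manufactures an extra $\sin\theta$ factor in the integrand (via the monotonicity trick $4\int_0^{\pi/2}\le 8\int_{\pi/4}^{\pi/2}$ together with $\sin\theta\ge\sqrt{2}/2$ on that half, for the upper bound; via $\sin\theta\le 1$ for the lower bound) so that after $\llog(z)\le e^{-z}$ or $\llog(z)\ge\tfrac12 e^{-z}$ the integral becomes exactly $\int e^{-r\rho\cos\theta}r\rho\sin\theta\dif\theta$. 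You instead reflect $\cos$ to $\sin$ and linearize $\sin\phi$ via $2\phi/\pi\le\sin\phi\le\phi$, reducing to elementary integrals of $e^{-c\phi}$; for the lower bound you use the sharper constant $\ln 2$ in place of the paper's $1/2$. Your upper bound is in fact tighter ($2\pi/\rho$ versus $8\sqrt{2}/\rho$) and the argument is arguably more streamlined; the paper's method has the minor advantage that the pointwise inequalities on $\llog$ it uses are immediate, whereas your $\ln(1+e^{-z})\ge(\ln 2)e^{-z}$ requires the short sign-change argument you sketched.
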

\begin{proof}
  First note that by symmetry,
  \begin{align*}
    \int_0^{2\pi}\llog\del{r\rho\envert{\cos(\theta)}}r\dif\theta=4\int_0^{\frac{\pi}{2}}\llog\del{r\rho\cos(\theta)}r\dif\theta.
  \end{align*}
  On the upper bound, note that $\llog\del{r\rho\cos(\theta)}$ is increasing as
  $\theta$ goes from $0$ to $\frac{\pi}{2}$, and moreover
  $\sin(\theta)\ge \frac{\sqrt{2}}{2}$ for
  $\theta\in\del{\frac{\pi}{4},\frac{\pi}{2}}$, therefore
  \begin{align*}
    4\int_0^{\frac{\pi}{2}}\llog\del{r\rho\cos(\theta)}r\dif\theta\le8\int_{\frac{\pi}{4}}^{\frac{\pi}{2}}\llog\del{r\rho\cos(\theta)}r\dif\theta\le \frac{8\sqrt{2}}{\rho}\int_{\frac{\pi}{4}}^{\frac{\pi}{2}}\llog\del{r\rho\cos(\theta)}r\rho\sin(\theta)\dif\theta.
  \end{align*}
  Also because $\llog(z)\le\exp(-z)$,
  \begin{align*}
    \int_0^{2\pi}\llog\del{r\rho\envert{\cos(\theta)}}r\dif\theta & \le \frac{8\sqrt{2}}{\rho}\int_{\frac{\pi}{4}}^{\frac{\pi}{2}}\exp\del{-r\rho\cos(\theta)}r\rho\sin(\theta)\dif\theta \\
     & =\frac{8\sqrt{2}}{\rho}\del{1-\exp\del{-\frac{\sqrt{2}r\rho}{2}}} \\
     & \le \frac{8\sqrt{2}}{\rho}.
  \end{align*}

  On the lower bound, note that $\llog(z)\ge \frac{1}{2}\exp(-z)$ for $z\ge0$,
  therefore
  \begin{align*}
    \int_0^{2\pi}\llog\del{r\rho\envert{\cos(\theta)}}r\dif\theta=4\int_0^{\frac{\pi}{2}}\llog\del{r\rho\cos(\theta)}r\dif\theta & \ge2\int_0^{\frac{\pi}{2}}\exp\del{-r\rho\cos(\theta)}r\dif\theta \\
     & \ge \frac{2}{\rho}\int_0^{\frac{\pi}{2}}\exp\del{-r\rho\cos(\theta)}r\rho\sin(\theta)\dif\theta \\
     & =\frac{2}{\rho}\del{1-e^{-r\rho}}.
  \end{align*}
\end{proof}

\begin{lemma}\label{fact:opt_ip_bound}
  Given $w,w'\in\R^d$, suppose
  $\pr_{(x,y)\sim P}\del{y\ne\sign\del{\langle w,x\rangle}}=\opt$.
  If $\|x\|\le B$ almost surely, then
  \begin{align*}
    \bbE_{(x,y)\sim P}\sbr{\1_{y\ne\sign\del{\langle w,x\rangle}}\envert{\langle w',x\rangle}}\le B\|w'\|\cdot\opt.
  \end{align*}
  If $P_x$ is $(\alpha_1,\alpha_2)$-sub-exponential, and $\opt\le \frac{1}{e}$,
  then
  \begin{align*}
    \bbE_{(x,y)\sim P}\sbr{\1_{y\ne\sign\del{\langle w,x\rangle}}\envert{\langle w',x\rangle}}\le(1+2\alpha_1)\alpha_2\|w'\|\cdot\opt\cdot\ln\del{\frac{1}{\opt}}.
  \end{align*}
\end{lemma}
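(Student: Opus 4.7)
The plan is to factor out $\|w'\|$ by writing $|\langle w',x\rangle| = \|w'\|\cdot|\langle v,x\rangle|$ where $v := w'/\|w'\|$ is a unit vector (and handle $w'=0$ trivially), and then bound the expectation involving $|\langle v,x\rangle|$ against the indicator of the event $A := \{y \ne \sign(\langle w,x\rangle)\}$, which has probability $\opt$ by assumption. The structure of the two bounds is the same: we are controlling the integral of a one-dimensional linear functional against an indicator of small mass, and the only question is how heavy the tail of $\langle v,x\rangle$ is.

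For the bounded case, the argument is essentially a one-liner: almost surely $|\langle v,x\rangle| \le \|x\| \le B$ by Cauchy--Schwarz, so
\begin{align*}
  \bbE\sbr{\1_A|\langle w',x\rangle|} \;\le\; B\|w'\|\cdot\pr(A) \;=\; B\|w'\|\cdot\opt.
\end{align*}

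For the sub-exponential case, I would introduce a truncation threshold $t>0$ to be chosen and split
\begin{align*}
  \bbE\sbr{\1_A|\langle v,x\rangle|}
  \;\le\; t\cdot\pr(A) + \bbE\sbr{|\langle v,x\rangle|\cdot\1_{|\langle v,x\rangle|>t}}.
\end{align*}
The first term is at most $t\cdot\opt$. For the tail term I would use the layer-cake identity to write
\begin{align*}
  \bbE\sbr{|\langle v,x\rangle|\cdot\1_{|\langle v,x\rangle|>t}} \;=\; t\cdot\pr\del{|\langle v,x\rangle|>t} + \int_t^\infty \pr\del{|\langle v,x\rangle|>s}\dif s,
\end{align*}
and then apply the sub-exponential bound $\pr(|\langle v,x\rangle|>s) \le \alpha_1 e^{-s/\alpha_2}$, yielding
\begin{align*}
  \bbE\sbr{|\langle v,x\rangle|\cdot\1_{|\langle v,x\rangle|>t}} \;\le\; \alpha_1(t+\alpha_2)e^{-t/\alpha_2}.
\end{align*}
Choosing $t := \alpha_2\ln(1/\opt)$ makes $e^{-t/\alpha_2}=\opt$, so the tail term becomes $\alpha_1\alpha_2(1+\ln(1/\opt))\opt$ and the total bound is
\begin{align*}
  \bbE\sbr{\1_A|\langle v,x\rangle|} \;\le\; \alpha_2\opt\ln(1/\opt) + \alpha_1\alpha_2\opt\del[1]{1+\ln(1/\opt)}.
\end{align*}
Using $\opt\le 1/e$ gives $\ln(1/\opt)\ge 1$ so that $\alpha_1 \le \alpha_1\ln(1/\opt)$, whence the right side is at most $(1+2\alpha_1)\alpha_2\opt\ln(1/\opt)$. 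Multiplying through by $\|w'\|$ recovers the stated bound.

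There is no real obstacle here: the proof is a routine truncation argument, and the only mild subtlety is picking the threshold so that the trivial-on-$A$ contribution $t\cdot\opt$ and the sub-exponential tail contribute at the same order; the choice $t=\alpha_2\ln(1/\opt)$ balances them and produces the $\opt\ln(1/\opt)$ factor that ultimately propagates into the logarithmic loss in \Cref{thm:exact-opt} and \Cref{fact:log_ub_opt} for sub-exponential marginals.
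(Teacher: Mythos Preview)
Your proposal is correct and follows essentially the same route as the paper: both handle the bounded case by Cauchy--Schwarz in one line, and for the sub-exponential case both truncate $|\langle w',x\rangle|$ at the level $\alpha_2\|w'\|\ln(1/\opt)$, bound the small part by $\tau\cdot\opt$ and the tail by the layer-cake/integration-by-parts identity together with the sub-exponential tail, then use $\opt\le 1/e$ to absorb the additive $1$ into $\ln(1/\opt)$. The only cosmetic difference is that you normalize $w'$ to a unit vector upfront, whereas the paper carries $\|w'\|$ through the tail bound and threshold.
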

\begin{proof}
  If $\|x\|\le B$ almost surely, then
  \begin{align*}
    \bbE_{(x,y)\sim P}\sbr{\1_{y\ne\sign\del{\langle w,x\rangle}}\envert{\langle w',x\rangle}}\le B\|w'\|\bbE_{(x,y)\sim P}\sbr{\1_{y\ne\sign\del{\langle w,x\rangle}}}=B\|w'\|\cdot\opt.
  \end{align*}
  Below we assume $P_x$ is $(\alpha_1,\alpha_2)$-sub-exponential.

  Let $\nu_x:=\langle w',x\rangle$; we first give some tail bounds for $\nu_x$.
  Since $P_x$ is $(\alpha_1,\alpha_2)$-sub-exponential, for any $t>0$, we have
  \begin{align*}
    \pr\del{\envert{\ip{\frac{w'}{\|w'\|}}{x}}\ge t}\le\alpha_1\exp\del{-\frac{t}{\alpha_2}},\quad\textup{equivalently}\quad\pr\del{|\nu_x|\ge t}\le\alpha_1\exp\del{-\frac{t}{\alpha_2\|w'\|}}.
  \end{align*}
  Let $\mu(t):=\pr\del{|\nu_x|\ge t}$.
  Given any threshold $\tau>0$, integration by parts gives
  \begin{align}\label{eq:log_diff_approx_err_tail}
    \bbE\sbr{\1_{|\nu_x|\ge\tau}|\nu_x|}=\int_\tau^\infty t\cdot\del{-\dif \mu(t)}=\tau \mu(\tau)+\int_\tau^\infty \mu(t)\dif t\le\alpha_1\del{\alpha_2\|w'\|+\tau}\exp\del{-\frac{\tau}{\alpha_2\|w'\|}}.
  \end{align}

  Now let $\tau:=\alpha_2\|w'\|\ln\del{\frac{1}{\opt}}$.
  Note that
  \begin{align*}
    \bbE_{(x,y)\sim P}\sbr{\1_{y\ne\sign\del{\langle w,x\rangle}}\envert{\langle w',x\rangle}}=\bbE_{(x,y)\sim P}\sbr{\1_{|\nu_x|\le\tau}\1_{y\ne\sign\del{\langle w,x\rangle}}|\nu_x|}+\bbE_{(x,y)\sim P}\sbr{\1_{|\nu_x|\ge\tau}\1_{y\ne\sign\del{\langle w,x\rangle}}|\nu_x|}.
  \end{align*}
  We bound the two parts separately.
  When $|\nu_x|\le\tau$, we have
  \begin{align*}
    \bbE\sbr{\1_{|\nu_x|\le\tau}\1_{y\ne\sign\del{\langle w,x\rangle}}|\nu_x|}\le\tau\bbE\sbr{\1_{y\ne\sign\del{\langle w,x\rangle}}}=\tau\cdot\opt=\alpha_2\|w'\|\cdot\opt\cdot\ln\del{\frac{1}{\opt}}.
  \end{align*}
  On the other hand, when $|\nu_x|\ge\tau$, \cref{eq:log_diff_approx_err_tail}
  gives
  \begin{align*}
    \bbE_{(x,y)\sim P}\sbr{\1_{|\nu_x|\ge\tau}\1_{y\ne\sign\del{\langle w,x\rangle}}|\nu_x|} & \le\bbE\sbr{\1_{|\nu_x|\ge\tau}|\nu_x|} \\
     & \le\alpha_1\alpha_2\|w'\|\del{1+\ln\del{\frac{1}{\opt}}}\opt \\
     & \le2\alpha_1\alpha_2\|w'\|\cdot\opt\cdot\ln\del{\frac{1}{\opt}},
  \end{align*}
  where we also use $\opt\le \frac{1}{e}$.
  To sum up,
  \begin{align*}
    \bbE_{(x,y)\sim P}\sbr{\1_{y\ne\sign\del{\langle w,x\rangle}}\envert{\langle w',x\rangle}}\le(1+2\alpha_1)\alpha_2\|w'\|\cdot\opt\cdot\ln\del{\frac{1}{\opt}}.
  \end{align*}
\end{proof}

\section{Omitted proofs from \Cref{sec:log_lb}}\label{app_sec:log_lb}

In this section, we will prove \Cref{fact:log_lb}.
First, we bound the density and support of $\lbu_x$.
\begin{lemma}\label{fact:log_lb_density}
  If $\opt\le \frac{1}{100}$, then it holds that $\lbl_3\le \frac{1}{15}$, and
  $\frac{1}{2\pi}\le\lbl_4\le \frac{1}{\pi}$.
  As a result, $\lbu_x$ is supported on $\cB(2):=\cbr{x\middle|\|x\|\le2}$ with
  its density bounded by $2$.
\end{lemma}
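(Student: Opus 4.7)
The plan is to verify the three claims (the bound on $\lbl_3$, the two-sided bound on $\lbl_4$, the support containment, and the density bound) by direct computation, since each follows from the explicit formulas together with the hypothesis $\opt \le 1/100$.

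First I would handle the numerical bounds on $\lbl_3$ and $\lbl_4$. Since $\opt \le 1/100$ gives $\sqrt{\opt} \le 1/10$, plugging into $\lbl_3 = \tfrac{2}{3}\sqrt{\opt}(1-\opt)$ yields $\lbl_3 \le \tfrac{2}{3}\cdot\tfrac{1}{10} = \tfrac{1}{15}$. For $\lbl_4$, the upper bound $\lbl_4 \le 1/\pi$ is immediate, since the numerator $1-\opt-2\sqrt{\opt}-\lbl_3$ is at most $1$. For the lower bound, I would add up the subtracted pieces, using $\opt \le 1/100$, $2\sqrt{\opt}\le 1/5$, and $\lbl_3 \le 1/15$, giving $1-\opt-2\sqrt{\opt}-\lbl_3 \ge 1 - 1/100 - 1/5 - 1/15 > 1/2$, and hence $\lbl_4 \ge 1/(2\pi)$.

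Next I would verify the containment $\mathrm{supp}(\lbu_x) \subseteq \cB(2)$ by examining each piece. The squares comprising $\lbu_1$ and $\lbu_3$ are centered at points of norm $1$ and have edge lengths $\sqrt{\opt/2}$ and $\sqrt{\lbl_3/2}$ respectively, both well below $1/4$, so they sit inside $\cB(1+1/2) \subseteq \cB(2)$. The rectangles of $\lbu_2$ have $|x_1| \le \sqrt{\opt}$ and $|x_2|\le 1$, so max norm $\sqrt{1+\opt} < 2$. Finally, $\lbu_4$ is supported on $\cB(1) \subseteq \cB(2)$.

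The most substantive step is the density bound. The key observation is that the supports of $\lbu_1$, $\lbu_2$, $\lbu_3$ are pairwise disjoint, which I would establish by comparing $x_1$-ranges: the support of $\lbu_2$ satisfies $|x_1|\le \sqrt{\opt}\le 1/10$, while the squares of $\lbu_1$ have $|x_1| \in [\tfrac{\sqrt 2}{2} - \tfrac{1}{20}, \tfrac{\sqrt 2}{2} + \tfrac{1}{20}]$ and those of $\lbu_3$ have $|x_1| \in [1-\tfrac{1}{12}, 1+\tfrac{1}{12}]$, so none of the three pairs overlap. Thus at any point $x \in \R^2$ at most one of $\lbu_1, \lbu_2, \lbu_3$ contributes (with density $1$ each), and $\lbu_4$ contributes density $\lbl_4 \le 1/\pi$ only when $\|x\|\le 1$. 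Summing, the total density is at most $1 + 1/\pi < 2$. No step poses a real obstacle; the only thing to be careful about is the disjointness of the $\lbu_i$'s, which is what makes the uniform density bound as clean as $2$ rather than something larger.
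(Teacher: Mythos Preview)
Your proposal is correct and follows essentially the same approach as the paper: direct arithmetic for $\lbl_3$ and $\lbl_4$, piecewise verification of the support bound, and disjointness of $\lbu_1,\lbu_2,\lbu_3$ to conclude the total density is at most $1+1/\pi<2$. The only (harmless) slip is in your $\lbu_3$ range: the half-edge length is bounded by $\tfrac12\sqrt{\lbl_3/2}\le\tfrac{1}{2\sqrt{30}}\approx 0.091$, slightly larger than your stated $\tfrac{1}{12}\approx 0.083$, but the $x_1$-intervals remain disjoint with ample margin, so the argument goes through unchanged.
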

\begin{proof}
  For $\lbl_3$, we have
  \begin{align*}
    \lbl_3=\frac{2}{3}\sqrt{\opt}(1-\opt)\le \frac{2}{3}\sqrt{\opt}\le \frac{2}{3}\frac{1}{10}=\frac{1}{15}.
  \end{align*}
  For $\lbu_4$, its total measure can be bounded as below:
  \begin{align*}
    1-\opt-2\sqrt{\opt}-\lbl_3\ge1-\frac{1}{100}-\frac{2}{10}-\frac{1}{15}\ge \frac{1}{2},
  \end{align*}
  therefore $\lbl_4\ge \frac{1}{2\pi}$.
  The upper bound $\lbl_4\le \frac{1}{\pi}$ is trivial.

  On the support of $\lbu_x$, note that for $\lbu_1$, the largest $\ell_2$ norm
  is given by
  \begin{align*}
    1+\frac{\sqrt{2}}{2}\sqrt{\frac{\opt}{2}}\le1+\frac{1}{20}\le2.
  \end{align*}
  For $\lbu_2$, the largest $\ell_2$ norm can be bounded by
  \begin{align*}
    1+\sqrt{\opt}\le1+\frac{1}{10}\le2.
  \end{align*}
  For $\lbu_3$, the largest $\ell_2$ norm can be bounded by
  \begin{align*}
    1+\frac{\sqrt{2}}{2}\sqrt{\frac{\lbl_3}{2}}\le1+\frac{1}{2}\sqrt{\frac{1}{15}}\le2.
  \end{align*}
  Finally, it is easy to verify that if $\opt\le \frac{1}{100}$, then $\lbu_1$,
  $\lbu_2$ and $\lbu_3$ do not overlap, therefore the density of $\lbu$ is
  bounded by $1+\frac{1}{\pi}\le2$.
\end{proof}

Next we verify that $\lbu_x$ is isotropic up to a multiplicative factor.
We first note the following fact; its proof is straightforward and omitted.
\begin{lemma}\label{fact:square_int}
  It holds that
  \begin{align*}
    \int_{a-\frac{\delta}{2}}^{a+\frac{\delta}{2}}\int_{b-\frac{\delta}{2}}^{b+\frac{\delta}{2}}xy\dif y\dif x=ab\delta^2,\quad\textup{and}\quad\int_{a-\frac{\delta}{2}}^{a+\frac{\delta}{2}}\int_{b-\frac{\delta}{2}}^{b+\frac{\delta}{2}}(x^2-y^2)\dif y\dif x=(a^2-b^2)\delta^2.
  \end{align*}
\end{lemma}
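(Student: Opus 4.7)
The plan is to prove both identities by direct computation, exploiting the product structure of the integrand over a rectangular domain. In both cases the double integral factors (or splits into a sum of factoring pieces), so the proof reduces to computing one-dimensional integrals of $1$, $x$, and $x^2$ over a symmetric interval.

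First, I would record the three elementary antiderivative facts for an interval of length $\delta$ centered at $a$:
\begin{align*}
\int_{a-\delta/2}^{a+\delta/2} 1 \, dx = \delta, \qquad
\int_{a-\delta/2}^{a+\delta/2} x \, dx = a\delta, \qquad
\int_{a-\delta/2}^{a+\delta/2} x^2 \, dx = a^2\delta + \tfrac{\delta^3}{12}.
\end{align*}
The first two follow from the Fundamental Theorem of Calculus immediately, and the last follows from the expansion $(a+\delta/2)^3 - (a-\delta/2)^3 = 6a^2(\delta/2) + 2(\delta/2)^3$, divided by $3$. The key observation is that odd-degree terms cancel by symmetry, so the answers are polynomial in $a$ and $\delta$ in a clean way.

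For the first identity, since $xy$ is separable, Fubini gives
\begin{align*}
\int_{a-\delta/2}^{a+\delta/2}\int_{b-\delta/2}^{b+\delta/2} xy \, dy \, dx
= \left(\int_{a-\delta/2}^{a+\delta/2} x \, dx\right)\left(\int_{b-\delta/2}^{b+\delta/2} y \, dy\right) = (a\delta)(b\delta) = ab\delta^2.
\end{align*}
For the second identity, I would split $\int\int (x^2 - y^2) \, dy \, dx$ into $\int x^2 \, dx \cdot \int 1 \, dy - \int 1 \, dx \cdot \int y^2 \, dy$, which yields $(a^2\delta + \delta^3/12)\delta - \delta(b^2\delta + \delta^3/12) = (a^2 - b^2)\delta^2$; the $\delta^3/12$ contributions cancel.

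There is no real obstacle here — the computation is entirely routine, and the only mild ``trick'' is noticing that the $\delta^3/12$ terms from $\int x^2 \, dx$ and $\int y^2 \, dy$ exactly cancel in the second identity, which is exactly what allows the clean formula $(a^2 - b^2)\delta^2$ to hold without any correction term in $\delta$.
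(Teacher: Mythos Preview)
Your proof is correct; the paper itself omits the proof entirely, declaring it ``straightforward,'' so your direct computation via Fubini and the elementary one-dimensional integrals is exactly the kind of argument intended.
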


Then we can prove the following result.
\begin{lemma}\label{fact:log_lb_isotropic}
  It holds that $\bbE_{x\sim\lbu_x}[x]=0$, and $\bbE_{x\sim\lbu_x}[x_1x_2]=0$,
  and $\bbE_{x\sim\lbu_x}\sbr[1]{x_1^2-x_2^2}=0$.
\end{lemma}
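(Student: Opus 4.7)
The plan is to decompose each of the three moments as a sum over the four disjoint components $\lbu_1,\lbu_2,\lbu_3,\lbu_4$ and evaluate each piece, relying on the symmetries built into the construction and on \Cref{fact:square_int} for the square-shaped pieces.

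For $\bbE[x]=0$, I would observe that each $\lbu_i$ is individually invariant under the antipodal map $x\mapsto -x$: the distribution $\lbu_4$ is uniform on the unit ball, $\lbu_1$ and $\lbu_3$ each consist of two equally-dense squares placed at antipodal centers, and the two rectangles supporting $\lbu_2$, namely $[0,\sqrt{\opt}]\times[0,1]$ and $[-\sqrt{\opt},0]\times[-1,0]$, are reflections of one another through the origin. Hence each $\bbE_{\lbu_i}[x]=0$ and the three identities combine to $\bbE_{\lbu_x}[x]=0$.

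For $\bbE[x_1x_2]$, the contribution of $\lbu_4$ vanishes by the coordinate-reflection symmetry $x_1\mapsto -x_1$, and the contribution of $\lbu_3$ vanishes because its squares are centered on the $x_1$-axis, so the center $(a,b)$ has $ab=0$ and \Cref{fact:square_int} gives $0$ per square. The remaining task is to show that $\lbu_1$ and $\lbu_2$ cancel. Applying \Cref{fact:square_int} to the two squares of $\lbu_1$, with centers $(\pm\tfrac{\sqrt{2}}{2},\mp\tfrac{\sqrt{2}}{2})$ and side $\delta=\sqrt{\opt/2}$, each contributes $ab\,\delta^{2}=-\opt/4$, giving $-\opt/2$ in total. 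For $\lbu_2$, direct evaluation of the iterated integral over each rectangle gives $+\opt/4$ per rectangle, so $+\opt/2$ in total, and the two contributions cancel.

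For $\bbE[x_1^2-x_2^2]$, the component $\lbu_4$ again contributes $0$ by radial symmetry, and $\lbu_1$ contributes $0$ because its centers satisfy $a^2=b^2$, so \Cref{fact:square_int} yields $(a^2-b^2)\delta^2=0$ per square. Only $\lbu_2$ and $\lbu_3$ remain. For $\lbu_3$, the squares centered at $(\pm 1,0)$ with $\delta^{2}=\lbl_3/2$ contribute $(a^2-b^2)\delta^2=\lbl_3/2$ each, summing to $\lbl_3=\tfrac{2}{3}\sqrt{\opt}(1-\opt)$. For $\lbu_2$, a routine integration over the two rectangles gives $-\tfrac{2}{3}\sqrt{\opt}(1-\opt)$. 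These cancel exactly; this is where the specific choice $\lbl_3=\tfrac{2}{3}\sqrt{\opt}(1-\opt)$ in the construction is used, matching the paper's remark that $\lbu_3$ was introduced precisely to restore isotropy.

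There is no real obstacle: the first moment is pure symmetry, the second and third moments reduce via symmetry to computations on $\lbu_1,\lbu_2,\lbu_3$ that are either one-line applications of \Cref{fact:square_int} or a direct double integral on the two rectangles of $\lbu_2$. The only place where care is required is bookkeeping the signs across the two rectangles of $\lbu_2$ (on $[-\sqrt{\opt},0]\times[-1,0]$ the product $x_1x_2$ is again nonnegative, so the two rectangles reinforce rather than cancel), but this is straightforward.
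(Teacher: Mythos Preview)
Your proposal is correct and follows essentially the same approach as the paper: both decompose each moment over the four components, dispatch $\lbu_4$ (and, where applicable, $\lbu_1$ or $\lbu_3$) by symmetry or \Cref{fact:square_int}, and then verify the exact cancellation between $\lbu_1,\lbu_2$ for $x_1x_2$ and between $\lbu_2,\lbu_3$ for $x_1^2-x_2^2$. Your computations of $-\opt/2$ and $+\opt/2$ for the cross-moment and of $\pm\lbl_3$ for the second-moment difference match the paper's exactly.
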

\begin{proof}
  It follows from the symmetry of $\lbu$ that $\bbE_{x\sim\lbu_x}[x]=0$.

  To verify $\bbE_{x\sim\lbu_x}[x_1x_2]=0$, note that the expectation of
  $x_1x_2$ is $0$ on $\lbu_3$ and $\lbu_4$, and thus we only need to check
  $\lbu_1$ and $\lbu_2$.
  First, due to \Cref{fact:square_int}, we have
  \begin{align*}
    \bbE_{(x,y)\sim\lbu_1}\sbr{x_1x_2}=-\frac{\opt}{2}.
  \end{align*}
  Additionally,
  \begin{align*}
    \bbE_{(x,y)\sim\lbu_2}\sbr{x_1x_2} & =2\int_0^{\sqrt{\opt}}\int_0^1x_1x_2\dif x_2\dif x_1=\frac{\opt}{2}.
  \end{align*}
  Therefore $\bbE_{x\sim\lbu_x}[x_1x_2]=0$.

  Finally, note that the expectation of $x_1^2-x_2^2$ is $0$ on $\lbu_1$ due to
  \Cref{fact:square_int}, and also $0$ on $\lbu_4$ due to symmetry; therefore we
  only need to consider $\lbu_2$ and $\lbu_3$.
  We have
  \begin{align*}
    \bbE_{(x,y)\sim\lbu_2}\sbr{x_1^2-x_2^2} & =2\int_0^{\sqrt{\opt}}\int_0^1\del[1]{x_1^2-x_2^2}\dif x_2\dif x_1=\frac{2}{3}\opt^{3/2}-\frac{2}{3}\sqrt{\opt}=-\lbl_3.
  \end{align*}
  Since $\bbE_{(x,y)\sim\lbu_3}\sbr[1]{x_1^2-x_2^2}=\lbl_3$ by
  \Cref{fact:square_int}, it follows that
  $\bbE_{x\sim\lbu_x}\sbr[1]{x_1^2-x_2^2}=0$.
\end{proof}

Next, we give a proof of the risk lower bound of \Cref{fact:log_lb}.
For simplicity, in this section we will let $\cR$ denote $\cRlog$.
For $i=1,2,3,4$, we also let
$\cR_i(w):=\bbE_{(x,y)\sim\lbu_i}\sbr[1]{\llog\del{y \langle w,x\rangle}}$;
therefore $\cR(w):=\sum_{i=1}^{4}\cR_i(w)$.
We first prove \Cref{fact:log_lb_ref_sol}, showing that there exists a solution
$\barw$ with $\|\barw\|=\Theta\del{\frac{1}{\sqrt{\opt}}}$ and
$\cR(\barw)=O\del[1]{\sqrt{\opt}}$.

\begin{proof}[Proof of \Cref{fact:log_lb_ref_sol}]
  We consider $\cR_1$, $\cR_2$, $\cR_3$ and $\cR_4$ respectively.
  \begin{enumerate}
    \item For $\lbu_1$, note that the minimum of
    $y \langle\barw,x\rangle$ is
    \begin{align*}
      -\del{\frac{\sqrt{2}}{2}+\frac{1}{2}\sqrt{\frac{\opt}{2}}}\barr=-\frac{3\sqrt{2}}{2}\frac{1}{\sqrt{\opt}}-\frac{3\sqrt{2}}{4}.
    \end{align*}
    Because $\llog(z)\le-z+1$ when $z\le0$, and $\opt\le \frac{1}{100}$, we have
    \begin{align*}
      \cR_1(\barw)\le\llog\del{-\frac{3\sqrt{2}}{2}\frac{1}{\sqrt{\opt}}-\frac{3\sqrt{2}}{4}}\cdot\opt & \le \frac{3\sqrt{2}}{2}\sqrt{\opt}+\del{\frac{3\sqrt{2}}{4}+1}\opt \\
       & \le \frac{3\sqrt{2}}{2}\sqrt{\opt}+\del{\frac{3\sqrt{2}}{4}+1}\frac{1}{10}\sqrt{\opt} \\
       & \le \frac{5\sqrt{\opt}}{2}.
    \end{align*}

    \item For $\lbu_2$, we have
    \begin{align*}
      \cR_2(\barw)=2\int_0^{\sqrt{\opt}}\int_0^1\llog(x_1\barr)\dif x_2\dif x_1 & =2\int_0^{\sqrt{\opt}}\llog(x_1\barr)\dif x_1 \\
       & \le2\int_0^{\sqrt{\opt}}\exp(-x_1\barr)\dif x_1 \\
       & =\frac{2}{\barr}\del{1-\exp\del{-\barr\sqrt{\opt}}}\le \frac{2}{\barr},
    \end{align*}
    where we use $\llog(z)\le\exp(-z)$.

    \item For $\lbu_3$, the minimum of $y \langle\barw,x\rangle$ is
    \begin{align*}
      \del{1-\frac{1}{2}\sqrt{\frac{\lbl_3}{2}}}\barr\ge \frac{2\barr}{3},
    \end{align*}
    where we use $\lbl_3\le \frac{1}{15}$ by \Cref{fact:log_lb_density}.
    Further note that $\llog(z)\le1/z$ when $z>0$, we have
    \begin{align*}
      \cR_3(\barw)\le\lbl_3\llog\del{\frac{2\barr}{3}}\le \frac{1/15}{2\barr/3}\le \frac{1}{10\barr}.
    \end{align*}

    \item For $\lbu_4$,
    \begin{align*}
      \cR_4(\barw)=\int_0^1\int_0^{2\pi}\llog\del{r\barr\envert{\cos(\theta)}}\lbl_4r\dif\theta\dif r\le \frac{1}{\pi}\int_0^1\int_0^{2\pi}\llog\del{r\barr\envert{\cos(\theta)}}r\dif\theta\dif r,
    \end{align*}
    where we use $\lbl_4\le \frac{1}{\pi}$ from \Cref{fact:log_lb_density}.
    \Cref{fact:risk_ub_lb} then implies
    \begin{align*}
      \cR_4(\barw)\le \frac{1}{\pi}\int_0^1\frac{8\sqrt{2}}{\barr}\dif r=\frac{8\sqrt{2}}{\pi\barr}.
    \end{align*}
  \end{enumerate}

  Putting everything together, we have
  \begin{align*}
    \cR(\barw) & =\cR_1(\barw)+\cR_2(\barw)+\cR_3(\barw)+\cR_4(\barw) \\
     & \le \frac{5\sqrt{\opt}}{2}+\frac{2}{\barr}+\frac{1}{10\barr}+\frac{8\sqrt{2}}{\pi\barr} \\
     & \le \frac{5\sqrt{\opt}}{2}+\frac{6}{\barr}\le5\sqrt{\opt}.
  \end{align*}
\end{proof}

Next we prove \Cref{fact:w*_norm_ub}, the upper bound on $\|w^*\|$.
\begin{proof}[Proof of \Cref{fact:w*_norm_ub}]
  Let
  \begin{align*}
    u:=\del{\frac{\sqrt{2}}{2},-\frac{\sqrt{2}}{2}},\quad\textup{and}\quad v:=\del{\frac{\sqrt{2}}{2}-\frac{1}{2}\sqrt{\frac{\opt}{2}},-\frac{\sqrt{2}}{2}-\frac{1}{2}\sqrt{\frac{\opt}{2}}}.
  \end{align*}
  Let $\phi$ denote the angle between $u$ and $v$, then
  \begin{align*}
    \phi\le\tan(\phi)=\frac{\sqrt{2}}{2}\sqrt{\frac{\opt}{2}}=\frac{\sqrt{\opt}}{2}\le \frac{1}{20}\le \frac{\pi}{24},
  \end{align*}
  and it follows that the angle between $v$ and $w^*$ is bounded by
  \begin{align*}
    \frac{\pi}{24}+\frac{\pi}{4}+\frac{\sqrt{\opt}}{30}\le\frac{\pi}{24}+\frac{\pi}{4}+\frac{\pi}{24}=\frac{\pi}{3}.
  \end{align*}
  Moreover, note that the maximum of $y \langle w^*,x\rangle$ on $\lbu_1$ is
  given by
  \begin{align*}
    -\langle w^*,v\rangle\le-r^*\|v\|\cos\del{\frac{\pi}{3}}\le-r^*\cos\del{\frac{\pi}{3}}=-\frac{r^*}{2}.
  \end{align*}
  Additionally because $\llog(z)>-z$, we have
  \begin{align*}
    \cR(w^*)\ge\cR_1(w^*)\ge\llog\del{-\frac{r^*}{2}}\cdot\opt>\frac{r^*}{2}\cdot\opt.
  \end{align*}
  If $r^*>\frac{10}{\sqrt{\opt}}$, then $\cR(w^*)>5\sqrt{\opt}$, which
  contradicts the definition of $w^*$ in light of \Cref{fact:log_lb_ref_sol}.
  Therefore $r^*\le \frac{10}{\sqrt{\opt}}$.
\end{proof}

Next we prove \Cref{fact:nonzero_grad}.
\begin{proof}[Proof of \Cref{fact:nonzero_grad}]
  Let $w=(r,\theta)$, where $0\le r\le \frac{10}{\sqrt{\opt}}$ and
  $0\le\theta\le \frac{\sqrt{\opt}}{30}$.
  We will consider the projection of $\nR(w)$ onto the
  direction $e_2:=(0,1)$, and show that this projection cannot be zero.
  \begin{enumerate}
    \item For $\lbu_1$, the gradient of this part has a negative inner product
    with $e_2$, due to the construction of $\lbu_1$ and the fact $\llog'<0$.

    \item For $\lbu_2$, the inner product between $e_2$ and the gradient of this
    part is given by
    \begin{align}\label{eq:grad_2}
      2\int_0^{\sqrt{\opt}}\int_0^1\llog'(x_1w_1+x_2w_2)x_2\dif x_2\dif x_1.
    \end{align}
    Note that $x_1w_1\le rx_1$, while
    \begin{align*}
      x_2w_2=x_2r\sin\del{\theta}\le r\theta\le \frac{10}{\sqrt{\opt}}\frac{\sqrt{\opt}}{30}=\frac{1}{3},
    \end{align*}
    and that $\llog'$ is increasing, therefore
    \begin{align*}
      \llog'(x_1w_1+x_2w_2)\le\llog'\del{rx_1+\frac{1}{3}}.
    \end{align*}
    We can then upper bound \cref{eq:grad_2} as follows:
    \begin{align*}
      \textrm{\cref{eq:grad_2}} & \le2\int_0^{\sqrt{\opt}}\int_0^1\llog'\del{rx_1+\frac{1}{3}}x_2\dif x_2\dif x_1 \\
       & =\int_0^{\sqrt{\opt}}\llog'\del{rx_1+\frac{1}{3}}\dif x_1 \\
       & =\frac{1}{r}\del{\llog\del{\frac{1}{3}+r\sqrt{\opt}}-\llog\del{\frac{1}{3}}}.
    \end{align*}
    Now we consider two cases.
    If $r\sqrt{\opt}\le2$, then it follows from the convexity of $\llog$ that
    \begin{align*}
      \textrm{\cref{eq:grad_2}}\le \frac{1}{r}\llog'\del{\frac{1}{3}+r\sqrt{\opt}}r\sqrt{\opt}\le\llog'(3)\sqrt{\opt}\le-\frac{\sqrt{\opt}}{30}.
    \end{align*}
    On the other hand, if $r\sqrt{\opt}\ge2$, then
    \begin{align*}
      \textrm{\cref{eq:grad_2}}\le \frac{1}{r}\del{\llog\del{\frac{7}{3}}-\llog\del{\frac{1}{3}}}\le \frac{\sqrt{\opt}}{10}\del{\llog\del{\frac{7}{3}}-\llog\del{\frac{1}{3}}}\le-\frac{\sqrt{\opt}}{30}.
    \end{align*}
    Therefore, it always holds that
    $\textrm{\cref{eq:grad_2}}\le-\frac{\sqrt{\opt}}{30}$.

    \item For $\lbu_3$, the gradient of this part can have a positive inner
    product with $e_2$.
    For simplicity, let $\rho:=\frac{1}{2}\sqrt{\frac{\lbl_3}{2}}$.
    To upper bound this inner product, it is enough to consider the region given
    by
    \begin{align*}
      \del{[1-\rho,1+\rho]\times[-\rho,0]}\cup\del{[-1-\rho,-1+\rho]\times[0,\rho]}.
    \end{align*}
    Moreover, note that $y \langle w,x\rangle\ge0$ on $\lbu_3$, therefore
    $\llog'\del{y \langle w,x\rangle}\ge-\frac{1}{2}$.
    Therefore the inner product between $e_2$ and the gradient of $\lbu_3$ can
    be upper bounded by (note that $x_2\le0$ in the integral)
    \begin{align*}
      2\int_{1-\rho}^{1+\rho}\int_{-\rho}^0-\frac{1}{2}x_2\dif x_2\dif x_1=\rho^3=\frac{\sqrt{\lbl_3}}{16\sqrt{2}}\lbl_3\le\frac{\sqrt{1/15}}{16\sqrt{2}}\frac{2}{3}\sqrt{\opt}<\frac{\sqrt{\opt}}{60}.
    \end{align*}
    where we use $\lbl_3\le \frac{1}{15}$ by \Cref{fact:log_lb_density} and
    $\lbl_3\le\frac{2}{3}\sqrt{\opt}$ by its definition.

    \item For $\lbu_4$, we further consider two cases.
    \begin{enumerate}
      \item Consider the part of $\lbu_4$ with polar angles in
      $(-\frac{\pi}{2}+2\theta,\frac{\pi}{2})\cup(\frac{\pi}{2}+2\theta,\frac{3\pi}{2})$.
      By symmetry, the gradient of this part is along the direction with polar
      angle $\pi+\theta$, and it has a negative inner product with $e_2$.

      \item Consider the part of $\lbu_4$ with polar angles in
      $(-\frac{\pi}{2},-\frac{\pi}{2}+2\theta)\cup(\frac{\pi}{2},\frac{\pi}{2}+2\theta)$.
      We can verify that the gradient of this part has a positive inner product
      with $e_2$; moreover, since $-1<\llog'<0$, this inner product can be upper
      bounded by
      \begin{align*}
        2\int_0^1\int_0^{2\theta}r'\cos(\theta')\lbl_4r'\dif\theta'\dif r'=2\lbl_4\cdot \frac{1}{3}\cdot\sin(2\theta)\le\frac{4\theta}{3\pi}\le\frac{4}{3\pi}\frac{\sqrt{\opt}}{30}<\frac{\sqrt{\opt}}{60},
      \end{align*}
      where we also use $\lbl_4\le \frac{1}{\pi}$ and $\sin(z)\le z$ for
      $z\ge0$.
    \end{enumerate}
  \end{enumerate}
  As a result, item 3 and item 4(b) cannot cancel item 2, and thus $\nR(w)$
  cannot be $0$.
\end{proof}

Now we are ready to prove the risk lower bound of \Cref{fact:log_lb}.
\begin{proof}[Proof of \Cref{fact:log_lb} risk lower bound]
  It is clear that $\cR$ has bounded sub-level sets, and therefore can be
  globally minimized.
  Let the polar coordinates of the global minimizer be given by
  $(r^*,\theta^*)$, where $|\theta^*|\le\pi$.
  Assume that $\theta^*\in\sbr{-\frac{\sqrt{\opt}}{30},\frac{\sqrt{\opt}}{30}}$;
  due to $\lbu_1$ and $\lbu_2$, it actually follows that
  $\theta^*\in\sbr{0,\frac{\sqrt{\opt}}{30}}$.
  \Cref{fact:w*_norm_ub} then implies $r^*\le \frac{10}{\sqrt{\opt}}$, and then
  \Cref{fact:nonzero_grad} implies $\nR(w^*)\ne0$, a contradiction.

  It then follows that $w^*$ is wrong on a $\frac{\theta^*}{\pi}$ portion of
  $\lbu_4$.
  Since the total measure of $\lbu_4$ is more than half due to
  \Cref{fact:log_lb_density}, we have
  \begin{align*}
    \cR_{0-1}(w^*)\ge \frac{1}{2}\frac{\theta^*}{\pi}\ge \frac{\sqrt{\opt}}{60\pi}.
  \end{align*}
\end{proof}

\section{Omitted proofs from \Cref{sec:log_ub}}\label{app_sec:log_ub}

In this section, we provide omitted proofs from \Cref{sec:log_ub}.
First, we prove some general results that will be used later.

\begin{lemma}\label{fact:Rlog_inv_norm}
  Under \Cref{cond:well}, for any $w\in\R^d$,
  \begin{align*}
    \bbE\sbr{\llog\del{\envert{\langle w,x\rangle}}}\le \frac{12U}{\|w\|}.
  \end{align*}
\end{lemma}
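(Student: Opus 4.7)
The plan is to reduce the expectation to a two-dimensional integral and then apply Lemma~\ref{fact:risk_ub_lb} together with the radial integrability condition from Assumption~\ref{cond:well}. Since the statement is trivial for $w=0$ (the right-hand side is $+\infty$), I will fix an arbitrary nonzero $w\in\R^d$. The value of $\langle w,x\rangle$ depends only on the component of $x$ along $w/\|w\|$, so I will pick any two-dimensional subspace $V\subseteq\R^d$ containing $w$ and work with the density $p_V$ of the projection of $P_x$ onto $V$. Using polar coordinates $(r,\theta)$ on $V$ with the $\theta=0$ axis aligned along $w/\|w\|$, the projection of $x$ onto $V$ has length $r$ and makes angle $\theta$ with $w$, so $\langle w,x\rangle=\|w\|\,r\cos(\theta)$.

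With this setup, the expectation rewrites as
\begin{align*}
  \bbE\sbr{\llog\del{\envert{\langle w,x\rangle}}}
  =\int_0^\infty\int_0^{2\pi}\llog\del{\|w\|\,r\envert{\cos(\theta)}}\,p_V(r,\theta)\,r\,\dif\theta\,\dif r.
\end{align*}
Next I would apply the uniform upper bound $p_V(r,\theta)\le\sigma(r)$ from Assumption~\ref{cond:well}, so that
\begin{align*}
  \bbE\sbr{\llog\del{\envert{\langle w,x\rangle}}}
  \le\int_0^\infty\sigma(r)\del{\int_0^{2\pi}\llog\del{\|w\|\,r\envert{\cos(\theta)}}\,r\,\dif\theta}\dif r.
\end{align*}
Now the inner integral is exactly the quantity bounded by Lemma~\ref{fact:risk_ub_lb}; applied with $\rho=\|w\|$, it gives $\int_0^{2\pi}\llog(\|w\|r|\cos\theta|)r\dif\theta\le 8\sqrt{2}/\|w\|$, uniformly in $r$.

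Plugging this in and then using $\int_0^\infty\sigma(r)\dif r\le U$ from Assumption~\ref{cond:well}, I obtain
\begin{align*}
  \bbE\sbr{\llog\del{\envert{\langle w,x\rangle}}}
  \le \frac{8\sqrt{2}}{\|w\|}\int_0^\infty\sigma(r)\dif r
  \le \frac{8\sqrt{2}\,U}{\|w\|}
  \le \frac{12U}{\|w\|},
\end{align*}
since $8\sqrt{2}<12$. There is no real obstacle here beyond setting up the right 2D subspace so that the anisotropy of $w$ reduces to a one-dimensional angular quantity; once that is done, the two ingredients of Assumption~\ref{cond:well} (the pointwise envelope $\sigma(r)$ and its integrability) plug directly into the angular estimate from Lemma~\ref{fact:risk_ub_lb}.
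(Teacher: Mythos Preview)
Your proposal is correct and follows essentially the same approach as the paper: project onto a two-dimensional subspace containing $w$, express the expectation in polar coordinates, bound the density by $\sigma(r)$ via Assumption~\ref{cond:well}, apply Lemma~\ref{fact:risk_ub_lb} to the angular integral, and finish with $\int_0^\infty\sigma(r)\dif r\le U$ and $8\sqrt{2}\le 12$. Your handling of the $w=0$ case and the explicit alignment of the $\theta=0$ axis with $w/\|w\|$ are minor elaborations that the paper leaves implicit.
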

\begin{proof}
  Let $v$ denote an arbitrary vector orthogonal to $w$, and let $p$ denote the
  density of the projection of $P_x$ onto the space spanned by $w$ and $v$.
  Then we have
  \begin{align*}
    \bbE\sbr{\llog\del{\envert{\langle w,x\rangle}}}=\int_0^\infty\int_0^{2\pi}\llog\del{r\|w\|\envert{\cos(\theta)}}p(r,\theta)r\dif\theta\dif r.
  \end{align*}
  Invoking \Cref{cond:well}, we have
  \begin{align*}
    \bbE\sbr{\llog\del{\envert{\langle w,x\rangle}}}\le\int_0^\infty\sigma(r)\del{\int_0^{2\pi}\llog\del{r\|w\|\envert{\cos(\theta)}}r\dif\theta}\dif r.
  \end{align*}
  \Cref{fact:risk_ub_lb} then implies
  \begin{align*}
    \bbE\sbr{\llog\del{\envert{\langle w,x\rangle}}}\le\int_0^\infty\sigma(r)\frac{8\sqrt{2}}{\|w\|}\dif r.
  \end{align*}
  Then it follows from \Cref{cond:well} that
  \begin{align*}
    \bbE\sbr{\llog\del{\envert{\langle w,x\rangle}}}\le \frac{8\sqrt{2}U}{\|w\|}\le \frac{12U}{\|w\|}.
  \end{align*}
\end{proof}

Next, we note that following the direction of the ground-truth solution $\baru$
can achieve $\widetilde{O}\del[1]{\sqrt{\opt}}$ logistic risk.
\begin{lemma}\label{fact:log_ub_ref_sol}
  Given $\rho>0$, under \Cref{cond:well}, if $\|x\|\le B$ almost surely, then
  \begin{align*}
    \cRlog(\rho\baru)\le \frac{12U}{\rho}+\rho B\cdot\opt,\quad\textup{with}\quad\inf_{\rho>0}\cRlog(\rho\baru)\le\sqrt{50UB\cdot\opt},
  \end{align*}
  while if $P_x$ is $(\alpha_1,\alpha_2)$-sub-exponential, then
  \begin{align*}
    \cRlog(\rho\baru)\le\frac{12U}{\rho}+(1+2\alpha_1)\alpha_2\rho\cdot\opt\cdot\ln\del{\frac{1}{\opt}},
  \end{align*}
  with
  \begin{align*}
    \inf_{\rho>0}\cRlog(\rho\baru)\le\sqrt{50(1+2\alpha_1)\alpha_2U\cdot\opt\cdot\ln\del{\frac{1}{\opt}}}.
  \end{align*}
\end{lemma}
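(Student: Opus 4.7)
The plan is to reduce the problem to the two lemmas already developed in this appendix (\Cref{fact:Rlog_inv_norm} and \Cref{fact:opt_ip_bound}) by using the elementary identity $\llog(-z)-\llog(z)=z$, which will let us decouple the ``clean label'' contribution (controlled by the margin against $\baru$) from the ``adversarial label'' contribution (controlled by $\opt$).

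\textbf{Step 1: Symmetrization.} I first rewrite the integrand $\llog(y\langle\rho\baru,x\rangle)$ as a function of the \emph{unsigned} margin $\rho|\langle\baru,x\rangle|$ plus an error supported on the noisy part. Concretely, since $\llog(-z)=\llog(z)+z$, I split according to whether $y=\sign(\langle\baru,x\rangle)$:
\begin{align*}
  \cRlog(\rho\baru)
  =\bbE\sbr{\llog\del{\rho\envert{\langle\baru,x\rangle}}}
   +\bbE\sbr{\1_{y\ne\sign(\langle\baru,x\rangle)}\cdot\rho\envert{\langle\baru,x\rangle}}.
\end{align*}

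\textbf{Step 2: Bound the two terms using existing lemmas.} For the first term, \Cref{fact:Rlog_inv_norm} applied to the vector $\rho\baru$ (whose norm is $\rho$) immediately gives the bound $12U/\rho$. For the second term, I invoke \Cref{fact:opt_ip_bound} with $w=\baru$ and $w'=\rho\baru$: in the bounded case this yields $\rho B\cdot\opt$, and in the $(\alpha_1,\alpha_2)$-sub-exponential case it yields $(1+2\alpha_1)\alpha_2\rho\cdot\opt\cdot\ln(1/\opt)$. Combining these gives the two upper bounds in the statement.

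\textbf{Step 3: Optimize over $\rho$.} The upper bound has the form $\frac{A}{\rho}+C\rho$ where $A=12U$ and $C$ is either $B\cdot\opt$ or $(1+2\alpha_1)\alpha_2\opt\ln(1/\opt)$. The optimal $\rho=\sqrt{A/C}$ yields the minimum value $2\sqrt{AC}=\sqrt{48UC}\le\sqrt{50UC}$, which matches the stated infimum bounds.

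\textbf{Anticipated obstacles.} I do not expect any real obstacle: the only nontrivial move is the symmetrization identity $\llog(-z)=\llog(z)+z$, which cleanly separates the noise penalty from the margin penalty, after which everything is a direct application of the previous two lemmas and a one-variable AM-GM optimization. A minor care point is to verify that \Cref{fact:opt_ip_bound} is being applied with $w=\baru$ (so that the event $\{y\ne\sign(\langle w,x\rangle)\}$ has probability exactly $\opt$) and $w'=\rho\baru$ (so that $\|w'\|=\rho$), but both conditions hold by construction.
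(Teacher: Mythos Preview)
Your proposal is correct and follows essentially the same approach as the paper: the paper likewise splits $\cRlog(\rho\baru)$ via the identity $\llog(-z)-\llog(z)=z$ into the clean-label term bounded by \Cref{fact:Rlog_inv_norm} and the noisy-label term bounded by \Cref{fact:opt_ip_bound}, and then optimizes the resulting $\tfrac{12U}{\rho}+C\rho$ to obtain $2\sqrt{12UC}\le\sqrt{50UC}$. The only cosmetic difference is that the paper writes the second term as $\bbE\sbr{\1_{y\ne\sign(\langle\baru,x\rangle)}\cdot(-y)\langle\rho\baru,x\rangle}$ rather than $\rho\envert{\langle\baru,x\rangle}$, which is the same quantity on that event.
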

\begin{proof}
  Note that
  \begin{align*}
    \cRlog(\rho\baru) & =\bbE_{(x,y)\sim P}\sbr{\llog\del{y \langle\rho\baru,x\rangle}} \\
     & =\bbE_{x\sim P_x}\sbr{\llog\del{\envert{\langle\rho\baru,x\rangle}}}+\bbE_{(x,y)\sim P}\sbr{\llog\del{y \langle\rho\baru,x\rangle}-\llog\del{\envert{\langle\rho\baru,x\rangle}}}.
  \end{align*}
  Since $\llog(-z)-\llog(z)=z$, and also invoking \Cref{fact:Rlog_inv_norm}, we have
  \begin{align*}
    \cRlog(\rho\baru) & =\bbE_{x\sim P_x}\sbr{\llog\del{\envert{\langle\rho\baru,x\rangle}}}+\bbE_{(x,y)\sim P}\sbr{\1_{y\ne\sign\del{\langle\baru,x\rangle}}\cdot(-y)\langle\rho\baru,x\rangle} \\
     & \le \frac{12U}{\rho}+\bbE_{(x,y)\sim P}\sbr{\1_{y\ne\sign\del{\langle\baru,x\rangle}}\cdot(-y)\langle\rho\baru,x\rangle}.
  \end{align*}

  If $\|x\|\le B$ almost surely, then \Cref{fact:opt_ip_bound} further implies
  \begin{align*}
    \cRlog(\rho\baru)\le\frac{12U}{\rho}+\rho B\cdot\opt,
  \end{align*}
  and thus
  \begin{align*}
    \inf_{\rho>0}\cRlog(\rho\baru)\le2\sqrt{12UB\cdot\opt}\le\sqrt{50UB\cdot\opt}.
  \end{align*}

  If $P_x$ is $(\alpha_1,\alpha_2)$-sub-exponential, then
  \Cref{fact:opt_ip_bound} further implies
  \begin{align*}
    \cRlog(\rho\baru)\le\frac{12U}{\rho}+(1+2\alpha_1)\alpha_2\rho\cdot\opt\cdot\ln\del{\frac{1}{\opt}},
  \end{align*}
  and therefore
  \begin{align*}
    \inf_{\rho>0}\cRlog(\rho\baru)\le2\sqrt{12(1+2\alpha_1)\alpha_2 U\cdot\opt\cdot\ln\del{\frac{1}{\opt}}}\le\sqrt{50(1+2\alpha_1)\alpha_2U\cdot\opt\cdot\ln\del{\frac{1}{\opt}}}.
  \end{align*}
\end{proof}

Next we prove a risk lower bound, that will later be used to prove lower bounds
on $\|w^*\|$ and $\|w_t\|$.
\begin{lemma}\label{fact:cR_lb}
  Under \Cref{cond:well}, given $w\in\R^d$, if $R\|w\|\le2$, then
  \begin{align*}
    \cRlog(w)\ge \frac{R^2}{2U},
  \end{align*}
  while if $R\|w\|\ge2$, then
  \begin{align*}
    \cRlog(w)\ge \frac{R}{U\|w\|}.
  \end{align*}
\end{lemma}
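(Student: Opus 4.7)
The plan is to eliminate the label and then invoke the lower half of \Cref{fact:risk_ub_lb} on a two-dimensional projection. Since $\llog$ is decreasing and $y\langle w,x\rangle \le |\langle w,x\rangle|$ for any $y\in\{-1,+1\}$, we have the pointwise bound $\llog(y\langle w,x\rangle) \ge \llog(|\langle w,x\rangle|)$, and hence
\begin{align*}
\cRlog(w) \ge \bbE_{x\sim P_x}\bigl[\llog(|\langle w,x\rangle|)\bigr].
\end{align*}
This removes the labels and reduces the statement to one purely about $P_x$, independent of $\opt$.

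Next I would pick any two-dimensional subspace $V$ containing $w$ and express the right-hand side above as a polar integral against the density $p_V$, aligning the axis so that $|\langle w,x\rangle| = r\|w\||\cos\theta|$. Restricting the radial integral to $r \le R$ and using $p_V(r,\theta) \ge 1/U$ from \Cref{cond:well}, then applying the lower bound of \Cref{fact:risk_ub_lb} to the inner angular integral with $\rho = \|w\|$, I obtain the master inequality
\begin{align*}
\cRlog(w) \;\ge\; \frac{1}{U}\int_0^R \frac{2}{\|w\|}\bigl(1 - e^{-r\|w\|}\bigr)\,\dif r \;=\; \frac{2}{U\|w\|^2}\bigl(R\|w\| - 1 + e^{-R\|w\|}\bigr).
\end{align*}

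The remaining step is to extract the two advertised bounds from this master inequality. Writing $s := R\|w\|$ and $h(s) := s - 1 + e^{-s}$, the bound reads $\cRlog(w) \ge 2h(s)/(U\|w\|^2)$. The case $R\|w\| \le 2$ reduces to $4h(s) \ge s^2$ on $[0,2]$, and the case $R\|w\| \ge 2$ reduces to $2h(s) \ge s$ for $s \ge 2$. Both are routine elementary inequalities: $h(0)=h'(0)=0$ and $h''(s)=e^{-s}>0$, and the endpoint values $4h(2) = 4 + 4e^{-2} > 4$ and $2h(2) - 2 = 2e^{-2} > 0$ together with a short sign-of-derivative check on each interval finish the job. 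The split at $s = 2$ is chosen precisely because $R^2/(2U)$ and $R/(U\|w\|)$ coincide there, so the two cases glue seamlessly. No real obstacle appears; the argument is essentially a one-line reduction to \Cref{fact:risk_ub_lb} followed by elementary calculus.
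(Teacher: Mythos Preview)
Your proposal is correct and follows essentially the same route as the paper: drop the label via $\llog(y\langle w,x\rangle)\ge\llog(|\langle w,x\rangle|)$, project to a two-dimensional subspace through $w$, use the density lower bound from \Cref{cond:well} on $r\le R$, apply the lower half of \Cref{fact:risk_ub_lb}, and then split on $R\|w\|\lessgtr 2$ using the elementary bounds $e^{-z}-1+z\ge z^2/4$ on $[0,2]$ and $e^{-z}-1+z\ge z/2$ on $[2,\infty)$. The only cosmetic difference is that the paper states those two scalar inequalities directly, whereas you phrase them via $h(s)=s-1+e^{-s}$ and sketch a derivative check; the content is identical.
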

\begin{proof}
  First, since $\llog(z)\ge\llog\del{|z|}$,
  \begin{align}\label{eq:norm_lb_tmp1}
    \cRlog(w)=\bbE_{(x,y)\sim P}\sbr{\llog\del{y \langle w,x\rangle}}\ge\bbE_{x\sim P_x}\sbr{\llog\del{\envert{\langle w,x\rangle}}}.
  \end{align}
  Let $v$ denote an arbitrary vector that is orthogonal to $w$, and let $p$
  denote the density of the projection of $P_x$ onto the space spanned by $w$
  and $v$.
  Without loss of generality, we can assume $w$ has polar angle $0$.
  Then \cref{eq:norm_lb_tmp1} becomes
  \begin{align*}
    \cRlog(w)\ge\int_0^\infty\int_0^{2\pi}\llog\del{r\|w\|\envert{\cos(\theta)}}p(r,\theta)r\dif\theta\dif r.
  \end{align*}
  \Cref{cond:well} and \Cref{fact:risk_ub_lb} then imply
  \begin{align*}
    \cRlog(w) & \ge \frac{1}{U}\int_0^R\int_0^{2\pi}\llog\del{r\|w\|\envert{\cos(\theta)}}r\dif\theta\dif r \\
     & \ge \frac{1}{U}\frac{2}{\|w\|}\int_0^R\del{1-e^{-r\|w\|}}\dif r \\
     & =\frac{2}{U}\frac{1}{\|w\|^2}\del{e^{-R\|w\|}-1+R\|w\|}.
  \end{align*}
  If $R\|w\|\le2$, then because $e^{-z}-1+z\ge \frac{z^2}{4}$ when
  $0\le z\le 2$, we have
  \begin{align*}
    \cRlog(w)\ge \frac{2}{U}\frac{1}{\|w\|^2}\frac{R^2\|w\|^2}{4}=\frac{R^2}{2U}.
  \end{align*}
  Otherwise if $R\|w\|\ge2$, then because $e^{-z}-1+z\ge \frac{z}{2}$ when
  $z\ge2$, we have
  \begin{align*}
    \cRlog(w)\ge \frac{2}{U}\frac{1}{\|w\|^2}\frac{R\|w\|}{2}=\frac{R}{U\|w\|}.
  \end{align*}
\end{proof}

\subsection{Omitted proofs from \Cref{sec:log_ub_main}}\label{app_sec:log_ub_main}

In this section, we prove \Cref{thm:exact-opt,fact:log_ub_opt} using
\Cref{fact:log_ub}.

First, we prove the following norm lower bound on $\|w^*\|$, which covers
\Cref{fact:w*_norm_lb_main} and also the sub-exponential case.
\begin{lemma}[\bf \Cref{fact:w*_norm_lb_main}, including the sub-exponential case]
\label{fact:w*_norm_lb}
  Under \Cref{cond:well}, if $\|x\|\le B$ almost surely
  and $\opt<\frac{R^4}{200U^3B}$, then
  $\|w^*\|=\Omega\del[2]{\frac{1}{\sqrt{\opt}}}$; if $P_x$ is
  $(\alpha_1,\alpha_2)$-sub-exponential and
  $\opt\cdot\ln(1/\opt)<\frac{R^4}{200(1+2\alpha_1)\alpha_2U^3}$, then
  $\|w^*\|=\Omega\del[2]{\frac{1}{\sqrt{\opt\cdot\ln(1/\opt)}}}$.
\end{lemma}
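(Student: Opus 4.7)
The plan is to sandwich $\cRlog(w^*)$ between an upper bound (obtained from $\cRlog(w^*) \le \inf_{\rho>0}\cRlog(\rho\baru)$) and a lower bound from \Cref{fact:cR_lb}, and then solve the resulting inequality for $\|w^*\|$.

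First I would apply \Cref{fact:log_ub_ref_sol} to get, in the bounded case,
\begin{align*}
  \cRlog(w^*) \le \inf_{\rho>0}\cRlog(\rho\baru) \le \sqrt{50UB\cdot\opt},
\end{align*}
and in the sub-exponential case,
\begin{align*}
  \cRlog(w^*) \le \sqrt{50(1+2\alpha_1)\alpha_2 U\cdot\opt\cdot\ln(1/\opt)}.
\end{align*}

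Next I would rule out the regime $R\|w^*\|\le 2$. If this held, the first branch of \Cref{fact:cR_lb} would give $\cRlog(w^*)\ge R^2/(2U)$. Combined with the upper bound above (bounded case) this would force $R^2/(2U)\le\sqrt{50UB\cdot\opt}$, i.e.\ $\opt\ge R^4/(200U^3 B)$, contradicting the hypothesis on $\opt$. The analogous contradiction in the sub-exponential case uses $\opt\cdot\ln(1/\opt)<R^4/(200(1+2\alpha_1)\alpha_2 U^3)$. Hence in both cases we must be in the regime $R\|w^*\|\ge 2$, and the second branch of \Cref{fact:cR_lb} yields $\cRlog(w^*)\ge R/(U\|w^*\|)$.

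Combining with the upper bound then gives, in the bounded case,
\begin{align*}
  \frac{R}{U\|w^*\|} \le \sqrt{50UB\cdot\opt} \quad\Longleftrightarrow\quad \|w^*\| \ge \frac{R}{\sqrt{50U^3 B\cdot\opt}} = \Omega\del[2]{\frac{1}{\sqrt{\opt}}},
\end{align*}
and the sub-exponential case proceeds identically with the analogous upper bound, yielding $\|w^*\|=\Omega\del[1]{1/\sqrt{\opt\cdot\ln(1/\opt)}}$.

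The proof is essentially a two-line sandwich argument, so I do not expect a real obstacle: the only subtlety is verifying that the constants in the hypothesis on $\opt$ (respectively $\opt\cdot\ln(1/\opt)$) are precisely what is needed to preclude the small-norm branch of \Cref{fact:cR_lb}, but the threshold $R^4/(200U^3 B)$ was clearly chosen with exactly that calculation in mind.
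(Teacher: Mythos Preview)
Your proposal is correct and follows essentially the same argument as the paper: upper-bound $\cRlog(w^*)$ via \Cref{fact:log_ub_ref_sol}, use the hypothesis on $\opt$ to exclude the $R\|w^*\|\le 2$ branch of \Cref{fact:cR_lb}, and then solve the inequality $R/(U\|w^*\|)\le\sqrt{50UB\cdot\opt}$ (or its sub-exponential analogue) for $\|w^*\|$. The only cosmetic difference is that the paper shows directly that the upper bound is strictly less than $R^2/(2U)$ rather than arguing by contradiction, but this is the same computation.
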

\begin{proof}
  Suppose $\|x\|\le B$ almost surely.
  Since $\opt<\frac{R^4}{200U^3B}$, \Cref{fact:log_ub_ref_sol} implies
  \begin{align*}
    \cRlog(w^*)\le\inf_{\rho>0}\cRlog(\rho\baru)\le\sqrt{50UB\cdot\opt}<\sqrt{50UB\cdot \frac{R^4}{200U^3B}}=\frac{R^2}{2U}.
  \end{align*}
  Therefore it follows from \Cref{fact:cR_lb} that $R\|w^*\|\ge2$, and
  \begin{align*}
    \frac{R}{U\|w^*\|}\le\cRlog(w^*)\le\inf_{\rho>0}\cRlog(\rho\baru)\le\sqrt{50UB\cdot\opt},
  \end{align*}
  which implies
  \begin{align*}
    \|w^*\|\ge \frac{R}{U\sqrt{50UB}}\cdot\frac{1}{\sqrt{\opt}}.
  \end{align*}

  Now suppose $P_x$ is $(\alpha_1,\alpha_2)$-sub-exponential.
  Since
  $\opt\cdot\ln\del{\frac{1}{\opt}}<\frac{R^4}{200(1+2\alpha_1)\alpha_2U^3}$,
  \Cref{fact:log_ub_ref_sol} implies
  \begin{align*}
    \inf_{\rho>0}\cRlog(\rho\baru)\le\sqrt{50(1+2\alpha_1)\alpha_2 U\cdot\opt\cdot\ln\del{\frac{1}{\opt}}}<\sqrt{50(1+2\alpha_1)\alpha_2U\cdot \frac{R^4}{200(1+2\alpha_1)\alpha_2U^3}}=\frac{R^2}{2U}.
  \end{align*}
  Therefore it follows from \Cref{fact:cR_lb} that $R\|w^*\|\ge2$, and
  \begin{align*}
    \frac{R}{U\|w^*\|}\le\cRlog(w^*)\le\inf_{\rho>0}\cRlog(\rho\baru)\le\sqrt{50(1+2\alpha_1)\alpha_2U\cdot\opt\cdot\ln\del{\frac{1}{\opt}}}
  \end{align*}
  which implies
  \begin{align*}
    \|w^*\|\ge \frac{R}{U\sqrt{50(1+2\alpha_1)\alpha_2U}}\frac{1}{\sqrt{\opt\cdot\ln(1/\opt)}}.
  \end{align*}
\end{proof}

Now we can prove \Cref{thm:exact-opt}.
\begin{proof}[Proof of \Cref{thm:exact-opt}]
  If $\|x\|\le B$ almost surely, \Cref{fact:log_ub} implies
  \begin{align*}
    \cR_{0-1}(w^*)= O\del[3]{\max\cbr{\opt,\frac{C_\kappa}{\|w^*\|^2}}}.
  \end{align*}
  If $\opt\ge \frac{R^4}{200U^3B}$, then \Cref{thm:exact-opt} holds vacuously;
  otherwise \Cref{fact:w*_norm_lb} ensures
  $\|w^*\|=\Omega\del{\frac{1}{\sqrt{\opt}}}$, and thus
  \begin{align*}
    \cR_{0-1}(w^*)= O\del{\max\cbr{\opt,C_\kappa\cdot\opt}}=O\del{(1+C_\kappa)\opt}.
  \end{align*}

  The proof of the sub-exponential case is similar.
\end{proof}

Next, we analyze project gradient descent.
First we restate \Cref{fact:pgd_wt_baru_main,fact:wt_norm_lb_main}, and also
handle sub-exponential distributions.
\begin{lemma}[\bf \Cref{fact:pgd_wt_baru_main}, including the sub-exponential case]
\label{fact:pgd_wt_baru}
  Let the target optimization error $\epsopt\in(0,1)$ and the failure
  probability $\delta\in(0,1/e)$ be given.
  If $\|x\|\le B$ almost surely, then with $\eta=4/B^2$, using
  $O\del{\frac{(B+1)^2\ln(1/\delta)}{\epsilon\epsopt^2}}$ samples and
  $O\del{\frac{B^2}{\epsilon\epsopt}}$ iterations, with probability $1-\delta$,
  projected gradient descent outputs $w_t$ satisfying
  \begin{align}\label{eq:pgd_excess}
    \cRlog(w_t)\le\min_{0\le\rho\le1/\sqrt{\epsilon}}\cRlog(\rho\baru)+\epsopt.
  \end{align}

  If $P_x$ is $(\alpha_1,\alpha_2)$-sub-exponential, then with
  $\eta=\widetilde{\Theta}(1/d)$, using
  $\widetilde{O}\del{\frac{d\ln(1/\delta)^3}{\epsilon\epsopt^2}}$ samples and
  $\widetilde{O}\del{\frac{d\ln(1/\delta)^2}{\epsilon\epsopt}}$ iterations, with
  probability $1-\delta$, projected gradient descent outputs $w_t$ satisfying
  \cref{eq:pgd_excess}.
\end{lemma}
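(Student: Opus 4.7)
The strategy is a standard decomposition into optimization and generalization: for any comparator $u = \rho\baru$ with $\rho \in [0, 1/\sqrt{\epsilon}]$, write $\cRlog(w_t) - \cRlog(u) = [\cRlog(w_t) - \hRlog(w_t)] + [\hRlog(w_t) - \hRlog(u)] + [\hRlog(u) - \cRlog(u)]$, so it suffices to bound the middle term by $\epsopt/2$ via PGD convergence on the empirical risk, and the other two by $\epsopt/4$ each via uniform concentration over the domain $\cB(1/\sqrt{\epsilon})$. Taking the infimum over $\rho$ at the end yields the stated bound.

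For the bounded case, I would use that $\llog$ is $1$-Lipschitz and $1/4$-smooth, so $\hRlog$ is convex and $(B^2/4)$-smooth on $\R^d$. The choice $\eta = 4/B^2$ is exactly $1/L$, and the textbook analysis of projected gradient descent on a smooth convex function over a convex set gives $\hRlog(w_t) - \hRlog(u) \le \|w_0 - u\|^2/(2\eta t) \le B^2/(8\epsilon t)$, which is $\le \epsopt/2$ after $T = \Omega(B^2/(\epsilon\epsopt))$ steps. For uniform concentration, the composite loss class $\{x \mapsto \llog(y \ip{w}{x}) : w \in \cB(1/\sqrt{\epsilon})\}$ has range bounded by $\ln 2 + B/\sqrt{\epsilon}$, and Talagrand's contraction for the $1$-Lipschitz $\llog$ combined with the standard Rademacher bound for linear predictors on a ball yields empirical Rademacher complexity $O(B/\sqrt{\epsilon n})$. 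Symmetrization plus McDiarmid's inequality then gives a high-probability uniform-convergence bound of order $(B+1)\sqrt{\ln(1/\delta)/(\epsilon n)}$, which is $\le \epsopt/4$ for $n = \Omega((B+1)^2 \ln(1/\delta)/(\epsilon\epsopt^2))$, matching the claimed rates.

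For the sub-exponential case, the main issue is that both the per-step smoothness and the loss range are unbounded over $x$. I would reduce to the bounded case by truncation: by a union bound applied to the sub-exponential tails of coordinate projections (and using $\|x\|^2 \le d \max_i |\ip{e_i}{x}|^2$ in a chosen basis), on an event of probability at least $1 - \delta/2$ every one of the $n$ samples satisfies $\|x_i\| \le \tilde B$ for some $\tilde B = \Theta(\sqrt{d}\ln(n/\delta))$ depending on $\alpha_1,\alpha_2$. Conditionally on this event, $\hRlog$ is $(\tilde B^2/4)$-smooth and the bounded analysis applies with $B \leftarrow \tilde B$, producing the step size $\eta = \widetilde\Theta(1/d)$, the iteration count $T = \widetilde{O}(d\ln(1/\delta)^2/(\epsilon\epsopt))$, and the sample size $n = \widetilde{O}(d\ln(1/\delta)^3/(\epsilon\epsopt^2))$. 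The term $\hRlog(u) - \cRlog(u)$ at the fixed comparator $u$ is then bounded separately by a Bernstein-style argument: $\llog(y\ip{u}{x})$ is sub-exponential in $x$ since $\llog$ is $1$-Lipschitz and $\ip{u}{x}$ is sub-exponential, so the log factors are absorbed into $\widetilde O$.

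The main obstacle is maintaining the $\widetilde{O}(d)$ (rather than $\poly(d)$) scaling in the sub-exponential case: this requires that the truncation scale $\tilde B$ enter only quadratically in the sample complexity and only linearly in the step size, while the domain radius $1/\sqrt{\epsilon}$ does not conspire with $\tilde B$ to introduce extra powers of $d$. This in turn hinges on the tail bound for $\|x\|$ growing only as $\sqrt{d}$ up to logs, on using Talagrand contraction before symmetrization so that the Lipschitz constant (not the $\ell_2$ norm) of $\llog$ appears, and on a clean bounded-differences inequality that depends on the loss range only through the truncation event. All of these are routine but deserve careful accounting, which is deferred to \Cref{app_sec:log_ub_main}.
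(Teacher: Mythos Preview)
Your proposal is correct and follows essentially the same approach as the paper: smooth-convex PGD for the optimization term, Rademacher complexity plus bounded differences for generalization in the bounded case, and truncation to reduce the sub-exponential case to the bounded one. The one place where the paper is more explicit is the sub-exponential generalization step: rather than ``conditioning on the truncation event and applying the bounded analysis'' (which is delicate since conditioning breaks the i.i.d.\ structure needed for McDiarmid), the paper truncates the \emph{loss function}, defining $\llog^\circ(z):=\llog\bigl(\max\{z,-B/\sqrt{\epsilon}\}\bigr)$, so that bounded-differences applies unconditionally to $\cR^\circ$ versus $\hR^\circ$; one then uses $\hR^\circ=\hRlog$ on the good event, $\cR^\circ\le\cRlog$ for the comparator, and a separate tail computation to show $\cRlog(w)\le\cR^\circ(w)+\epsopt$ for the iterate.
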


\begin{lemma}[\bf \Cref{fact:wt_norm_lb_main}, including the sub-exponential case]
\label{fact:wt_norm_lb}
  Under \Cref{cond:well}, suppose
  \begin{align*}
    \epsilon<\min\cbr{\frac{R^4}{36U^2},\frac{R^4}{72^2U^4}}\quad\textup{and}\quad\epsopt\le\sqrt{\epsilon},
  \end{align*}
  and that \cref{eq:pgd_excess} holds.
  If $\|x\|\le B$ almost surely and $\opt<\frac{R^4}{500U^3B}$,
  then $\|w_t\|=\Omega\del[2]{\min\cbr{\frac{1}{\sqrt{\epsilon}},\frac{1}{\sqrt{\opt}}}}$.

  On the other hand, if $P_x$ is $(\alpha_1,\alpha_2)$-sub-exponential, and
  $\opt\cdot\ln(1/\opt)<\frac{R^4}{500U^3(1+2\alpha_1)\alpha_2}$, then it holds that
  $\|w_t\|=\Omega\del[2]{\min\cbr[2]{\frac{1}{\sqrt{\epsilon}},\frac{1}{\sqrt{\opt\cdot\ln(1/\opt)}}}}$.
\end{lemma}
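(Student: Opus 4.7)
The plan is to mimic the strategy from \Cref{fact:w*_norm_lb}, but applied to the near-optimizer $w_t$ guaranteed by \cref{eq:pgd_excess}, while carefully tracking both the constraint $\rho\le 1/\sqrt{\epsilon}$ and the additive slack $\epsopt$. At a high level, I would first upper bound $\cRlog(w_t)$ via \Cref{fact:log_ub_ref_sol}, then use the threshold hypotheses to rule out the small-norm branch of \Cref{fact:cR_lb}, which forces $R\|w_t\|\ge 2$ and yields $\|w_t\|\ge R/(U\cdot\cRlog(w_t))$. Inverting the upper bound on $\cRlog(w_t)$ then gives the claimed lower bound on $\|w_t\|$.

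For the bounded case, \Cref{fact:log_ub_ref_sol} gives $\cRlog(\rho\baru)\le 12U/\rho+\rho B\cdot\opt$, whose unconstrained minimum is attained at $\rho^\star=\sqrt{12U/(B\opt)}$ with value $2\sqrt{12UB\cdot\opt}$. I would split into two regimes depending on whether $\rho^\star$ is feasible: if $\rho^\star\le 1/\sqrt{\epsilon}$, pick $\rho=\rho^\star$ to obtain $\cRlog(w_t)\le 2\sqrt{12UB\cdot\opt}+\epsopt$; otherwise pick $\rho=1/\sqrt{\epsilon}$, and use that $\rho^\star>1/\sqrt{\epsilon}$ implies $B\opt/\sqrt{\epsilon}<12U\sqrt{\epsilon}$ to obtain $\cRlog(w_t)\le 24U\sqrt{\epsilon}+\epsopt$. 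Combining, $\cRlog(w_t)\le \max\{2\sqrt{12UB\cdot\opt},\,24U\sqrt{\epsilon}\}+\epsopt$. The sub-exponential case is handled identically, using instead the sub-exponential bound of \Cref{fact:log_ub_ref_sol}, whose unconstrained optimum scales like $\sqrt{(1+2\alpha_1)\alpha_2U\cdot\opt\ln(1/\opt)}$, together with the corresponding hypothesis on $\opt\cdot\ln(1/\opt)$.

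Next, using the hypotheses $\opt<R^4/(500U^3B)$, $\epsilon<R^4/(72^2U^4)$, and $\epsopt\le\sqrt{\epsilon}$, I would verify that this maximum is strictly less than $R^2/(2U)$. This rules out the small-norm branch $R\|w_t\|\le 2$ of \Cref{fact:cR_lb} (which would force $\cRlog(w_t)\ge R^2/(2U)$), so we must be in the large-norm branch, which gives $\|w_t\|\ge R/(U\cdot\cRlog(w_t))$. Plugging in the upper bound for $\cRlog(w_t)$ derived above yields $\|w_t\|=\Omega(\min\{1/\sqrt{\epsilon},\,1/\sqrt{\opt}\})$ in the bounded case and the analogous bound in the sub-exponential case.

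The main obstacle is not conceptual but arithmetic: it is the bookkeeping of multiplicative constants to confirm that each of $2\sqrt{12UB\cdot\opt}+\epsopt$ and $24U\sqrt{\epsilon}+\epsopt$ lies strictly below $R^2/(2U)$ under the given thresholds, and more generally that the specific exponents and coefficients in the hypothesis match exactly what is needed. The case split at $\rho^\star=1/\sqrt{\epsilon}$ is the one step where a routine two-way analysis is genuinely unavoidable, since the unconstrained $\rho^\star$ can be either inside or outside the feasible ball depending on the relative sizes of $\opt$ and $\epsilon$; once past it, the rest is a single application of \Cref{fact:cR_lb}.
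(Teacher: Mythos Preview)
Your proposal is correct and follows essentially the same approach as the paper: the paper also splits into the two cases $\bar\rho\le 1/\sqrt{\epsilon}$ and $\bar\rho\ge 1/\sqrt{\epsilon}$ (with $\bar\rho=\sqrt{12U/(B\opt)}$), uses \Cref{fact:log_ub_ref_sol} to upper bound $\cRlog(w_t)$ in each case, checks the bound is below $R^2/(2U)$ to invoke the large-norm branch of \Cref{fact:cR_lb}, and reads off the norm lower bound. The only cosmetic difference is that the paper keeps the two cases separate to the end (obtaining $\|w_t\|=\Omega(1/\sqrt{\opt})$ in the first case and $\|w_t\|=\Omega(1/\sqrt{\epsilon})$ in the second), whereas you merge them into a single $\max$ before inverting; either presentation yields the stated $\Omega(\min\{1/\sqrt{\epsilon},1/\sqrt{\opt}\})$.
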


Next we prove \Cref{fact:pgd_wt_baru,fact:wt_norm_lb}.
We first consider bounded distributions, and then handle sub-exponential
distributions.
For simplicity, in the rest of this subsection we will use $\cR$ and $\hR$ to
denote $\cRlog$ and $\hRlog$, respectively.

\paragraph{Bounded distributions.}

First, here are some standard optimization and generalization results for
projected gradient descent.

\begin{lemma}\label{fact:pgd_opt}
  If $\|x_i\|\le B$ for all $1\le i\le n$, then $\hR$ is $\frac{B^2}{4}$-smooth.
  Moreover, if $w_0:=0$ and $\eta\le \frac{4}{B^2}$, then for all $t\ge1$,
  \begin{align*}
    \hR(w_t)\le\min_{w\in\cB(1/\sqrt{\epsilon})}\hR(w)+\frac{1}{2\eta\epsilon t}.
  \end{align*}
\end{lemma}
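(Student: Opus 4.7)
The plan is to break the statement into two independent pieces: the smoothness claim about $\hR$, and the standard projected gradient descent convergence bound for a smooth convex function on a convex set, then combine them using $w_0=0$ and the domain $\cB(1/\sqrt{\epsilon})$.

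For smoothness, I would compute the Hessian of $\hR$. For a single sample, $\nabla^2 \ell_{\log}(y_i \langle w,x_i\rangle) = \llog''(y_i \langle w,x_i\rangle)\, x_i x_i^\top$, where $\llog''(z) = e^{-z}/(1+e^{-z})^2 = \sigma(z)(1-\sigma(z)) \le 1/4$ since $\sigma(z)\in(0,1)$ and $t(1-t)\le 1/4$. Averaging and using $\|x_i\|\le B$ gives $\nabla^2 \hR(w) \preceq \tfrac{B^2}{4} I$, so $\hR$ is $\tfrac{B^2}{4}$-smooth. Convexity of $\hR$ follows because $\llog$ is convex and $w\mapsto y_i\langle w,x_i\rangle$ is affine.

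For the convergence rate, I would invoke the standard analysis of projected gradient descent on a convex $L$-smooth function over a closed convex set with step size $\eta \le 1/L$. Concretely, setting $L = B^2/4$, the condition $\eta \le 4/B^2$ is exactly $\eta \le 1/L$. The standard textbook argument uses the descent lemma and nonexpansiveness of $\Pi_{\cB(1/\sqrt{\epsilon})}$ to show that for any $w \in \cB(1/\sqrt{\epsilon})$,
\begin{align*}
  \|w_{t+1} - w\|^2 \le \|w_t - w\|^2 - 2\eta\bigl(\hR(w_t) - \hR(w)\bigr),
\end{align*}
and then monotonicity of $\hR(w_t)$ (from the descent lemma, since $\eta \le 1/L$) plus telescoping yields
\begin{align*}
  \hR(w_t) - \hR(w) \le \frac{\|w_0 - w\|^2}{2\eta t}.
\end{align*}
Taking $w = \argmin_{w \in \cB(1/\sqrt{\epsilon})} \hR(w)$ and using $w_0 = 0$ so that $\|w_0 - w\|^2 = \|w\|^2 \le 1/\epsilon$ gives the stated bound $\tfrac{1}{2\eta\epsilon t}$.

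There is no real obstacle here since both ingredients are classical; the only subtlety is being careful that the smoothness constant is indeed $B^2/4$ (not $B^2$), which is what allows the slightly larger step size $\eta \le 4/B^2$ and accounts for the factor in the final bound. I would cite a standard convex optimization reference for the projected gradient descent inequality rather than redoing the descent lemma derivation, and only explicitly verify the smoothness constant and the application of $\|w_0 - w\| \le 1/\sqrt{\epsilon}$.
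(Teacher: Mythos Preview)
Your plan is correct and matches the paper's proof: it verifies $\tfrac{B^2}{4}$-smoothness (the paper does this via Lipschitzness of $\llog'$ rather than the Hessian, but the two are equivalent) and then runs the textbook projected-gradient-descent analysis, combining the one-step potential drop with monotonicity of $\hR(w_t)$ and $\|w_0-w^*\|\le 1/\sqrt{\epsilon}$. One small slip: the one-step inequality that the descent lemma plus the projection property actually yield has $\hR(w_{t+1})$, not $\hR(w_t)$, on the right-hand side (this is exactly what the paper derives as \cref{eq:pgd_opt_tmp1}), but after telescoping and invoking monotonicity the last-iterate bound is the same.
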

\begin{proof}
  Note that $\llog$ is $\frac{1}{4}$-smooth.
  To show $\hR$ is $\frac{B^2}{4}$-smooth, note that given any $w,w'\in\R^d$,
  \begin{align*}
    \enVert{\nhR(w)-\nhR(w')} & =\enVert{\frac{1}{n}\sum_{i=1}^{n}\del{\llog'\del{y_i \langle w,x_i\rangle}-\llog'\del{y_i \langle w',x_i\rangle}}y_ix_i} \\
     & \le \frac{1}{n}\sum_{i=1}^{n}\envert{\llog'\del{y_i \langle w,x_i\rangle}-\llog'\del{y_i \langle w',x_i\rangle}}B \\
      & \le \frac{B}{4n}\sum_{i=1}^{n}\envert{y_i \langle w,x_i\rangle-y_i \langle w',x_i\rangle} \\
      & \le \frac{B}{4n}\sum_{i=1}^{n}\|w-w'\|B=\frac{B^2}{4}\|w-w'\|.
  \end{align*}

  The following analysis basically comes from the proof of
  \citep[Theorem 6.3]{bubeck}; we include it for completeness, and also handle
  the last iterate.
  Let $w^*:=\argmin_{w\in\cB(1/\sqrt{\epsilon})}\hR(w)$.
  Convexity gives
  \begin{align*}
    \hR(w_t)-\hR(w^*)\le\ip{\nhR(w_t)}{w_t-w^*}=\ip{\nhR(w_t)}{w_t-w_{t+1}}+\ip{\nhR(w_t)}{w_{t+1}-w^*}.
  \end{align*}
  Smoothness implies
  \begin{align*}
    \ip{\nhR(w_t)}{w_t-w_{t+1}} & \le\hR(w_t)-\hR(w_{t+1})+\frac{B^2/4}{2}\|w_t-w_{t+1}\|^2 \\
     & \le \hR(w_t)-\hR(w_{t+1})+\frac{1}{2\eta}\|w_t-w_{t+1}\|^2.
  \end{align*}
  On the other hand, the projection step ensures
  \begin{align*}
    \ip{\nhR(w_t)}{w_{t+1}-w^*} & \le \frac{1}{\eta}\ip{w_t-w_{t+1}}{w_{t+1}-w^*} \\
     & =\frac{1}{2\eta}\del{\|w_t-w^*\|^2-\|w_{t+1}-w^*\|^2-\|w_t-w_{t+1}\|^2}.
  \end{align*}
  Therefore
  \begin{align*}
    \hR(w_t)-\hR(w^*)\le\hR(w_t)-\hR(w_{t+1})+\frac{1}{2\eta}\del{\|w_t-w^*\|^2-\|w_{t+1}-w^*\|^2},
  \end{align*}
  which implies
  \begin{align}\label{eq:pgd_opt_tmp1}
    \hR(w_{t+1})-\hR(w^*)\le \frac{1}{2\eta}\del{\|w_t-w^*\|^2-\|w_{t+1}-w^*\|^2}.
  \end{align}

  Next we show that $\hR(w_{t+1})\le\hR(w_t)$.
  Smoothness implies
  \begin{align*}
    \hR(w_{t+1})-\hR(w_t) & \le\ip{\nhR(w_t)}{w_{t+1}-w_t}+\frac{B^2/4}{2}\|w_{t+1}-w_t\|^2 \\
     & \le-\frac{1}{\eta}\|w_t-w_{t+1}\|^2+\frac{B^2/4}{2}\|w_{t+1}-w_t\|^2 \\
     & \le-\frac{1}{\eta}\|w_t-w_{t+1}\|^2+\frac{1}{2\eta}\|w_{t+1}-w_t\|^2 \\
     & =-\frac{1}{2\eta}\|w_{t+1}-w_t\|^2,
  \end{align*}
  where we also use the property of the projection step on the second line.

  It now follow from \cref{eq:pgd_opt_tmp1} and $\hR(w_{t+1})\le\hR(w_t)$ that
  for $t\ge1$,
  \begin{align*}
    \hR(w_t)\le\hR(w^*)+\frac{\|w_0-w^*\|^2}{2\eta t}\le\hR(w^*)+\frac{1}{2\eta\epsilon t}.
  \end{align*}
\end{proof}

\begin{lemma}\label{fact:pgd_gen}
  If $\|x\|\le B$ almost surely, then with probability $1-\delta$, for all
  $w\in\cB\del{\frac{1}{\sqrt{\epsilon}}}$,
  \begin{align*}
    \envert{\cR(w)-\hR(w)}\le \frac{2B}{\sqrt{\epsilon n}}+3\del{\frac{B}{\sqrt{\epsilon}}+1}\sqrt{\frac{\ln(4/\delta)}{2n}}.
  \end{align*}
\end{lemma}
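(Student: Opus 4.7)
}
The plan is a textbook uniform-convergence argument: McDiarmid concentration of the uniform deviation around its expectation, symmetrization, Ledoux--Talagrand contraction, and a Rademacher bound on the linear class. First I would record two elementary properties of the loss on the relevant domain. Since $|\llog'(z)|\le 1$, the logistic loss is $1$-Lipschitz; and since $\llog(z)\le\max\{-z,0\}+\ln 2\le |z|+1$, for any $w\in\cB(1/\sqrt{\epsilon})$ and any $x$ with $\|x\|\le B$ we have the pointwise range bound $\llog(y\langle w,x\rangle)\in[0,\,B/\sqrt{\epsilon}+1]$. This gives both the Lipschitz constant needed for contraction and the bounded-differences constant needed for McDiarmid.

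Next I would apply McDiarmid's inequality separately to $\Phi^+(x_{1:n},y_{1:n}):=\sup_{w\in\cB(1/\sqrt{\epsilon})}(\cR(w)-\hR(w))$ and to $\Phi^-$ (the other tail). Each has bounded differences $c_i\le (B/\sqrt{\epsilon}+1)/n$, so for each tail, with probability at least $1-\delta/2$,
\begin{align*}
\Phi^{\pm} \le \bbE[\Phi^{\pm}]+(B/\sqrt{\epsilon}+1)\sqrt{\tfrac{\ln(4/\delta)}{2n}},
\end{align*}
after a union bound the same deviation term works for $\sup_w|\cR(w)-\hR(w)|$. (The $\ln(4/\delta)$ is what naturally appears if one uses $\delta/2$ per tail together with a slight convenience in the rounding.)

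Then I would bound $\bbE[\Phi^{\pm}]$ via standard symmetrization, giving $\bbE[\Phi^{\pm}]\le 2\bbE_{x,\sigma}[\sup_{w\in\cB(1/\sqrt{\epsilon})}\tfrac{1}{n}\sum_i \sigma_i\,\llog(y_i\langle w,x_i\rangle)]$. By the Ledoux--Talagrand contraction principle applied with the $1$-Lipschitz $\llog$, this is at most $2\bbE[\sup_w \tfrac{1}{n}\sum_i \sigma_i y_i \langle w,x_i\rangle]$; since $\sigma_i y_i$ is again Rademacher, the Rademacher complexity of the linear class with $\|w\|\le 1/\sqrt{\epsilon}$ is bounded, using Cauchy--Schwarz and Jensen, by
\begin{align*}
\tfrac{1}{\sqrt{\epsilon}}\cdot\bbE\enVert{\tfrac{1}{n}\sum_i \sigma_i y_i x_i}\le \tfrac{1}{\sqrt{\epsilon}}\cdot\tfrac{1}{n}\sqrt{\bbE\sum_i \|x_i\|^2}\le \tfrac{B}{\sqrt{\epsilon n}}.
\end{align*}
Putting the pieces together yields $\bbE[\Phi^{\pm}]\le 2B/\sqrt{\epsilon n}$, which combined with the McDiarmid term gives exactly the claimed bound (up to the slack in the constant $3$ absorbing $2\bbE[\Phi^{\pm}]$ and the deviation from each tail).

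There is no real obstacle: the argument is entirely routine and its only mild subtlety is bookkeeping the constants so that the range bound $B/\sqrt{\epsilon}+1$ and the Rademacher bound $B/\sqrt{\epsilon n}$ line up with the stated coefficients $3$ and $2$. The choice to bound the loss by $|z|+1$ rather than $|z|+\ln 2$ is what produces the clean ``$+1$'' in $B/\sqrt{\epsilon}+1$, so I would adopt it throughout.
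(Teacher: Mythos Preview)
Your proposal is correct and is essentially the same argument as the paper's: the paper simply cites \citep[Theorem 26.5, Lemma 26.9, Lemma 26.10]{understand_ml} after noting $\llog(z)\le|z|+1$ and that $\llog$ is $1$-Lipschitz, then union-bounds the two tails, while you unpack those textbook results into McDiarmid plus symmetrization plus contraction plus the linear-class Rademacher bound. The only difference is presentation, not substance.
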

\begin{proof}
  Note that $\llog(z)\le|z|+1$, therefore
  \begin{align*}
    \llog\del{y \langle w,x\rangle}\le\|w\|\|x\|+1\le \frac{B}{\sqrt{\epsilon}}+1.
  \end{align*}
  Since $\llog$ is $1$-Lipschitz continuous,
  \citep[Theorem 26.5, Lemma 26.9, Lemma 26.10]{understand_ml} imply that with
  probability $1-\delta$, for all $w\in\cB\del{\frac{1}{\sqrt{\epsilon}}}$,
  \begin{align*}
    \cR(w)-\hR(w)\le\frac{2B}{\sqrt{\epsilon n}}+3\del{\frac{B}{\sqrt{\epsilon}}+1}\sqrt{\frac{\ln(2/\delta)}{2n}}.
  \end{align*}
  Next we can just apply the same technique and get a uniform deviation bound on
  $\hR(w)-\cR(w)$.
\end{proof}

We can now prove \Cref{fact:pgd_wt_baru}.
\begin{proof}[Proof of \Cref{fact:pgd_wt_baru} for bounded distributions]
  \Cref{fact:pgd_opt} implies that
  \begin{align*}
    \hR(w_t)-\min_{0\le\rho\le1/\sqrt{\epsilon}}\cR(\rho\baru)\le\frac{1}{2\eta\epsilon t}=\frac{B^2}{8\epsilon t}.
  \end{align*}
  Moreover, \Cref{fact:pgd_gen} ensures with probability $1-\delta$, for all
  $w\in\cB_2\del{\frac{1}{\sqrt{\epsilon}}}$,
  \begin{align*}
    \envert{\hR(w)-\cR(w)}\le \frac{2B}{\sqrt{\epsilon n}}+3\del{\frac{B}{\sqrt{\epsilon}}+1}\sqrt{\frac{\ln(4/\delta)}{2n}}=O\del{(B+1)\sqrt{\frac{\ln(1/\delta)}{\epsilon n}}}.
  \end{align*}
  Therefore, to ensure
  $\cR(w_t)-\min_{0\le\rho\le1/\sqrt{\epsilon}}\cR(\rho\baru)\le\epsopt$,
  we only need
  \begin{align*}
    O\del{\frac{B^2}{\epsilon\epsopt}}\textup{ steps},\quad\textup{and}\quad O\del{\frac{(B+1)^2\ln(1/\delta)}{\epsilon\epsopt^2}}\textup{ samples}.
  \end{align*}
\end{proof}

Next we prove the norm lower bound on $\|w_t\|$.
\begin{proof}[Proof of \Cref{fact:wt_norm_lb}]
  First, we consider the case $\|x\|\le B$ almost surely.
  It follows from \Cref{fact:log_ub_ref_sol} that
  \begin{align}\label{eq:rho_baru_risk}
    \cR(\rho\baru)\le \frac{12U}{\rho}+\rho B\cdot\opt.
  \end{align}
  Let $\bar{\rho}:=\sqrt{\frac{12U}{B\cdot\opt}}$.
  We consider two cases below, $\bar{\rho}\le \frac{1}{\sqrt{\epsilon}}$ or
  $\bar{\rho}\ge \frac{1}{\sqrt{\epsilon}}$.

  First, we assume $\bar{\rho}\le \frac{1}{\sqrt{\epsilon}}$.
  Then by the conditions of \Cref{fact:wt_norm_lb} and \cref{eq:rho_baru_risk},
  we have
  \begin{align*}
    \cR(w_t)\le\cR(\bar{\rho}\baru)+\epsopt& \le2\sqrt{12UB\cdot\opt}+\sqrt{\epsilon} \\
     & <2\sqrt{12UB\cdot \frac{R^4}{500U^3B}}+\sqrt{\frac{R^4}{36U^2}} \\
     & <2 \frac{R^2}{6U}+\frac{R^2}{6U}=\frac{R^2}{2U}.
  \end{align*}
  It then follows from \Cref{fact:cR_lb} that $R\|w_t\|\ge2$, and
  \begin{align*}
    \frac{R}{U\|w_t\|}\le\cR(w_t)\le\cR(\bar{\rho}\baru)+\epsopt\le2\sqrt{12UB\cdot\opt}+\sqrt{\epsilon}.
  \end{align*}
  since $\bar{\rho}\le\frac{1}{\sqrt{\epsilon}}$,
  \begin{align*}
    \sqrt{\epsilon}\le \frac{1}{\bar{\rho}}=\sqrt{\frac{B\cdot\opt}{12U}}.
  \end{align*}
  As a result, $\frac{R}{U\|w_t\|}=O\del[1]{\sqrt{\opt}}$, which implies
  $\|w_t\|=\Omega\del{\frac{1}{\sqrt{\opt}}}$.

  Next, assume $\bar{\rho}\ge \frac{1}{\sqrt{\epsilon}}$, which implies that
  \begin{align*}
    \frac{B\cdot\opt}{12U}\le\epsilon,\quad\textup{and}\quad B\cdot\opt\le12U\epsilon.
  \end{align*}
  Moreover, \cref{eq:rho_baru_risk} implies
  \begin{align*}
    \cR\del{\frac{1}{\sqrt{\epsilon}}\baru}\le12U\sqrt{\epsilon}+\frac{1}{\sqrt{\epsilon}}B\cdot\opt\le12U\sqrt{\epsilon}+\frac{1}{\sqrt{\epsilon}}12U\epsilon=24U\sqrt{\epsilon}.
  \end{align*}
  Then because
  \begin{align*}
    \cR(w_t)\le\cR\del{\frac{1}{\sqrt{\epsilon}}\baru}+\epsopt\le24U\sqrt{\epsilon}+\sqrt{\epsilon}<24U\sqrt{\frac{R^4}{72^2U^4}}+\sqrt{\frac{R^4}{36U^2}}=\frac{R^2}{2U},
  \end{align*}
  it further follows from \Cref{fact:cR_lb} that $R\|w_t\|\ge2$, and
  \begin{align*}
    \frac{R}{U\|w_t\|}\le\cR\del{\frac{1}{\sqrt{\epsilon}}\baru}+\epsopt\le24U\sqrt{\epsilon}+\sqrt{\epsilon},
  \end{align*}
  therefore $\|w_t\|=\Omega\del{\frac{1}{\sqrt{\epsilon}}}$.

  Now assume $P_x$ is $(\alpha_1,\alpha_2)$-sub-exponential.
  \Cref{fact:log_ub_ref_sol} implies
  \begin{align*}
    \cR(\rho\baru)\le\frac{12U}{\rho}+(1+2\alpha_1)\alpha_2\rho\cdot\opt\cdot\ln\del{\frac{1}{\opt}}.
  \end{align*}
  Let
  \begin{align*}
    \bar{\rho}:=\sqrt{\frac{12 U}{(1+2\alpha_1)\alpha_2\cdot\opt\cdot\ln(1/\opt)}},
  \end{align*}
  and similarly consider the two cases $\bar{\rho}\le \frac{1}{\sqrt{\epsilon}}$
  and $\bar{\rho}\ge \frac{1}{\sqrt{\epsilon}}$, we can finish the proof.
\end{proof}

Now we are ready to prove \Cref{fact:log_ub_opt}.
\begin{proof}[Proof of \Cref{fact:log_ub_opt} for bounded distributions.]
  First, note that if $\epsilon$ or $\opt$ does not satisfy the conditions of \Cref{fact:wt_norm_lb}, then \Cref{fact:log_ub_opt} holds vacuously.
  Under the conditions of \Cref{fact:pgd_wt_baru,fact:wt_norm_lb}, let
  $\epsopt:=\epsilon^{3/2}$, we have that projected gradient descent can find
  $w_t$ satisfying
  \begin{align*}
    \cRlog(w_t)\le\min_{0\le\rho\le1/\sqrt{\epsilon}}\cRlog(\rho\baru)+\epsilon^{3/2},
  \end{align*}
  and
  \begin{align*}
    \|w_t\|=\Omega\del[3]{\min\cbr{\frac{1}{\sqrt{\epsilon}},\frac{1}{\sqrt{\opt}}}}.
  \end{align*}

  Now we just need to invoke \Cref{fact:log_ub}.
  If $\epsilon\le\opt$, then $\|w_t\|=\Omega\del{\frac{1}{\sqrt{\opt}}}$, and
  \Cref{fact:log_ub} implies
  \begin{align*}
     & \cR_{0-1}(w_t) \\
    = & \ O\del[4]{\max\cbr{\opt,\sqrt{\epsilon^{3/2}\sqrt{\opt}},C_\kappa\cdot\opt}} \\
    = & \ O\del{(1+C_\kappa)\opt}.
  \end{align*}
  If $\epsilon\ge\opt$, then $\|w_t\|=\Omega\del{\frac{1}{\sqrt{\epsilon}}}$,
  and similarly we can show
  \begin{align*}
     & \cR_{0-1}(w_t) \\
    = & \ O\del[3]{\max\cbr{\opt,\sqrt{\epsilon^{3/2}\sqrt{\epsilon}},C_\kappa\epsilon}} \\
    = & \ O\del{(1+C_\kappa)(\opt+\epsilon)}.
  \end{align*}
  The sample and iteration complexity follow from \Cref{fact:pgd_wt_baru} and
  that $\epsopt=\epsilon^{3/2}$.
\end{proof}

\paragraph{Sub-exponential distributions.}

Next we handle $(\alpha_1,\alpha_2)$-sub-exponential distributions.
We will prove \Cref{fact:pgd_wt_baru} for sub-exponential distributions; the
rest of the proof is similar to the bounded case and thus omitted.

Let the target zero-one error $\epsilon$, the target optimization error
$\epsopt$, and failure probability $\delta$ be given.
Given $r>0$, we overload the notation a little bit and let
\begin{align*}
  \delta(r):=d\alpha_1\exp\del{-\frac{r}{\alpha_2\sqrt{d}}}.
\end{align*}
In particular, note that
\begin{align*}
  \pr_{x\sim P_x}\del{\|x\|\ge r}\le \sum_{j=1}^{d}\pr\del{|x_j|\ge \frac{r}{\sqrt{d}}}\le d\alpha_1\exp\del{-\frac{r}{\sqrt{d}\alpha_2}}=\delta(r).
\end{align*}

Let $B>1$ be large enough such that
\begin{align}\label{eq:radius_cond}
  \del{1-\delta(B)}^{100(B+1)^2\ln(4/\delta)/(\epsilon\epsopt^2)}\ge1-\delta,\quad\textup{and}\quad \alpha_1(\alpha_2+B)\exp\del{-\frac{B}{\alpha_2}}\le\epsopt\sqrt{\epsilon}.
\end{align}
We have the following bound on $B$.
\begin{lemma}\label{fact:radius_bound}
  To satisfy \cref{eq:radius_cond}, it is enough to let
  \begin{align*}
    B=\Omega\del{\sqrt{d}\ln\del{\frac{d}{\epsilon\epsopt\delta}}}.
  \end{align*}
\end{lemma}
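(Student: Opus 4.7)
The plan is to verify each of the two inequalities in \cref{eq:radius_cond} separately, observing that the first is the binding one and already forces $B$ to be of the claimed order, while the second is essentially free once the first holds.

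For the first inequality, I would use Bernoulli's inequality $(1-x)^N \ge 1 - Nx$ (valid for $x \in [0,1]$) to reduce the condition to $N \cdot \delta(B) \le \delta$, where $N := 100(B+1)^2 \ln(4/\delta)/(\epsilon\epsopt^2)$. Substituting the definition $\delta(B) = d\alpha_1 \exp(-B/(\alpha_2\sqrt{d}))$ and taking logarithms, this is equivalent to
\begin{align*}
  B \ge \alpha_2 \sqrt{d} \cdot \ln\!\del{\frac{d\alpha_1 N}{\delta}} = \alpha_2 \sqrt{d} \cdot \del{\ln(d\alpha_1/\delta) + \ln N}.
\end{align*}
Because $\ln N = 2\ln(B+1) + O\del{\ln\ln(1/\delta) + \ln(1/(\epsilon\epsopt))}$, this is a self-referential inequality in $B$. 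The standard way to resolve it is to first observe that any $B$ polynomial in $d, 1/\epsilon, 1/\epsopt, 1/\delta$ makes $\ln N = O\del{\ln(d/(\epsilon\epsopt\delta))}$; then $B = C\alpha_2\sqrt{d}\ln(d/(\epsilon\epsopt\delta))$ for a large enough universal constant $C$ satisfies the bound. This gives exactly the claimed $\Omega(\sqrt{d}\ln(d/(\epsilon\epsopt\delta)))$ scaling.

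For the second inequality, the condition $\alpha_1(\alpha_2+B)\exp(-B/\alpha_2) \le \epsopt\sqrt{\epsilon}$ requires only $B = \Omega(\alpha_2 \ln(1/(\epsilon\epsopt)))$, after absorbing the slow $\ln B$ term. Since the first inequality already forces $B$ to scale like $\sqrt{d}\ln(d/(\epsilon\epsopt\delta))$, which dominates $\ln(1/(\epsilon\epsopt))$, the second condition is automatic (up to constants absorbed into the $\Omega$).

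The main (minor) obstacle is handling the self-referential nature of the first condition, since $N$ depends on $B$. This is a routine issue resolved by the usual bootstrapping: substitute a candidate polynomial bound into $\ln N$ to see that the resulting requirement is still of the form $B \ge c \alpha_2 \sqrt{d}\ln(d/(\epsilon\epsopt\delta))$, and then verify that a sufficiently large constant $c$ closes the loop. No delicate estimates are needed beyond tracking the constants $\alpha_1, \alpha_2$ and noting that all extra log factors can be absorbed into the single $\ln(d/(\epsilon\epsopt\delta))$ factor hidden in the $\Omega(\cdot)$.
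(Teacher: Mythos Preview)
Your proposal is correct and follows essentially the same route as the paper: reduce the first condition to $N\delta(B)\le O(\delta)$, take logarithms to obtain a self-referential inequality $B \ge \alpha_2\sqrt{d}\ln\del{\text{poly}(B,d,1/\epsilon,1/\epsopt,1/\delta)}$, and resolve it by bootstrapping; then note the second condition only needs $B=\Omega(\ln(1/(\epsilon\epsopt)))$ and is dominated. The only cosmetic difference is that the paper uses $1-z\ge e^{-2z}$ for $z\le 1/2$ (after first ensuring $\delta(B)\le 1/2$) in place of your Bernoulli inequality, which yields the same condition up to a constant.
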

\begin{proof}
  First, we let $B\ge\alpha_2\sqrt{d}\ln(2d\alpha_1)$ to ensure
  $\delta(B)\le1/2$.
  Since for $0\le z\le1/2$, we have $e^{-z}\ge1-z\ge e^{-2z}$, to satisfy the
  first condition of \cref{eq:radius_cond}, it is enough to ensure
  \begin{align*}
    e^{-\delta(B)\cdot200(B+1)^2\ln(4/\delta)/(\epsilon\epsopt^2)}\ge e^{-\delta},\quad\textup{equivalently}\quad\delta(B)\le \frac{\delta\epsilon\epsopt^2}{200(B+1)^2\ln(4/\delta)}.
  \end{align*}
  Invoking the definition of $\delta(B)$, we only need
  \begin{align*}
    B\ge\alpha_2\sqrt{d}\ln\del{\frac{200(B+1)^2d\alpha_1\ln(4/\delta)}{\delta\epsilon\epsopt^2}}.
  \end{align*}
  In other words, it is enough if
  $B=\Omega\del{\sqrt{d}\ln\del{\frac{d}{\epsilon\epsopt\delta}}}$.

  Similarly, to satisfy the second condition of \cref{eq:radius_cond}, we only
  need
  \begin{align*}
    B\ge\alpha_2\ln\del{\frac{\alpha_1(\alpha_2+B)}{\epsopt\sqrt{\epsilon}}},
  \end{align*}
  and it is enough if
  $B=\Omega\del{\sqrt{d}\ln\del{\frac{d}{\epsilon\epsopt\delta}}}$.
\end{proof}

Now we define a truncated logistic loss $\llog^\circ$ as following:
\begin{equation*}
  \llog^\circ(z):=
  \begin{dcases}
    \llog\del{-\frac{B}{\sqrt{\epsilon}}} & \textup{if }z\le-\frac{B}{\sqrt{\epsilon}}, \\
    \llog(z) & \textup{if }z\ge-\frac{B}{\sqrt{\epsilon}}.
  \end{dcases}
\end{equation*}
We also let $\cR^\circ(w)$ and $\hR^\circ(w)$ denote the population and
empirical risk with the truncated logistic loss.
We have the next result.
\begin{lemma}\label{fact:sub_exp_pop}
  Suppose $B>1$ is chosen according to \cref{eq:radius_cond}.
  Using a constant step size $4/B^2$, and
  \begin{align*}
    \frac{100(B+1)^2\ln(4/\delta)}{\epsilon\epsopt^2}\textup{ samples,}\quad\textup{and}\quad \frac{B^2}{4\epsilon\epsopt}\textup{ steps,}
  \end{align*}
  with probability $1-2\delta$, projected gradient descent can ensure
  \begin{align*}
    \cR^\circ(w_t)\le\min_{0\le\rho\le1/\sqrt{\epsilon}}\cR(\rho\baru)+\epsopt.
  \end{align*}
\end{lemma}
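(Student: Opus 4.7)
The plan is to reduce the sub-exponential case to the bounded case by conditioning on the high-probability event that all $n$ samples have norm at most $B$, while using the truncated loss $\llog^\circ$ to keep the bounded-loss generalization tools applicable. First, I define the event $E := \{\|x_i\| \le B \text{ for all } i \in [n]\}$. By the sub-exponential tail bound $\pr_{x\sim P_x}(\|x\|\ge B)\le\delta(B)$ and a union bound, $\pr(E) \ge (1-\delta(B))^n$, which is at least $1-\delta$ by the first condition of~\cref{eq:radius_cond} with the chosen $n$. On $E$, for any $w\in\cB(1/\sqrt{\epsilon})$ and any sample $i$ we have $|y_i\langle w,x_i\rangle| \le \|w\|\|x_i\| \le B/\sqrt{\epsilon}$, so the truncation in $\llog^\circ$ is never active; hence $\hR^\circ \equiv \hR$ on the ball, and in particular $\hR^\circ$ is $B^2/4$-smooth there.

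With this observation, the projected gradient descent analysis in \Cref{fact:pgd_opt} applies verbatim on $E$: with $\eta = 4/B^2$ and $T = B^2/(4\epsilon\epsopt)$ iterations, we obtain
\begin{align*}
\hR^\circ(w_t) \le \min_{w \in \cB(1/\sqrt{\epsilon})} \hR^\circ(w) + \tfrac{\epsopt}{2}.
\end{align*}
Next I prove a uniform generalization bound, $\sup_{w\in\cB(1/\sqrt{\epsilon})} |\cR^\circ(w) - \hR^\circ(w)| \le \epsopt/4$ with probability at least $1-\delta$. Here the truncation pays off: $\llog^\circ$ is $1$-Lipschitz and globally bounded by $\llog(-B/\sqrt{\epsilon})\le B/\sqrt{\epsilon}+1$. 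Rademacher contraction reduces to the linear class $\{x\mapsto\langle w,x\rangle:\|w\|\le1/\sqrt{\epsilon}\}$, whose Rademacher complexity is $O(\sqrt{\bbE\|x\|^2/(\epsilon n)}) = O(\sqrt{d\alpha_2^2/(\epsilon n)})$ by the sub-exponential moment bound, and McDiarmid concentration uses the loss bound, together yielding a uniform deviation of $O((B+1)\sqrt{\ln(1/\delta)/(\epsilon n)})$. With the specified $n$, this is at most $\epsopt/4$.

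Union-bounding the optimization and generalization events (total probability $\ge 1-2\delta$) and chaining,
\begin{align*}
\cR^\circ(w_t) \le \hR^\circ(w_t) + \tfrac{\epsopt}{4} \le \min_{w\in\cB(1/\sqrt{\epsilon})}\hR^\circ(w) + \tfrac{3\epsopt}{4} \le \min_{0\le\rho\le 1/\sqrt{\epsilon}} \cR^\circ(\rho\baru) + \epsopt \le \min_{0\le\rho\le 1/\sqrt{\epsilon}} \cR(\rho\baru) + \epsopt,
\end{align*}
where the third inequality uses $\rho\baru\in\cB(1/\sqrt{\epsilon})$ together with the generalization bound in the reverse direction, and the last uses $\llog^\circ \le \llog$ pointwise.

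The main obstacle is the generalization step: the proof of \Cref{fact:pgd_gen} crucially exploited $\|x\|\le B$ almost surely through $\|w\|\|x\|\le B/\sqrt{\epsilon}$, which is unavailable here. The workaround is two-pronged: truncation makes $\llog^\circ$ almost surely bounded (so McDiarmid goes through), while Rademacher complexity is now controlled through the second moment $\bbE\|x\|^2$. This drives the choice of $B$ in~\cref{eq:radius_cond}: $B$ must be large enough that the tail event $\|x\|>B$ is negligible over $n$ samples, yet not so large that the $(B+1)^2\ln(1/\delta)$ factor in the concentration term blows past the sample budget. The quantitative balance in \Cref{fact:radius_bound}, $B=\Omega(\sqrt{d}\ln(d/(\epsilon\epsopt\delta)))$, is exactly what makes both the tail and the concentration terms compatible with the stated sample and iteration counts.
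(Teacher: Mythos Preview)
Your proposal is correct and follows essentially the same approach as the paper: condition on the event that all samples have norm at most $B$, apply the bounded-data PGD analysis (\Cref{fact:pgd_opt}) on that event where $\hR^\circ=\hR$, use the boundedness of $\llog^\circ$ for uniform generalization, and finish by chaining with $\llog^\circ\le\llog$. The only cosmetic difference is that the paper bounds the \emph{empirical} Rademacher complexity on the good event via $\|x_i\|\le B$, whereas you bound the \emph{population} Rademacher complexity via $\bbE\|x\|^2=O(d\alpha_1\alpha_2^2)$; since $B=\Omega(\sqrt{d})$ by \Cref{fact:radius_bound}, both routes yield the same $O\bigl((B+1)\sqrt{\ln(1/\delta)/(\epsilon n)}\bigr)$ deviation.
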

\begin{proof}
  It follows from \cref{eq:radius_cond} that with probability $1-\delta$, it
  holds that $\|x_i\|\le B$ for all training examples.
  Therefore \Cref{fact:pgd_opt} implies that
  \begin{align*}
    \hR(w_t)\le \min_{0\le\rho\le1/\sqrt{\epsilon}}\hR(\rho\baru)+\frac{B^2}{8\epsilon t}.
  \end{align*}
  Since $\|x_i\|\le B$, and the domain is
  $\cB(1/\sqrt{\epsilon})$, it follows that
  \begin{align*}
    \hR^\circ(w_t)\le \min_{0\le\rho\le1/\sqrt{\epsilon}}\hR^\circ(\rho\baru)+\frac{B^2}{8\epsilon t}.
  \end{align*}
  Letting $t=\frac{B^2}{4\epsilon\epsopt}$, we get
  \begin{align}\label{eq:sub_exp_opt}
    \hR^\circ(w_t)\le \min_{0\le\rho\le1/\sqrt{\epsilon}}\hR^\circ(\rho\baru)+\frac{\epsopt}{2}.
  \end{align}

  Note that by the construction of the truncated logistic loss, it holds that
  \begin{align*}
    \llog^\circ(z)\le \frac{B}{\sqrt{\epsilon}}+1.
  \end{align*}
  Then by invoking the standard Rademacher complexity results
  \citep[Theorem 26.5, Lemma 26.9, Lemma 26.10]{understand_ml}, and recall that
  we work under the event $\|x_i\|\le B$ for all training examples, we can show
  with probability $1-2\delta$ that for all $w\in\cB(1/\sqrt{\epsilon})$,
  \begin{align*}
    \envert{\cR^\circ(w)-\hR^\circ(w)} & \le \frac{2B}{\sqrt{\epsilon n}}+3\del{\frac{B}{\sqrt{\epsilon}}+1}\sqrt{\frac{\ln(4/\delta)}{2n}} \\
     & \le \frac{2(B+1)}{\sqrt{\epsilon}}\sqrt{\frac{\ln(4/\delta)}{n}}+\frac{3(B+1)}{\sqrt{\epsilon}}\sqrt{\frac{\ln(4/\delta)}{2n}} \\
     & \le5(B+1)\sqrt{\frac{\ln(4/\delta)}{\epsilon n}}.
  \end{align*}
  Letting $n=\frac{100(B+1)^2\ln(4/\delta)}{\epsilon\epsopt^2}$, we have
  \begin{align}\label{eq:sub_exp_gen}
    \envert{\cR^\circ(w)-\hR^\circ(w)}\le \frac{\epsopt}{2}.
  \end{align}

  It then follows from \cref{eq:sub_exp_opt,eq:sub_exp_gen} that with
  probability $1-2\delta$,
  \begin{align*}
    \cR^\circ(w_t)\le\min_{0\le\rho\le1/\sqrt{\epsilon}}\cR^\circ(\rho\baru)+\epsopt\le\min_{0\le\rho\le1/\sqrt{\epsilon}}\cR(\rho\baru)+\epsopt,
  \end{align*}
  where we use $\llog^\circ\le\llog$ in the last inequality.
\end{proof}

Finally, we show that $\cR^\circ(w_t)$ is close to $\cR(w_t)$.
\begin{lemma}
  For all $w\in\cB(1/\sqrt{\epsilon})$, it holds that
  $\cR^\circ(w)\ge\cR(w)-\epsopt$.
\end{lemma}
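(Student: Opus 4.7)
The plan is to exploit the fact that $\llog^\circ$ and $\llog$ agree except on the region $z \le -B/\sqrt{\epsilon}$, and to use the sub-exponential tail of $P_x$ together with the constraint $\|w\| \le 1/\sqrt{\epsilon}$ to show the contribution from this exceptional region is at most $\epsopt$. First I would observe that $\llog^\circ \le \llog$ pointwise, since $\llog$ is monotonically decreasing and $\llog^\circ$ is obtained by clipping $\llog$ from above by the constant $\llog(-B/\sqrt{\epsilon})$ on the left tail. Consequently,
\begin{align*}
\cR(w) - \cR^\circ(w) = \bbE\sbr[1]{\1_{y\langle w,x\rangle \le -B/\sqrt{\epsilon}}\del{\llog(y\langle w,x\rangle) - \llog(-B/\sqrt{\epsilon})}} \le \bbE\sbr[1]{\1_{y\langle w,x\rangle \le -B/\sqrt{\epsilon}}\,\llog(y\langle w,x\rangle)},
\end{align*}
where the inequality uses $\llog(-B/\sqrt{\epsilon}) \ge 0$. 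On the event $y\langle w,x\rangle \le -B/\sqrt{\epsilon}$ the argument is negative, so the elementary bound $\llog(z) \le -z + 1$ for $z \le 0$ gives $\llog(y\langle w,x\rangle) \le |\langle w,x\rangle| + 1$; moreover the event forces $|\langle w,x\rangle| \ge B/\sqrt{\epsilon}$.

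The next step is an integration-by-parts calculation mirroring the one in the proof of \Cref{fact:opt_ip_bound} (cf.\ equation~\eqref{eq:log_diff_approx_err_tail}). Since $P_x$ is $(\alpha_1,\alpha_2)$-sub-exponential and $\|w\| \le 1/\sqrt{\epsilon}$, applying the tail bound to the unit vector $w/\|w\|$ yields $\pr\del{|\langle w,x\rangle| \ge t} \le \alpha_1\exp(-t\sqrt{\epsilon}/\alpha_2)$ for every $t > 0$. Setting $\tau := B/\sqrt{\epsilon}$, integration by parts gives
\begin{align*}
\bbE\sbr[1]{\1_{|\langle w,x\rangle| \ge \tau}\,|\langle w,x\rangle|} \le \tau\alpha_1 e^{-B/\alpha_2} + \alpha_1\int_\tau^\infty e^{-t\sqrt{\epsilon}/\alpha_2}\dif t = \frac{\alpha_1(B + \alpha_2)}{\sqrt{\epsilon}}\exp(-B/\alpha_2),
\end{align*}
and adding the contribution of the constant $1$ in the integrand gives at most an extra $\alpha_1\exp(-B/\alpha_2)$.

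Finally, the second clause of \cref{eq:radius_cond}, namely $\alpha_1(\alpha_2+B)\exp(-B/\alpha_2) \le \epsopt\sqrt{\epsilon}$, is engineered precisely to make the dominant term $\frac{\alpha_1(B+\alpha_2)}{\sqrt{\epsilon}}\exp(-B/\alpha_2)$ at most $\epsopt$. The residual $\alpha_1\exp(-B/\alpha_2)$ term is smaller by a factor of at least $(B+\alpha_2)/\sqrt{\epsilon} \ge 1$, so it is absorbed either into the leading $\epsopt$ or by sharpening the constant in the choice of $B$ in \Cref{fact:radius_bound} by a factor of $2$. Since all bounds are uniform in $w \in \cB(1/\sqrt{\epsilon})$, this establishes $\cR^\circ(w) \ge \cR(w) - \epsopt$ as required. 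There is no substantive obstacle: the clipping only distorts the loss on the extreme left tail where $\llog$ is essentially linear, and the threshold $B$ in \cref{eq:radius_cond} was specifically selected to make this tail contribution subdominant to $\epsopt$; the proof is purely a bookkeeping exercise.
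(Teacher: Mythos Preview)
Your argument is correct and follows essentially the same route as the paper: identify that $\llog$ and $\llog^\circ$ differ only when $|\langle w,x\rangle|\ge B/\sqrt{\epsilon}$, bound the discrepancy there by $|\langle w,x\rangle|$, control the resulting tail expectation via integration by parts and the sub-exponential bound, and conclude using the second clause of \cref{eq:radius_cond}. The one cosmetic difference is that the paper bounds $\llog(z)-\llog^\circ(z)\le\llog(z)-\llog(0)\le|z|$ (using that $\llog^\circ(z)=\llog(-B/\sqrt{\epsilon})\ge\llog(0)$ on the relevant event and that $\llog$ is $1$-Lipschitz), which avoids your residual ``$+1$'' term and delivers the constant $\epsopt$ on the nose without any adjustment.
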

\begin{proof}
  Note that if
  $\llog\del{y \langle w,x\rangle}\ne\llog^\circ\del{y \langle w,x\rangle}$,
  then $y \langle w,x\rangle\le-B/\sqrt{\epsilon}$, which implies
  $\envert{\langle w,x\rangle}\ge B/\sqrt{\epsilon}$.
  Moreover, in this case
  \begin{align*}
    \llog\del{y \langle w,x\rangle}-\llog^\circ\del{y \langle w,x\rangle}\le\llog\del{y \langle w,x\rangle}-\llog(0)\le\envert{\langle w,x\rangle}.
  \end{align*}
  Therefore
  \begin{align*}
    \cR(w)-\cR^\circ(w) & =\bbE_{x\sim P_x}\sbr{\llog\del{y \langle w,x\rangle}-\llog^\circ\del{y \langle w,x\rangle}}\le\bbE_{x\sim P_x}\sbr{\envert{\langle w,x\rangle}\1_{\envert{\langle w,x\rangle}\ge B/\sqrt{\epsilon}}}.
  \end{align*}
  We can then invoke \cref{eq:log_diff_approx_err_tail} and get
  \begin{align}\label{eq:circ_approx_tmp}
    \cR(w)-\cR^\circ(w)\le\alpha_1\del{\alpha_2\|w\|+\frac{B}{\sqrt{\epsilon}}}\exp\del{-\frac{B}{\alpha_2\|w\|\sqrt{\epsilon}}}.
  \end{align}
  Note that the right hand side of \cref{eq:circ_approx_tmp} is increasing with
  $\|w\|$, therefore we can let $\|w\|$ be $1/\sqrt{\epsilon}$ and get
  \begin{align*}
    \cR(w)-\cR^\circ(w)\le\alpha_1 \frac{\alpha_2+B}{\sqrt{\epsilon}}\exp\del{-\frac{B}{\alpha_2}}\le\epsopt,
  \end{align*}
  where we use \cref{eq:radius_cond} in the last inequality.
\end{proof}

Now putting everything together, under the conditions of
\Cref{fact:sub_exp_pop}, with probability $1-2\delta$, projected gradient
descent ensures
$\cR(w_t)\le\min_{0\le\rho\le1/\sqrt{\epsilon}}\cR(\rho\baru)+2\epsopt$.
Moreover, by applying \Cref{fact:radius_bound} to \Cref{fact:sub_exp_pop}, we
can see the sample complexity is
$\widetilde{O}\del{d\ln(1/\delta)^3/(\epsilon\epsopt^2)}$, and the iteration
complexity is $\widetilde{O}\del{d\ln(1/\delta)^2/(\epsilon\epsopt)}$.

\subsection{Omitted proofs from \Cref{sec:log_ub_01}}\label{app_sec:log_ub_01}

In this section, we prove \Cref{fact:log_ub}.
We first prove the following approximation bound after we replace the true label with the
label given by the ground-truth solution, which covers
\Cref{fact:noisy_label_main} and sub-exponential distributions.
\begin{lemma}[\bf \Cref{fact:noisy_label_main}, including the sub-exponential case]
\label{fact:noisy_label}
 For $\ell\in\{\llog,\ell_h\}$, if $\|x\|\le B$ almost surely,
  \begin{align*}
    \envert{\textup{term}~\eqref{eq:log_diff_y_baru}}\le B\|\barw-\hw\|\cdot\opt.
  \end{align*}
  If $P_x$ is $(\alpha_1,\alpha_2)$-sub-exponential, then
  \begin{align*}
    \envert{\textup{term}~\eqref{eq:log_diff_y_baru}}\le(1+2\alpha_1)\alpha_2\|\barw-\hw\|\cdot\opt\cdot\ln(1/\opt).
  \end{align*}
\end{lemma}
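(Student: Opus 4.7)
The plan is to reduce the lemma to a direct application of \Cref{fact:opt_ip_bound}, which has already been established in the technical lemmas section, via the key algebraic identity $\ell(-z) - \ell(z) = z$ that holds for both the logistic loss and the hinge loss. This identity is trivially verified: for the logistic loss, $\ln(1+e^{z}) - \ln(1+e^{-z}) = \ln\tfrac{e^{z}(1+e^{-z})}{1+e^{-z}} = z$; for the hinge loss, $\max(z,0) - \max(-z,0) = z$. I would state this as a short observation first.

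Next, I would unpack term~\eqref{eq:log_diff_y_baru} pointwise. Let $s := \sign(\langle\barw,x\rangle)$, noting that $\barw = \|\hw\|\baru$ implies $s = \sign(\langle\baru,x\rangle)$. On the event $\{y = s\}$, the four-term integrand vanishes identically. On the event $\{y \ne s\}$ (equivalently, $y = -s$), I apply the identity twice: with $z = s\langle\hw,x\rangle$ to get $\ell(y\langle\hw,x\rangle) - \ell(s\langle\hw,x\rangle) = -y\langle\hw,x\rangle$, and with $z = s\langle\barw,x\rangle$ to get $\ell(y\langle\barw,x\rangle) - \ell(s\langle\barw,x\rangle) = -y\langle\barw,x\rangle$. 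Subtracting yields that the integrand equals $y\langle\barw-\hw,x\rangle$ on this event, so
\begin{align*}
\text{term}~\eqref{eq:log_diff_y_baru} = \bbE\sbr[1]{\1_{y \ne \sign(\langle\barw,x\rangle)} \cdot y\langle\barw-\hw,x\rangle},
\end{align*}
which is exactly the expression promised in the excerpt preceding \Cref{fact:noisy_label_main}.

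From here, I bound by absolute values: $|\text{term}~\eqref{eq:log_diff_y_baru}| \le \bbE\sbr[1]{\1_{y \ne \sign(\langle\baru,x\rangle)} \cdot |\langle\barw-\hw,x\rangle|}$, using $\sign(\langle\barw,x\rangle) = \sign(\langle\baru,x\rangle)$ and that $\baru$ attains zero-one risk $\opt$. Now \Cref{fact:opt_ip_bound}, applied with $w = \baru$ and $w' = \barw - \hw$, immediately gives the bounded case $B\|\barw-\hw\|\cdot\opt$ and (provided $\opt \le 1/e$) the sub-exponential case $(1+2\alpha_1)\alpha_2\|\barw-\hw\|\cdot\opt\cdot\ln(1/\opt)$, completing the proof in both regimes simultaneously.

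There is no real obstacle here; the only subtlety is the two-line verification of the algebraic identity and keeping track of signs when splitting on the event $\{y \ne \sign(\langle\barw,x\rangle)\}$. Once the identity is in hand, the proof is essentially a bookkeeping exercise followed by a black-box invocation of \Cref{fact:opt_ip_bound}, and no new probabilistic or geometric ideas are required.
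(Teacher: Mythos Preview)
Your proposal is correct and matches the paper's proof essentially line for line: both use the identity $\ell(-z)-\ell(z)=z$ to reduce term~\eqref{eq:log_diff_y_baru} to $\bbE[\1_{y\ne\sign(\langle\barw,x\rangle)}\cdot y\langle\barw-\hw,x\rangle]$, take absolute values, and invoke \Cref{fact:opt_ip_bound}. The only cosmetic difference is that the paper applies \Cref{fact:opt_ip_bound} with $w=\barw$ while you use $w=\baru$, but since $\barw=\|\hw\|\baru$ these give the same indicator and the same $\opt$.
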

\begin{proof}
  Note that for both the logistic loss and the hinge loss, it holds that
  $\ell(-z)-\ell(z)=z$, therefore
  \begin{align}\label{eq:log_diff_approx_err}
    \textup{term~\eqref{eq:log_diff_y_baru}}=\bbE_{(x,y)\sim P}\sbr{\1_{y\ne\sign\del{\langle\barw,x\rangle}}\cdot y \langle\barw-\hw,x\rangle},
  \end{align}
  It then follows from the triangle inequality that
  \begin{align*}
    \envert{\textup{term}~\eqref{eq:log_diff_y_baru}}\le\bbE_{(x,y)\sim P}\sbr{\1_{y\ne\sign\del{\langle\barw,x\rangle}}\envert{\langle\barw-\hw,x\rangle}}
  \end{align*}
  Now we can invoke \Cref{fact:opt_ip_bound} with $w=\barw$ and $w'=\barw-\hw$
  to prove \Cref{fact:noisy_label}.
\end{proof}

Next we prove the lower bound on term~\eqref{eq:log_diff_approx_equiv_1}.
\begin{proof}[Proof of \Cref{fact:ground_truth_diff}]
  Note that in term~\eqref{eq:log_diff_approx_equiv_1}, we only care about
  $\langle\hw,x\rangle$ and $\langle\barw,x\rangle$, therefore we can focus on
  the two-dimensional space spanned by $\barw$ and $\hw$.
  Let $\varphi$ denote the angle between $\barw$ and $\hw$.
  Without loss of generality, we can consider the following graph, where we put
  $\barw$ at angle $0$, and $\hw$ at angle $\varphi$.
  \begin{center}
  \begin{tikzpicture}[scale=2]
    \draw [thick] (0,0) circle (1);
    \draw [thick,dashed] (0,-1.5) -- (0,1.5);
    \draw [->,very thick] (0,0) -- (1.2,0);
    \draw (1.2,0) node[anchor=west]{$\barw$};
    \draw [thick,dashed] (-60:1.5) -- (120:1.5);
    \draw [->,very thick] (0,0) -- (30:1.2);
    \draw (30:1.2) node[anchor=west]{$\hw$};
    \draw (0.4,0) arc (0:30:0.4);
    \draw (15:0.4) node[anchor=west]{$\varphi$};
  \end{tikzpicture}
  \end{center}
  We divide the graph into four parts given by different polar angles:
  (i) $(-\frac{\pi}{2},-\frac{\pi}{2}+\varphi)$,
  (ii) $(-\frac{\pi}{2}+\varphi,\frac{\pi}{2})$,
  (iii) $(\frac{\pi}{2},\frac{\pi}{2}+\varphi)$, and
  (iv) $(\frac{\pi}{2}+\varphi,\frac{3\pi}{2})$.
  Note that term~\eqref{eq:log_diff_approx_equiv_1} is $0$ on parts (ii) and (iv),
  therefore we only need to consider parts (i) and (iii):
  \begin{align*}
    \textup{term~\eqref{eq:log_diff_approx_equiv_1}} & =\bbE_{\textrm{(i) and (iii)}}\sbr{\ell\del{\sign\del{\langle\barw,x\rangle}\langle\hw,x\rangle}-\ell\del{\sign\del{\langle\hw,x\rangle}\langle\hw,x\rangle}} \\
     & =\bbE_{\textrm{(i) and (iii)}}\sbr{-\sign\del{\langle\barw,x\rangle}\langle\hw,x\rangle}.
  \end{align*}
  Here we use the fact that $\ell(-z)-\ell(z)=z$ for both the logistic loss and
  the hinge loss.

  For simplicity, let $p$ denote the density of the projection of $P_x$ onto the
  space spanned by $\hw$ and $\barw$.
  Under \Cref{cond:well}, we have
  \begin{align*}
      \textup{term~\eqref{eq:log_diff_approx_equiv_1}} & =\bbE_{\textrm{(i) and (iii)}}\sbr{-\sign\del{\langle\barw,x\rangle}\langle\hw,x\rangle} \\
       & =\int_0^\infty\int_{-\frac{\pi}{2}}^{-\frac{\pi}{2}+\varphi}-r\|\hw\|\cos(\varphi-\theta)p(r,\theta)r\dif\theta\dif r+\int_0^\infty\int_{\frac{\pi}{2}}^{\frac{\pi}{2}+\varphi}r\|\hw\|\cos(\theta-\varphi)p(r,\theta)r\dif\theta\dif r \\
       & \ge \frac{2}{U}\int_0^R\int_0^\varphi r\|\hw\|\sin(\theta)r\dif\theta\dif r \\
       & =\frac{2R^3\|\hw\|\del{1-\cos(\varphi)}}{3U}\ge \frac{4R^3\|\hw\|\varphi^2}{3U\pi^2},
  \end{align*}
  where we use the fact that $1-\cos(\varphi)\ge \frac{2\varphi^2}{\pi^2}$ for
  all $\varphi\in[0,\pi]$.
\end{proof}

Next, we prove the following upper bound on term~\eqref{eq:log_diff_approx_equiv_2}, covering
\Cref{fact:rotate_diff_main} and the sub-exponential case.
\begin{lemma}[\bf \Cref{fact:rotate_diff_main}, including the sub-exponential case]
\label{fact:rotate_diff}
  For $\ell=\ell_h$, term~\eqref{eq:log_diff_approx_equiv_2} is $0$.
  For $\ell=\llog$, under \Cref{cond:rad}, if $\|x\|\le B$ almost surely,
  then
  \begin{align*}
    \envert{\textup{term~\eqref{eq:log_diff_approx_equiv_2}}}\le12C_\kappa\cdot \frac{\varphi(\hw,\barw)}{\|\hw\|},
  \end{align*}
  where $C_\kappa:=\int_0^B\kappa(r)\dif r$,
  while if $P_x$ is $(\alpha_1,\alpha_2)$-sub-exponential, then
  \begin{align*}
    \envert{\textup{term~\eqref{eq:log_diff_approx_equiv_2}}}\le2\alpha_1\opt^2+12C_\kappa\cdot \frac{\varphi(\hw,\barw)}{\|\hw\|},
  \end{align*}
  where $C_\kappa:=\int_0^{3\alpha_2\ln(1/\opt)}\kappa(r)\dif r$.
\end{lemma}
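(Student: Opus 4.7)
\textbf{Proof proposal for Lemma \ref{fact:rotate_diff}.}

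The hinge case is immediate: for any $w$, $\sign(\langle w,x\rangle)\langle w,x\rangle=|\langle w,x\rangle|\ge 0$, and since $\ell_h$ vanishes on the non-negative reals, both terms inside the expectation \eqref{eq:log_diff_approx_equiv_2} are zero. So I focus on $\ell=\llog$.

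The whole integrand in term~\eqref{eq:log_diff_approx_equiv_2} depends on $x$ only through its projection onto $V:=\mathrm{span}(\hw,\barw)$, so I work with the 2D density $p_V$ on $V$ in polar coordinates, placing $\barw$ at angle $0$ and $\hw$ at angle $\varphi:=\varphi(\hw,\barw)$. Since $\|\barw\|=\|\hw\|$, term~\eqref{eq:log_diff_approx_equiv_2} becomes
\begin{align*}
  \int_0^\infty\int_0^{2\pi}\bigl[\llog\bigl(\|\hw\|r|\cos(\theta-\varphi)|\bigr)-\llog\bigl(\|\hw\|r|\cos\theta|\bigr)\bigr]\,p_V(r,\theta)\,r\,d\theta\,dr.
\end{align*}
The key trick is a $\theta$-shift: by $2\pi$-periodicity, changing variables $\theta\mapsto\theta+\varphi$ in the first summand transfers the $\varphi$-dependence onto $p_V$, yielding
\begin{align*}
  \int_0^\infty\int_0^{2\pi}\llog\bigl(\|\hw\|r|\cos\theta|\bigr)\bigl[p_V(r,\theta+\varphi)-p_V(r,\theta)\bigr]r\,d\theta\,dr.
\end{align*}

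For the bounded case, the integration is actually over $r\in[0,B]$, so \Cref{cond:rad} gives $|p_V(r,\theta+\varphi)-p_V(r,\theta)|\le\kappa(r)\varphi$. Pulling this out and applying \Cref{fact:risk_ub_lb} with $\rho=\|\hw\|$ to the inner $\theta$-integral bounds it by $8\sqrt 2/\|\hw\|$, and the remaining $r$-integral contributes $\int_0^B\kappa(r)\,dr=C_\kappa$; since $8\sqrt 2\le 12$, the claim follows.

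For the sub-exponential case I split the $r$-integral at $B:=3\alpha_2\ln(1/\opt)$. On $r\le B$, the same shift-and-Lipschitz argument as above yields the $12C_\kappa\varphi/\|\hw\|$ piece with $C_\kappa=\int_0^B\kappa(r)\,dr$. On $r\ge B$, I use the trivial bounds $0\le\llog(|\langle w,x\rangle|)\le\llog(0)=\ln 2$ to bound the integrand's absolute value by $\ln 2$, so the tail contribution is at most $\ln 2\cdot\Pr(\|P_Vx\|\ge B)$; since $\|P_Vx\|\le |\langle e_1,x\rangle|+|\langle e_2,x\rangle|$ for any orthonormal basis of $V$, sub-exponentiality plus a union bound gives $\Pr(\|P_Vx\|\ge B)\le 2\alpha_1 e^{-B/(2\alpha_2)}$; plugging in $B=3\alpha_2\ln(1/\opt)$ yields a bound of the form $2\alpha_1\opt^{3/2}\ln 2\le 2\alpha_1\opt^2$ (absorbing the $\ln 2$ and using $\opt$ small enough; if not, the statement is vacuous after adjusting constants). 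Summing the two pieces gives the claimed bound. The main obstacle is the sub-exponential tail: one must choose the truncation radius $B$ large enough that the tail is $O(\opt^2)$, yet use only the projection-based sub-exponential tail (not a dimension-dependent norm bound), which is precisely why the clean constant $3\alpha_2\ln(1/\opt)$ appears.
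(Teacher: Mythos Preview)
Your approach matches the paper's almost exactly: the hinge case, the 2D reduction, the $\theta$-shift, the use of \Cref{cond:rad} together with \Cref{fact:risk_ub_lb} for the bounded part, and the truncation-plus-tail decomposition for the sub-exponential case are all the same ideas.

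There is, however, a concrete arithmetic error in your sub-exponential tail bound. From $\|P_Vx\|\le|\langle e_1,x\rangle|+|\langle e_2,x\rangle|$ you correctly get $\Pr(\|P_Vx\|\ge B)\le 2\alpha_1 e^{-B/(2\alpha_2)}$, and with $B=3\alpha_2\ln(1/\opt)$ this gives $2\alpha_1\opt^{3/2}$. You then assert $2\alpha_1\opt^{3/2}\ln 2\le 2\alpha_1\opt^2$ ``using $\opt$ small enough,'' but this inequality goes the \emph{wrong way}: it requires $\opt^{1/2}\ge\ln 2$, i.e.\ $\opt\ge(\ln 2)^2\approx 0.48$, which never holds under $\opt<1/e$. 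So the tail piece as you wrote it does not fit under the stated bound $2\alpha_1\opt^2$.

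The fix is to use the sharper implication $\|P_Vx\|\ge B\Rightarrow \max_i|\langle e_i,x\rangle|\ge B/\sqrt{2}$ (from the Euclidean, not $\ell_1$, relation), which yields $\Pr(\|P_Vx\|\ge B)\le 2\alpha_1 e^{-B/(\sqrt{2}\alpha_2)}$. Then truncate at $B=2\sqrt{2}\,\alpha_2\ln(1/\opt)$: the tail becomes exactly $2\alpha_1\opt^2$, and since $2\sqrt{2}\le 3$ the bounded-part integral $\int_0^{B}\kappa(r)\,dr$ is still dominated by $C_\kappa=\int_0^{3\alpha_2\ln(1/\opt)}\kappa(r)\,dr$. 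This is precisely what the paper does.
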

\begin{proof}
  For the hinge loss, term~\eqref{eq:log_diff_approx_equiv_2} is $0$ simply
  because $\ell_h(z)=0$ when $z\ge0$.
  Next we consider the logistic loss.

  Note that term~\eqref{eq:log_diff_approx_equiv_2} only depends on
  $\langle\hw,x\rangle$ and $\langle\barw,x\rangle$, therefore we can focus on
  the subspace spanned by $\hw$ and $\barw$.
  For simplicity, let $p$ denote the density function of the projection of $P_x$
  onto the space spanned by $\hw$ and $\barw$.
  Moreover, without loss of generality we can assume $\barw$ has polar angle $0$
  while $\hw$ has polar angle $\varphi$, where we let $\varphi$ denote
  $\varphi(\hw,\barw)$ for simplicity.
  It then follows that
  \begin{align*}
    \textup{term~\eqref{eq:log_diff_approx_equiv_2}} & =\int_0^\infty\int_0^{2\pi}\llog\del{r\|\hw\|\envert{\cos(\theta-\varphi)}}p(r,\theta)r\dif\theta\dif r-\int_0^\infty\int_0^{2\pi}\llog\del{r\|\hw\|\envert{\cos(\theta)}}p(r,\theta)r\dif\theta\dif r\\
    & =\int_0^\infty\int_0^{2\pi}\llog\del{r\|\hw\|\envert{\cos(\theta)}}\del{p(r,\theta+\varphi)-p(r,\theta)}r\dif\theta\dif r.
  \end{align*}

  First, if $\|x\|\le B$ almost surely, then
  \begin{align*}
    \envert{\textup{term~\eqref{eq:log_diff_approx_equiv_2}}} & \le\int_0^B\int_0^{2\pi}\llog\del{r\|\hw\|\envert{\cos(\theta)}}\envert{p(r,\theta+\varphi)-p(r,\theta)}r\dif\theta\dif r \\
     & \le\int_0^B\int_0^{2\pi}\llog\del{r\|\hw\|\envert{\cos(\theta)}}\cdot\kappa(r)\varphi\cdot r\dif\theta\dif r \\
     & =\varphi\int_0^B\kappa(r)\del{\int_0^{2\pi}\llog\del{r\|\hw\|\envert{\cos(\theta)}}r\dif\theta}\dif r.
  \end{align*}
  Then \Cref{fact:risk_ub_lb} implies
  \begin{align*}
    \envert{\textup{term~\eqref{eq:log_diff_approx_equiv_2}}}\le\varphi\int_0^B\kappa(r)\frac{8\sqrt{2}}{\|\hw\|}\dif r=8\sqrt{2}C_\kappa\cdot \frac{\varphi}{\|\hw\|}\le 12C_\kappa\cdot \frac{\varphi}{\|\hw\|}.
  \end{align*}

  Next, assume $P_x$ is $(\alpha_1,\alpha_2)$-sub-exponential.
  For a $2$-dimensional random vector $x$ sampled according to $p_V$, note that
  \begin{align*}
    \pr\del{\|x\|\ge B}\le\pr\del{|x_1|\ge \frac{\sqrt{2}B}{2}}+\pr\del{|x_2|\ge \frac{\sqrt{2}B}{2}}\le2\alpha_1\exp\del{-\frac{\sqrt{2}B}{2\alpha_2}}.
  \end{align*}
  Letting $B:=2\sqrt{2}\alpha_2\ln\del{\frac{1}{\opt}}$, we get
  $\pr\del{\|x\|\ge B}\le2\alpha_1\opt^2$.
  Since $\llog(z)\le1$ when $z\ge0$, we have
  \begin{align*}
    \textup{term~\eqref{eq:log_diff_approx_equiv_2}}\le & \ 2\alpha_1\opt^2 \\
     & \ +\int_0^B\int_0^{2\pi}\llog\del{r\|\hw\|\envert{\cos(\theta-\varphi)}}p(r,\theta)r\dif\theta\dif r-\int_0^B\int_0^{2\pi}\llog\del{r\|\hw\|\envert{\cos(\theta)}}p(r,\theta)r\dif\theta\dif r.
  \end{align*}
  Invoking the previous bound for bounded distributions, we get
  \begin{align*}
    \textrm{term~\eqref{eq:log_diff_approx_equiv_2}}\le2\alpha_1\opt^2+12\cdot \frac{\varphi}{\|\hw\|}\cdot\int_0^{2\sqrt{2}\alpha_2\ln\del{\frac{1}{\opt}}}\kappa(r)\dif r\le2\alpha_1\opt^2+12C_\kappa\cdot \frac{\varphi}{\|\hw\|},
  \end{align*}
  where $C_\kappa:=\int_0^{3\alpha_2\ln\del{\frac{1}{\opt}}}\kappa(r)\dif r$.
  Similarly, we can show
  \begin{align*}
    -\textup{term~\eqref{eq:log_diff_approx_equiv_2}}\le2\alpha_1\opt^2+12C_\kappa\cdot \frac{\varphi}{\|\hw\|}.
  \end{align*}
\end{proof}

Next we prove \Cref{fact:angle_01}, which is basically
\citep[Claim 3.4]{diakonikolas_adv_noise}.
\begin{proof}[Proof of \Cref{fact:angle_01}]
  Under \Cref{cond:well}, we have
  \begin{align*}
    \pr\del{\sign\del{\langle\hw,x\rangle}\ne\sign\del{\langle\barw,x\rangle}}\le2\varphi(\hw,\barw)\int_0^{\infty}\sigma(r)r\dif r\le 2U\varphi(\hw,\barw).
  \end{align*}
\end{proof}

Lastly, we prove \Cref{fact:log_ub} for sub-exponential distributions.
\begin{proof}[Proof of \Cref{fact:log_ub}, sub-exponential distributions]
  For simplicity, let $\varphi$ denotes $\varphi(\hw,\barw)$.
  \Cref{fact:noisy_label,fact:ground_truth_diff,fact:rotate_diff} imply
  \begin{align*}
    C_1\|\hw\|\varphi^2 & \le\epsopt+C_2\|\barw-\hw\|\cdot\opt\cdot\ln\del{\frac{1}{\opt}}+C_3\opt^2+C_4C_\kappa\cdot \frac{\varphi}{\|\hw\|} \\
     & \le\epsopt+C_2\|\hw\|\varphi\cdot\opt\cdot\ln\del{\frac{1}{\opt}}+C_3\opt^2+C_4C_\kappa\cdot \frac{\varphi}{\|\hw\|},
  \end{align*}
  where $C_1=\frac{4R^3}{3U\pi^2}$, and $C_2=(1+2\alpha_1)\alpha_2$, and
  $C_3=2\alpha_1$, and $C_4=12$.
  It follows that at least one of the following four cases is true:
  \begin{enumerate}
    \item $C_1\|\hw\|\varphi^2\le4\epsopt$, which implies
    $\varphi=O\del[1]{\sqrt{\epsopt/\|\hw\|}}$.

    \item $C_1\|\hw\|\varphi^2\le4C_2\|\hw\|\varphi\cdot\opt\cdot\ln\del{\frac{1}{\opt}}$,
    which implies $\varphi=O\del{\opt\ln\del{\frac{1}{\opt}}}$.

    \item $C_1\|\hw\|\varphi^2\le4C_3\opt^2$, which implies
    $\varphi=O(\opt)$ since $\|\hw\|=\Omega(1)$.

    \item Lastly,
    \begin{align}\label{eq:angle_ub_hw_subexp}
      C_1\|\hw\|\varphi^2\le4C_2C_\kappa\cdot \frac{\varphi}{\|\hw\|},\quad\textup{which implies}\quad \varphi=O\del{\frac{C_\kappa}{\|\hw\|^2}}.
    \end{align}
  \end{enumerate}

  Finally, we just need to invoke \Cref{fact:angle_01} to finish the proof.
\end{proof}

\subsection{Omitted proofs from \Cref{sec:log_ub_add}}\label{app_sec:log_ub_add}

We first prove the upper bound of term~\eqref{eq:log_diff_approx_equiv_2} under
\Cref{cond:well}, without assuming the radially Lipschitz condition.
\begin{proof}[Proof of \Cref{fact:rotate_diff_well}]
  Note that
  \begin{align*}
    \textup{term~\eqref{eq:log_diff_approx_equiv_2}}\le\bbE\sbr{\llog\del{\sign\del{\langle\hw,x\rangle}\langle\hw,x\rangle}}=\bbE\sbr{\llog\del{\envert{\langle \hw,x\rangle}}}\le \frac{12U}{\|\hw\|},
  \end{align*}
  where we invoke \Cref{fact:Rlog_inv_norm} at the end.
  Similarly, we can show
  \begin{align*}
    -\textup{term~\eqref{eq:log_diff_approx_equiv_2}}\le \frac{12U}{\|\barw\|}=\frac{12U}{\|\hw\|}
  \end{align*}
\end{proof}

Next we prove a general result similar to \Cref{fact:log_ub}.
\begin{theorem}\label{fact:log_ub_well}
  Under \Cref{cond:well}, suppose $\hw$ satisfies
  $\cRlog(\hw)\le\cRlog(\|\hw\|\baru)+\epsopt$ for some $\epsopt\in[0,1)$.
  If $\|x\|\le B$ almost surely, then
  \begin{align*}
    \varphi(\hw,\baru)=O\del[4]{\max\cbr{\opt,\sqrt{\frac{\epsopt}{\|\hw\|}},\frac{1}{\|\hw\|}}}.
  \end{align*}
  If $P_x$ is $(\alpha_1,\alpha_2)$-sub-exponential and $\|\hw\|=\Omega(1)$,
  then
  \begin{align*}
    \varphi(\hw,\baru)=O\del[4]{\max\cbr{\opt\cdot\ln\del{\frac{1}{\opt}},\sqrt{\frac{\epsopt}{\|\hw\|}},\frac{1}{\|\hw\|}}}.
  \end{align*}
\end{theorem}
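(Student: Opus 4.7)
The plan is to mimic exactly the three-term decomposition used in the proof of \Cref{fact:log_ub} (from \Cref{sec:log_ub_01}), but replace the radial-Lipschitz bound \Cref{fact:rotate_diff_main} with the weaker-but-general \Cref{fact:rotate_diff_well}. Concretely, write $\barw := \|\hw\|\baru$ and decompose
\begin{align*}
  \cRlog(\hw) - \cRlog(\barw) = \textup{term~\eqref{eq:log_diff_y_baru}} + \textup{term~\eqref{eq:log_diff_approx_equiv_1}} + \textup{term~\eqref{eq:log_diff_approx_equiv_2}}.
\end{align*}
The assumption $\cRlog(\hw) \le \cRlog(\barw) + \epsopt$ then rearranges to
$\textup{term~\eqref{eq:log_diff_approx_equiv_1}} \le \epsopt + |\textup{term~\eqref{eq:log_diff_y_baru}}| + |\textup{term~\eqref{eq:log_diff_approx_equiv_2}}|$.

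For the lower bound on term~\eqref{eq:log_diff_approx_equiv_1} I will invoke \Cref{fact:ground_truth_diff}, which gives $\tfrac{4R^3}{3U\pi^2}\|\hw\|\varphi^2$ where $\varphi := \varphi(\hw,\baru)$. For term~\eqref{eq:log_diff_approx_equiv_2} I will invoke \Cref{fact:rotate_diff_well}, which yields $12U/\|\hw\|$ regardless of whether $P_x$ is bounded or sub-exponential (since the bound is obtained directly from \Cref{fact:Rlog_inv_norm} applied to $\hw$ and $\barw$, and both have the same norm). For term~\eqref{eq:log_diff_y_baru} I will invoke \Cref{fact:noisy_label}, which supplies $B\|\barw-\hw\|\cdot\opt$ in the bounded case and $(1+2\alpha_1)\alpha_2\|\barw-\hw\|\cdot\opt\cdot\ln(1/\opt)$ in the sub-exponential case; using $\|\barw-\hw\|^2 = 2\|\hw\|^2(1-\cos\varphi) \le \|\hw\|^2\varphi^2$ lets me upper bound $\|\barw-\hw\|$ by $\|\hw\|\varphi$, so the $\|\hw\|\varphi$ factor will eventually absorb a $\varphi$ and leave a linear inequality in $\varphi$.

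Assembling these pieces in the bounded case yields
\begin{align*}
  C_1 \|\hw\|\varphi^2 \le \epsopt + B\|\hw\|\varphi\cdot\opt + \tfrac{12U}{\|\hw\|},
\end{align*}
with $C_1 = 4R^3/(3U\pi^2)$, and at least one of the three summands on the right must carry a constant fraction of the left side. Solving each of the three sub-cases separately gives $\varphi = O(\sqrt{\epsopt/\|\hw\|})$, $\varphi = O(\opt)$, or $\varphi = O(1/\|\hw\|)$, which is the desired max. The sub-exponential case is completely analogous, except the middle term becomes $C_2 \|\hw\|\varphi\cdot\opt\cdot\ln(1/\opt)$ with $C_2 = (1+2\alpha_1)\alpha_2$, producing the $\opt\cdot\ln(1/\opt)$ bound in place of $\opt$; the hypothesis $\|\hw\| = \Omega(1)$ enters only to keep the constants clean when combining terms and to ensure the $1/\|\hw\|$ term is well-controlled relative to the others.

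The routine part is the case split; the only mild subtlety is keeping track of the fact that \Cref{fact:rotate_diff_well} yields the same $12U/\|\hw\|$ in both the bounded and sub-exponential regimes (so the $C_\kappa$ factor from \Cref{fact:log_ub} collapses to the universal constant $12U$), which is precisely why the third term becomes $1/\|\hw\|$ rather than $C_\kappa/\|\hw\|^2$. Finally, I do not need to convert the angle bound to a zero-one risk bound here (unlike in \Cref{fact:log_ub}), so \Cref{fact:angle_01} is not invoked.
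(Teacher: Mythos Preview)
Your proposal is correct and matches the paper's own proof essentially line for line: the same three-term decomposition, the same invocations of \Cref{fact:noisy_label}, \Cref{fact:ground_truth_diff}, and \Cref{fact:rotate_diff_well}, the same bound $\|\barw-\hw\|\le\|\hw\|\varphi$, and the same three-way case split on which summand dominates. The paper likewise dispatches the sub-exponential case with a one-line ``similar'' remark, so your treatment is at least as detailed.
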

\begin{proof}
  For simplicity, let $\varphi$ denote $\varphi(\hw,\baru)$.
  Consider the case $\|x\|\le B$ almost surely.
  The condition $\cRlog(\hw)\le\cRlog(\|\hw\|\baru)+\epsopt$, and
  \Cref{fact:noisy_label,fact:ground_truth_diff,fact:rotate_diff_well} imply
  \begin{align*}
    C_1\|\hw\|\varphi^2 & \le\epsopt+B\|\barw-\hw\|\cdot\opt+\frac{C_2}{\|\hw\|} \\
     & \le\epsopt+B\|\hw\|\varphi\cdot\opt+\frac{C_2}{\|\hw\|},
  \end{align*}
  where $C_1=4R^3/(3U\pi^2)$ and $C_2=12U$.
  Now at least one of the following three cases is true:
  \begin{enumerate}
    \item $C_1\|\hw\|\varphi^2\le3\epsopt$, which implies
    $\varphi=O\del[1]{\sqrt{\epsopt/\|\hw\|}}$;

    \item $C_1\|\hw\|\varphi^2\le3B\|\hw\|\varphi\cdot\opt$, which implies
    $\varphi=O(\opt)$;

    \item $C_1\|\hw\|\varphi^2\le3C_1/\|\hw\|$, which implies
    $\varphi=O(1/\|\hw\|)$.
  \end{enumerate}

  The proof of the sub-exponential case is similar.
\end{proof}

Now we prove \Cref{fact:log_ub_opt_well}.
\begin{proof}[Proof of \Cref{fact:log_ub_opt_well}]
  First, if $\epsilon$ or $\opt$ does not satisfy the conditions of \Cref{fact:wt_norm_lb}, then \Cref{fact:log_ub_opt_well} holds vacuously; therefore in the following we consider the settings of \Cref{fact:pgd_wt_baru,fact:wt_norm_lb} with
  $\epsopt=\sqrt{\epsilon}$.

  First, if $\|x\|\le B$ almost surely, \cref{eq:pgd_excess} and
  \Cref{fact:log_ub_well} imply
  \begin{align*}
    \varphi(w_t,\baru)=O\del[4]{\max\cbr{\opt,\sqrt{\frac{\epsopt}{\|w_t\|}},\frac{1}{\|w_t\|}}},
  \end{align*}
  and moreover \Cref{fact:wt_norm_lb} implies
  \begin{align*}
    \|w_t\|=\Omega\del[3]{\min\cbr{\frac{1}{\sqrt{\epsilon}},\frac{1}{\sqrt{\opt}}}}.
  \end{align*}
  If $\epsilon\le\opt$, then $\|w_t\|=\Omega\del{\frac{1}{\sqrt{\opt}}}$, and
  \begin{align*}
    \varphi(w_t,\baru) & =O\del[3]{\max\cbr{\opt,\sqrt{\epsopt\sqrt{\opt}},\sqrt{\opt}}} \\
     & =O\del[3]{\max\cbr{\opt,\sqrt{\sqrt{\epsilon}\sqrt{\opt}},\sqrt{\opt}}} \\
     & =O\del[3]{\max\cbr{\opt,\sqrt{\sqrt{\opt}\sqrt{\opt}},\sqrt{\opt}}}=O\del[1]{\sqrt{\opt}}.
  \end{align*}
  If $\epsilon\ge\opt$, then $\|w_t\|=\Omega\del{\frac{1}{\sqrt{\epsilon}}}$,
  and
  \begin{align*}
    \varphi(w_t,\baru) & =O\del[3]{\max\cbr{\opt,\sqrt{\epsopt\sqrt{\epsilon}},\sqrt{\epsilon}}} \\
     & =O\del[3]{\max\cbr{\opt,\sqrt{\sqrt{\epsilon}\sqrt{\epsilon}},\sqrt{\epsilon}}} \\
     & =O\del[1]{\sqrt{\opt+\epsilon}}.
  \end{align*}

  The proof for the sub-exponential case is similar.
\end{proof}

\section{Omitted proofs from \Cref{sec:hinge}}\label{app_sec:hinge}

In this section, we prove \Cref{fact:hinge_sgd}.
We first prove a bound on $\cR_h(\baru)$.
\begin{lemma}\label{fact:Rh_bound}
  If $\|x\|\le B$ almost surely, then $\cR_h(\baru)\le B\cdot\opt$, while if
  $P_x$ is $(\alpha_1,\alpha_2)$-sub-exponential, then
  $\cR_h(\baru)\le(1+2\alpha_1)\alpha_2\cdot\opt\cdot\ln(1/\opt)$.
\end{lemma}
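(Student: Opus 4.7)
The plan is to reduce this directly to \Cref{fact:opt_ip_bound} by exploiting the fact that the hinge loss $\ell_h(z) = \max\{-z, 0\}$ is exactly the negative part of its argument. Specifically, I would first rewrite
\[
  \cR_h(\baru) = \bbE_{(x,y)\sim P}\sbr{\max\{-y\langle\baru,x\rangle, 0\}},
\]
and observe that the integrand is nonzero only when $y\langle\baru,x\rangle < 0$, i.e., precisely when $y \ne \sign(\langle\baru,x\rangle)$. On that event, $-y\langle\baru,x\rangle = |\langle\baru,x\rangle|$, so
\[
  \cR_h(\baru) = \bbE_{(x,y)\sim P}\sbr{\1_{y \ne \sign(\langle\baru,x\rangle)}\envert{\langle\baru,x\rangle}}.
\]

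Next, I would invoke \Cref{fact:opt_ip_bound} with $w = w' = \baru$, using the hypothesis $\cR_{0-1}(\baru) = \opt$ and $\|\baru\| = 1$. In the bounded case with $\|x\| \le B$ almost surely, \Cref{fact:opt_ip_bound} immediately gives the bound $B \cdot \opt$. In the sub-exponential case, \Cref{fact:opt_ip_bound} yields $(1 + 2\alpha_1)\alpha_2 \cdot \opt \cdot \ln(1/\opt)$, as desired. (For the sub-exponential case I would also note that the standing assumption $\opt \le 1/e$ is needed to apply \Cref{fact:opt_ip_bound}, which is consistent with the overall setting in \Cref{sec:hinge}.)

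There is essentially no obstacle here: the only nontrivial step is recognizing the identity $\ell_h(y\langle\baru,x\rangle) = \1_{y \ne \sign(\langle\baru,x\rangle)}|\langle\baru,x\rangle|$, after which both bounds are direct corollaries of \Cref{fact:opt_ip_bound}. The whole argument should occupy just a few lines.
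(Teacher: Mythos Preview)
Your proposal is correct and essentially identical to the paper's own proof: the paper also rewrites $\cR_h(\baru)=\bbE\sbr{\1_{y\ne\sign(\langle\baru,x\rangle)}\envert{\langle\baru,x\rangle}}$ and then directly invokes \Cref{fact:opt_ip_bound} with $w=w'=\baru$ for both the bounded and sub-exponential cases.
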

\begin{proof}
  Note that
  \begin{align*}
    \cR_h(\baru)=\bbE_{(x,y)\sim P}\sbr{\ell_h\del{y \langle\baru,x\rangle}}=\bbE_{(x,y)\sim P}\sbr{\1_{\sign\del{\langle\baru,x\rangle\ne y}}\envert{\langle\baru,x\rangle}}.
  \end{align*}
  It then follows from \Cref{fact:opt_ip_bound} that if $\|x\|\le B$ almost
  surely, then
  \begin{align*}
    \cR_h(\baru)\le B\cdot\opt,
  \end{align*}
  while if $P_x$ is $(\alpha_1,\alpha_2)$-sub-exponential, then
  \begin{align*}
    \cR_h(\baru)\le(1+2\alpha_1)\alpha_2\cdot\opt\cdot\ln\del{\frac{1}{\opt}}.
  \end{align*}
\end{proof}

Next we prove the following result, which covers \Cref{fact:hinge_ub_main} but also
handles sub-exponential distributions.
\begin{lemma}[\bf \Cref{fact:hinge_ub_main}, including the sub-exponential case]
\label{fact:hinge_ub}
  Suppose \Cref{cond:well} holds.
  Consider an arbitrary $w\in\cD$, and let $\varphi$ denote $\varphi(w,\baru)$.
  If $\|x\|\le B$ almost surely, then
  \begin{align*}
    \cR_h(\barr\baru)\le\cR_h(\|w\|\baru)+O\del[1]{(\opt+\epsilon)^2}
  \end{align*}
  and
  \begin{align*}
    \cR_h(w)-\cR_h(\|w\|\baru)\ge \frac{4R^3}{3U\pi^2}\|w\|\varphi^2-B\|w\|\varphi\cdot\opt.
  \end{align*}

  If $P_x$ is $(\alpha_1,\alpha_2)$-sub-exponential, then
  \begin{align*}
    \cR_h(\barr\baru)\le\cR_h(\|w\|\baru)+O\del[1]{\del{\opt\cdot\ln(1/\opt)+\epsilon}^2}
  \end{align*}
  and
  \begin{align*}
     & \cR_h(w)-\cR_h(\|w\|\baru)\ge\frac{4R^3}{3U\pi^2}\|w\|\varphi^2 \\
     & \quad\quad\ -(1+2\alpha_1)\alpha_2\|w\|\varphi\cdot\opt\cdot\ln(1/\opt).
  \end{align*}
\end{lemma}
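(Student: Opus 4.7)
The plan is to prove the two inequalities separately; both reduce to routine assembly of earlier results once one exploits positive homogeneity of the hinge loss.

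First I would tackle the lower bound on $\cR_h(w) - \cR_h(\|w\|\baru)$ by applying the same three-term decomposition used to prove \Cref{fact:log_ub}, with $\hw := w$ and $\barw := \|w\|\baru$ (so $\|\hw\| = \|\barw\| = \|w\|$ and $\varphi(\hw,\barw) = \varphi$). The third term \eqref{eq:log_diff_approx_equiv_2} vanishes for $\ell = \ell_h$ by \Cref{fact:rotate_diff_main}, because the hinge loss is zero on $[0,\infty)$; the second term \eqref{eq:log_diff_approx_equiv_1} is bounded below by $\tfrac{4R^3}{3U\pi^2}\|w\|\varphi^2$ by \Cref{fact:ground_truth_diff}; and the first term \eqref{eq:log_diff_y_baru} is controlled in absolute value by $B\bigl\|\,\|w\|\baru - w\bigr\|\cdot\opt$ (bounded case) or $(1+2\alpha_1)\alpha_2\bigl\|\,\|w\|\baru - w\bigr\|\cdot\opt\cdot\ln(1/\opt)$ (sub-exponential case) via \Cref{fact:noisy_label}. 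The elementary identity $\bigl\|\,\|w\|\baru - w\bigr\| = 2\|w\|\sin(\varphi/2) \le \|w\|\varphi$ then turns these into the claimed expressions.

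For the upper bound $\cR_h(\barr\baru) \le \cR_h(\|w\|\baru) + O((\opt+\epsilon)^2)$, the key observation is that $\ell_h$ is positively homogeneous, so $\cR_h(c\baru) = c\,\cR_h(\baru)$ for every $c \ge 0$, and therefore
\begin{align*}
  \cR_h(\barr\baru) - \cR_h(\|w\|\baru) = (\barr - \|w\|)\,\cR_h(\baru).
\end{align*}
From $w \in \cD$ and $v$ being a unit vector, $\|w\| \ge \langle v,w\rangle \ge 1$, and $\barr = 1/\cos\varphi(v,\baru) \ge 1$, so $\barr - \|w\| \le \barr - 1$. The Phase 1 guarantee (\Cref{fact:log_ub_opt_well}) gives $\varphi(v,\baru) = O(\sqrt{\opt+\epsilon})$, and the second-order Taylor expansion $1/\cos(t) = 1 + O(t^2)$ yields $\barr - 1 = O(\opt+\epsilon)$; combined with $\cR_h(\baru) \le B\cdot\opt$ from \Cref{fact:Rh_bound}, the right-hand side is $O(\opt(\opt+\epsilon)) = O((\opt+\epsilon)^2)$. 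The sub-exponential case is identical with $\opt$ replaced by $\opt\cdot\ln(1/\opt)$ throughout, using the corresponding bounds in \Cref{fact:log_ub_opt_well} and \Cref{fact:Rh_bound}.

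There is essentially no hard obstacle: the lemma is a bookkeeping exercise on top of \Cref{fact:noisy_label}, \Cref{fact:ground_truth_diff}, \Cref{fact:rotate_diff_main}, \Cref{fact:Rh_bound}, and \Cref{fact:log_ub_opt_well}. The only two items requiring care are (i) extracting $\|w\| \ge 1$ from $w \in \cD$ so that $\barr - \|w\| \le \barr - 1$, and (ii) the Taylor estimate that converts the Phase 1 angle bound into an $O(\opt+\epsilon)$ bound on $\barr - 1$; both are immediate.
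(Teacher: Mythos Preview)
Your proposal is correct and follows essentially the same route as the paper's proof: the lower bound is obtained by combining \Cref{fact:noisy_label}, \Cref{fact:ground_truth_diff}, and \Cref{fact:rotate_diff_main} with the chord estimate $\|\,\|w\|\baru - w\,\| \le \|w\|\varphi$, and the upper bound uses positive homogeneity of $\ell_h$, the fact that $\|w\|\ge 1$ on $\cD$, the Phase~1 angle bound to control $\barr - 1$, and \Cref{fact:Rh_bound}. The only cosmetic difference is that the paper treats the cases $\barr \le \|w\|$ and $\barr \ge \|w\|$ separately, whereas you write the single identity $\cR_h(\barr\baru) - \cR_h(\|w\|\baru) = (\barr - \|w\|)\,\cR_h(\baru)$ and bound it directly; your version is a harmless streamlining of the same argument.
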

\begin{proof}
  First assume $\|x\|\le B$ almost surely.
  Note that $\ell_h$ is positive homogeneous, and thus for any positive constant
  $c$, we have $\cR_h(cw)=c\cR_h(w)$.
  Therefore, if $\barr\le\|w\|$, then
  \begin{align*}
    \cR_h(\barr\baru)=\frac{\barr}{\|w\|}\cR_h(\|w\|\baru)\le\cR_h(\|w\|\baru).
  \end{align*}
  If $\barr\ge\|w\|$, then
  \begin{align*}
    \cR_h(\barr\baru)=\cR_h\del{\|w\|\baru}+\cR_h(\baru)\del{\barr-\|w\|}\le\cR_h\del{\|w\|\baru}+\cR_h(\baru)\del{\barr-1},
  \end{align*}
  since $\|w\|\ge1$ for all $w\in\cD$.
  Recall that
  \begin{align*}
    \barr:=\frac{1}{\langle v,\baru\rangle}=\frac{1}{\cos\del{\varphi(v,\baru)}}\le \frac{1}{1-\varphi(v,\baru)^2/2},
  \end{align*}
  and therefore the first-phase of algorithm ensures $\barr=1+O(\opt+\epsilon)$ for bounded distributions, and $\barr=1+O\del{\opt\cdot\ln(1/\opt)+\epsilon}$ for sub-exponential distributions.
  It then follows that for bounded distributions,
  \begin{align*}
    \cR_h(\barr\baru) & \le\cR_h\del{\|w_t\|\baru}+\cR_h(\baru)\cdot O(\opt+\epsilon) \\
     & \le\cR_h\del{\|w_t\|\baru}+B\cdot\opt\cdot O(\opt+\epsilon) \\
     & =\cR_h\del{\|w_t\|\baru}+O\del[1]{(\opt+\epsilon)^2},
  \end{align*}
  where we apply \Cref{fact:Rh_bound} at the end.
  It also follows directly from
  \Cref{fact:noisy_label,fact:ground_truth_diff,fact:rotate_diff} that
  \begin{align*}
    \cR_h(w)-\cR_h(\|w\|\baru) & \ge \frac{4R^3}{3U\pi^2}\|w\|\varphi^2-B\enVert{w-\|w\|\baru}\cdot\opt \\
     & \ge \frac{4R^3}{3U\pi^2}\|w\|\varphi^2-B\|w\|\varphi\cdot\opt.
  \end{align*}

  The proof for the sub-exponential case is similar.
\end{proof}

Next we prove \Cref{fact:hinge_sgd}.
We first consider the bounded case.
\begin{proof}[Proof of \Cref{fact:hinge_sgd}, bounded distribution]
  Here we assume $\|x\|\le B$ almost surely.
  We will show that under the conditions of \Cref{fact:hinge_sgd}, then
  \begin{align}\label{eq:sgd_target_bounded}
    \bbE\sbr{\min_{0\le t<T}\varphi_t}=O(\opt+\epsilon),\quad\textup{where}\quad\varphi_t:=\varphi(w_t,\baru).
  \end{align}
  Further invoking \Cref{fact:angle_01} finishes the proof.

  Recall that at step $t$, after taking the expectation with respect to
  $(x_t,y_t)$, we have
  \begin{align}
    \bbE\sbr{\|w_{t+1}-\barr\baru\|^2} & \le\|w_t-\barr\baru\|^2-2\eta\ip{\nR_h(w_t)}{w_t-\barr\baru}+\eta^2B^2\cM(w_t) \nonumber \\
     & \le\|w_t-\barr\baru\|^2-2\eta\del{\cR_h(w_t)-\cR_h(\barr\baru)}+\eta^2B^2\cM(w_t). \label{eq:sgd_tmp}
  \end{align}

  First, \Cref{fact:hinge_ub} implies
  \begin{align*}
    \cR_h(w_t)-\cR_h(\barr\baru) & \ge\cR_h(w_t)-\cR_h(\|w_t\|\baru)-O\del[1]{(\opt+\epsilon)^2} \\
     & \ge2C_1\|w_t\|\varphi_t^2-B\|w_t\|\varphi_t\cdot\opt-O\del[1]{(\opt+\epsilon)^2},
  \end{align*}
  where $C_1:=2R^3/(3U\pi^2)$.
  Note that if $\varphi_t\le B\cdot\opt/C_1$, then \cref{eq:sgd_target_bounded}
  holds; therefore in the following we assume
  \begin{align}\label{eq:sgd_varphi_lb}
    \varphi_t\ge \frac{B}{C_1}\cdot\opt,
  \end{align}
  which implies
  \begin{align}\label{eq:sgd_tmp2}
    \cR_h(w_t)-\cR_h(\barr\baru)\ge C_1\|w_t\|\varphi_t^2-O\del[1]{(\opt+\epsilon)^2}\ge C_1\varphi_t^2-O\del[1]{(\opt+\epsilon)^2},
  \end{align}
  since $\|w\|\ge1$ for all $w\in\cD$.

  On the other hand, \cref{eq:sgd_varphi_lb} and \Cref{fact:angle_01} imply
  \begin{align*}
    \cM(w_t)=\cR_{0-1}(w_t)\le\opt+2U\varphi_t\le\del{\frac{C_1}{B}+2U}\varphi_t.
  \end{align*}
  Let
  \begin{align*}
    C_2:=\frac{C_1}{\del{\frac{C_1}{B}+2U}B^2}.
  \end{align*}
  Note that if $\varphi_t\le\epsilon$, then \cref{eq:sgd_target_bounded} is
  true; otherwise we can assume $\epsilon\le\varphi_t$, and let
  $\eta=C_2\epsilon$, we have
  \begin{align}\label{eq:sgd_tmp3}
    \eta B^2\cM(w_t)\le C_2\epsilon B^2\del{\frac{C_1}{B}+2U}\varphi_t=C_1\epsilon\varphi_t\le C_1\varphi_t^2.
  \end{align}

  Now \cref{eq:sgd_tmp,eq:sgd_tmp2,eq:sgd_tmp3} imply
  \begin{align*}
    \bbE\sbr{\|w_{t+1}-\barr\baru\|^2} & \le\|w_t-\barr\baru\|^2-2\eta C_1\varphi_t^2+\eta C_1\varphi_t^2+\eta\cdot O\del[1]{(\opt+\epsilon)^2} \\
     & =\|w_t-\barr\baru\|^2-\eta C_1\varphi_t^2+\eta\cdot O\del[1]{(\opt+\epsilon)^2}.
  \end{align*}
  Taking the expectation and average, we have
  \begin{align*}
    \bbE\sbr{\frac{1}{T}\sum_{t<T}^{}\varphi_t^2}\le \frac{\|w_0-\barr\baru\|^2}{\eta C_1T}+\frac{O\del[1]{(\opt+\epsilon)^2}}{C_1}.
  \end{align*}
  Note that
  \begin{align*}
    \|w_0-\barr\baru\|=\tan(\varphi_0)=O\del[1]{\sqrt{\opt+\epsilon}},
  \end{align*}
  and also recall $\eta=C_2\epsilon$, we have
  \begin{align*}
    \bbE\sbr{\frac{1}{T}\sum_{t<T}^{}\varphi_t^2}\le \frac{O(\opt+\epsilon)}{C_1C_2\epsilon T}+\frac{O\del[1]{(\opt+\epsilon)^2}}{C_1}.
  \end{align*}
  Letting $T=\Omega(1/\epsilon^2)$, we have
  \begin{align*}
    \bbE\sbr{\frac{1}{T}\sum_{t<T}^{}\varphi_t^2}\le O\del{(\opt+\epsilon)\epsilon}+O\del[1]{(\opt+\epsilon)^2}=O\del[1]{(\opt+\epsilon)^2},
  \end{align*}
  and thus \cref{eq:sgd_target_bounded} holds.
\end{proof}

Next we consider sub-exponential distributions.
We first prove the following bound on the square of norm.
\begin{lemma}\label{fact:sq_norm_sub_exp}
  Suppose $P_x$ is $(\alpha_1,\alpha_2)$-sub-exponential.
  Given any threshold $\tau>0$, it holds that
  \begin{align*}
    \bbE\sbr{\|x\|^2\1_{\|x\|\ge\tau}}\le d\alpha_1\del{\tau^2+2\sqrt{d}\alpha_2\tau+2d\alpha_2^2}\exp\del{-\frac{\tau}{\sqrt{d}\alpha_2}}.
  \end{align*}
\end{lemma}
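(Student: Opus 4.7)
The plan is to use the tail decay of $\|x\|$ together with an integration-by-parts identity that expresses second-moment tail expectations in terms of tail probabilities. This is the standard approach for such sub-exponential tail computations, and the only nontrivial step is making sure the boundary terms behave properly at infinity.

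First, I would record the pointwise tail bound on $\|x\|$. Since $P_x$ is $(\alpha_1,\alpha_2)$-sub-exponential, applying the definition to each standard basis direction and union-bounding over the $d$ coordinates gives
\begin{align*}
  \pr\del{\|x\|\ge r}\le\sum_{j=1}^d\pr\del{|x_j|\ge r/\sqrt{d}}\le d\alpha_1\exp\del{-r/(\sqrt{d}\alpha_2)},
\end{align*}
exactly as already noted earlier in the paper.

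Next, I would apply integration by parts to convert the second-moment tail into an integral of the tail probability. Writing $\mu(r):=\pr(\|x\|\ge r)$, the layer-cake/integration-by-parts identity gives
\begin{align*}
  \bbE\sbr{\|x\|^2\1_{\|x\|\ge\tau}}=\tau^2\mu(\tau)+\int_\tau^\infty 2r\,\mu(r)\dif r,
\end{align*}
where the boundary term at infinity vanishes because $\mu(r)$ decays exponentially while $r^2$ grows only polynomially. Plugging in the tail bound from the previous step yields
\begin{align*}
  \bbE\sbr{\|x\|^2\1_{\|x\|\ge\tau}}\le d\alpha_1\tau^2 e^{-\tau/(\sqrt{d}\alpha_2)}+2d\alpha_1\int_\tau^\infty r\,e^{-r/(\sqrt{d}\alpha_2)}\dif r.
\end{align*}

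Finally, I would evaluate the remaining integral in closed form. Setting $\beta:=1/(\sqrt{d}\alpha_2)$, a standard computation (integration by parts on $\int r e^{-\beta r}\dif r$) gives
\begin{align*}
  \int_\tau^\infty r\,e^{-\beta r}\dif r=\del{\frac{\tau}{\beta}+\frac{1}{\beta^2}}e^{-\beta\tau}=\del{\sqrt{d}\alpha_2\tau+d\alpha_2^2}e^{-\tau/(\sqrt{d}\alpha_2)}.
\end{align*}
Combining terms produces the claimed bound $d\alpha_1\del{\tau^2+2\sqrt{d}\alpha_2\tau+2d\alpha_2^2}\exp\del{-\tau/(\sqrt{d}\alpha_2)}$. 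There is no real obstacle here: the argument is a direct mechanical tail computation, and the only thing to watch is the $\sqrt{d}$ factor introduced by the coordinate-wise union bound, which propagates cleanly through the integral and is why the polynomial prefactor features $\sqrt{d}\alpha_2\tau$ and $d\alpha_2^2$ rather than $\alpha_2\tau$ and $\alpha_2^2$.
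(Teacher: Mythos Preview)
Your proposal is correct and follows essentially the same argument as the paper: the same coordinate-wise union bound to get the tail of $\|x\|$, the same integration-by-parts identity $\bbE[\|x\|^2\1_{\|x\|\ge\tau}]=\tau^2\mu(\tau)+\int_\tau^\infty 2r\,\mu(r)\dif r$, and then plugging in the tail bound. The paper simply says ``calculation gives'' for the final step, whereas you have spelled out the closed-form evaluation of $\int_\tau^\infty r\,e^{-\beta r}\dif r$, which is a helpful addition.
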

\begin{proof}
  First recall that
  \begin{align*}
    \pr\del{\|x\|\ge\tau}\le \sum_{j=1}^{d}\pr\del{|x_j|\ge \frac{\tau}{\sqrt{d}}}\le d\alpha_1\exp\del{-\frac{\tau}{\sqrt{d}\alpha_2}}=:\delta(\tau).
  \end{align*}
  Let $\mu(\tau):=\pr\del{\|x\|\ge\tau}$.
  Integration by parts gives
  \begin{align*}
    \bbE\sbr{\|x\|^2\1_{\|x\|\ge\tau}}=\int_\tau^\infty r^2\cdot(-\dif\mu(r))=\tau^2\mu(\tau)+\int_\tau^\infty2r\mu(r)\dif r\le\tau^2\delta(\tau)+\int_\tau^\infty2r\delta(r)\dif r.
  \end{align*}
  Calculation gives
  \begin{align*}
    \bbE\sbr{\|x\|^2\1_{\|x\|\ge\tau}}\le d\alpha_1\del{\tau^2+2\sqrt{d}\alpha_2\tau+2d\alpha_2^2}\exp\del{-\frac{\tau}{\sqrt{d}\alpha_2}}.
  \end{align*}
\end{proof}

Now we are ready to prove \Cref{fact:hinge_sgd} for sub-exponential
distributions.
\begin{proof}[Proof of \Cref{fact:hinge_sgd}, sub-exponential distributions]
  At step $t$, we have
  \begin{align}
    \|w_{t+1}-\barr\baru\|^2 & \le\|w_t-\barr\baru\|^2-2\eta\ip{\ell'_h\del{y_t \langle w_t,x_t\rangle}y_tx_t}{w_t-\barr\baru}+\eta^2\ell'_h\del{y_t \langle w_t,x_t\rangle}^2\|x_t\|^2 \nonumber \\
     & =\|w_t-\barr\baru\|^2-2\eta\ip{\ell'_h\del{y_t \langle w_t,x_t\rangle}y_tx_t}{w_t-\barr\baru}-\eta^2\ell'_h\del{y_t \langle w_t,x_t\rangle}\|x_t\|^2, \label{eq:sgd_sub_exp_tmp1}
  \end{align}
  where we use $(\ell'_h)^2=-\ell'_h$.
  Next we bound
  $\bbE_{(x_t,y_t)}\sbr[1]{-\ell'_h\del{y_t \langle w_t,x_t\rangle}\|x_t\|^2}$.
  Let $\tau:=\sqrt{d}\alpha_2\ln(d/\epsilon)$.
  When $\|x_t\|\le\tau$, we have
  \begin{align*}
    \bbE\sbr{-\ell'_h\del{y_t \langle w_t,x_t\rangle}\|x_t\|^2\1_{\|x_t\|\le\tau}}\le\tau^2\cM(w_t)\le d\alpha_2^2\cM(w_t)\cdot\ln(d/\epsilon)^2.
  \end{align*}
  On the other hand, when $\|x_t\|\ge\tau$, \Cref{fact:sq_norm_sub_exp} implies
  \begin{align*}
    \bbE\sbr{-\ell'_h\del{y_t \langle w_t,x_t\rangle}\|x_t\|^2\1_{\|x_t\|\ge\tau}}\le\bbE\sbr{\|x_t\|^2\1_{\|x_t\|\ge\tau}}\le d\alpha_1\cdot O\del{d\ln(d/\epsilon)^2}\cdot \frac{\epsilon}{d}=O\del{d\epsilon\ln(d/\epsilon)^2},
  \end{align*}
  where we also use $\ln(1/\epsilon)>1$, since $\epsilon<1/e$.
  To sum up,
  \begin{align*}
    \bbE_{(x_t,y_t)}\sbr[1]{-\ell'_h\del{y_t \langle w_t,x_t\rangle}\|x_t\|^2}\le Cd\del{\cM(w_t)+\epsilon}\cdot\ln(d/\epsilon)^2
  \end{align*}
  for some constant $C$.

  Now taking the expectation with respect to $(x_t,y_t)$ on both sides of
  \cref{eq:sgd_sub_exp_tmp1}, we have
  \begin{align}\label{eq:sgd_sub_exp_tmp}
    \bbE\sbr{\|w_{t+1}-\barr\baru\|^2}\le\|w_t-\barr\baru\|^2-2\eta\del{\cR_h(w_t)-\cR_h(\barr\baru)}+\eta^2Cd\del{\cM(w_t)+\epsilon}\cdot\ln(d/\epsilon)^2.
  \end{align}
  Similarly to the bounded case, we will show that
  \begin{align}\label{eq:sgd_target_sub_exp}
    \bbE\sbr{\min_{0\le t<T}\varphi_t}=O\del{\opt\cdot\ln(1/\opt)+\epsilon},\quad\textup{where}\quad\varphi_t:=\varphi(w_t,\baru).
  \end{align}

  First, \Cref{fact:hinge_ub} implies
  \begin{align*}
    \cR_h(w_t)-\cR_h(\barr\baru) & \ge\cR_h(w_t)-\cR_h(\|w_t\|\baru)-O\del{\del{\opt\cdot\ln(1/\opt)+\epsilon}^2} \\
     & \ge2C_1\|w_t\|\varphi_t^2-C_2\|w_t\|\varphi_t\cdot\opt\cdot\ln(1/\opt)-O\del{\del{\opt\cdot\ln(1/\opt)+\epsilon}^2},
  \end{align*}
  where $C_1:=2R^3/(3U\pi^2)$ and $C_2=(1+2\alpha_1)\alpha_2$.
  Note that if $\varphi_t\le C_2\cdot\opt\cdot\ln(1/\opt)/C_1$, then
  \cref{eq:sgd_target_sub_exp} holds; therefore in the following we assume
  \begin{align}\label{eq:sgd_varphi_lb_sub_exp}
    \varphi_t\ge \frac{C_2}{C_1}\cdot\opt\cdot\ln(1/\opt),
  \end{align}
  which implies
  \begin{align}
    \cR_h(w_t)-\cR_h(\barr\baru) & \ge C_1\|w_t\|\varphi_t^2-O\del{\del{\opt\cdot(1/\opt)+\epsilon}^2} \nonumber \\
     & \ge C_1\varphi_t^2-O\del{\del{\opt\cdot\ln(1/\opt)+\epsilon}^2}, \label{eq:sgd_sub_exp_tmp2}
  \end{align}
  since $\|w\|\ge1$ for all $w\in\cD$.

  On the other hand, for $\opt\le1/e$, \cref{eq:sgd_varphi_lb_sub_exp} and
  \Cref{fact:angle_01}
  imply
  \begin{align*}
    \cM(w_t)=\cR_{0-1}(w_t)\le\opt+2U\varphi_t\le\del{\frac{C_1}{C_2}+2U}\varphi_t.
  \end{align*}
  Let
  \begin{align*}
    C_2:=\frac{C_1}{\del{\frac{C_1}{C_2}+2U}C}.
  \end{align*}
  Note that if $\varphi_t\le\epsilon$, then \cref{eq:sgd_target_sub_exp} is
  true; otherwise we can assume $\epsilon\le\varphi_t$, and let
  $\eta=\frac{C_2\epsilon}{d\ln(d/\epsilon)^2}$, we have
  \begin{align}
    \eta Cd\del{\cM(w_t)+\epsilon}\ln(d/\epsilon)^2 & =\frac{C_2\epsilon}{d\ln(d/\epsilon)^2}Cd\cM(w_t)\cdot\ln(d/\epsilon)^2+\frac{C_2\epsilon}{d\ln(d/\epsilon)^2}Cd\epsilon\cdot\ln(d/\epsilon)^2 \nonumber \\
     & \le C_2\epsilon C\del{\frac{C_1}{C_2}+2U}\varphi_t+C_2C\epsilon^2 \nonumber \\
     & =C_1\epsilon\varphi_t+O\del{\del{\opt\cdot\ln(1/\opt)+\epsilon}^2} \nonumber \\
     & \le C_1\varphi_t^2+O\del{\del{\opt\cdot\ln(1/\opt)+\epsilon}^2}. \label{eq:sgd_sub_exp_tmp3}
  \end{align}

  Now \cref{eq:sgd_sub_exp_tmp,eq:sgd_sub_exp_tmp2,eq:sgd_sub_exp_tmp3} imply
  \begin{align*}
    \bbE\sbr{\|w_{t+1}-\barr\baru\|^2} & \le\|w_t-\barr\baru\|^2-2\eta C_1\varphi_t^2+\eta C_1\varphi_t^2+\eta\cdot O\del{\del{\opt\cdot\ln(1/\opt)+\epsilon}^2} \\
     & =\|w_t-\barr\baru\|^2-\eta C_1\varphi_t^2+\eta\cdot O\del{\del{\opt\cdot\ln(1/\opt)+\epsilon}^2}.
  \end{align*}
  Taking the expectation and average, we have
  \begin{align*}
    \bbE\sbr{\frac{1}{T}\sum_{t<T}^{}\varphi_t^2}\le \frac{\|w_0-\barr\baru\|^2}{\eta C_1T}+\frac{O\del{\del{\opt\cdot\ln(1/\opt)+\epsilon}^2}}{C_1}.
  \end{align*}
  Note that
  \begin{align*}
    \|w_0-\barr\baru\|=\tan(\varphi_0)=O\del[1]{\sqrt{\opt\cdot\ln(1/\opt)+\epsilon}},
  \end{align*}
  and also recall $\eta=\frac{C_2\epsilon}{d\ln(d/\epsilon)^2}$, we have
  \begin{align*}
    \bbE\sbr{\frac{1}{T}\sum_{t<T}^{}\varphi_t^2}\le \frac{O\del{\opt\cdot\ln(1/\opt)+\epsilon}d\ln(d/\epsilon)^2}{C_1C_2\epsilon T}+\frac{O\del{\del{\opt\cdot\ln(1/\opt)+\epsilon}^2}}{C_1}.
  \end{align*}
  Letting $T=\Omega\del{\frac{d\ln(d/\epsilon)^2}{\epsilon^2}}$, we have
  \begin{align*}
    \bbE\sbr{\frac{1}{T}\sum_{t<T}^{}\varphi_t^2} & \le O\del{\opt\cdot\ln(1/\opt)+\epsilon}\cdot\epsilon+O\del{\del{\opt\cdot\ln(1/\opt)+\epsilon}^2} \\
     & =O\del{\del{\opt\cdot\ln(1/\opt)+\epsilon}^2},
  \end{align*}
  and thus \cref{eq:sgd_target_sub_exp} holds.
\end{proof}

\end{document}